\documentclass[lettersize,journal]{IEEEtran}
\usepackage{amsmath,amsfonts}
\usepackage{array}
\usepackage{caption}        %
\usepackage{subcaption}     %
\usepackage{textcomp}
\usepackage{stfloats}
\usepackage{url}
\usepackage{verbatim}
\usepackage{graphicx}
\usepackage{booktabs}
\usepackage{multirow}
\usepackage{multicol}
\usepackage{array}
\usepackage{caption}
\usepackage{makecell}
\usepackage{soul}
\usepackage{xcolor,colortbl}
\usepackage[shortlabels]{enumitem}
\usepackage{diagbox}
\def\BibTeX{{\rm B\kern-.05em{\sc i\kern-.025em b}\kern-.08em
    T\kern-.1667em\lower.7ex\hbox{E}\kern-.125emX}}
\usepackage{balance}
\usepackage{xcolor}
\usepackage{bm}
\usepackage{algorithm}
\usepackage{algpseudocode}
\usepackage{dsfont}
\usepackage{graphicx}
\usepackage{pifont}
\usepackage{bbding}
\usepackage[table]{xcolor}
\usepackage{float}
\usepackage{subcaption}
\usepackage{amsthm}
\theoremstyle{definition}

\newtheorem{theorem}{Theorem}

\definecolor{dark_green}{RGB} {0, 140, 0}

	\newif\ifcomments
	
	\commentstrue

	\ifcomments
\newcommand{\MB}[1]{\textcolor{red}{#1}}
\newcommand{\fjw}[1]{\textcolor{blue}{#1}}
\newcommand{\BT}[1]{\textcolor{orange}{#1}}
\newcommand{\BTcomm}[1]{\textcolor{dark_green}{#1}}

	\else
\newcommand{\MB}[1]{}
\newcommand{\fjw}[1]{}
\newcommand{\BT}[1]{}
\newcommand{\BTcomm}[1]{}

	\fi
	
\definecolor{ForestGreen}{RGB}{34,139,34}

\begin{document}

\title{Efficient, Robust, and Anti-Collusion Fingerprinting of Image Diffusion Models}

\author{Jianwei Fei,~\IEEEmembership{Member,~IEEE}, Yunshu Dai,~\IEEEmembership{Student Member,~IEEE} Zhihua Xia,~\IEEEmembership{Member,~IEEE}, \\
Xiaochun Cao,~\IEEEmembership{Senior Member,~IEEE}, Jiantao Zhou,~\IEEEmembership{Senior Member,~IEEE}, Alessandro Piva,~\IEEEmembership{Fellow,~IEEE}, \\
and Benedetta Tondi,~\IEEEmembership{Senior Member,~IEEE}
\thanks{
Jianwei Fei and Alessandro Piva are with the University of Florence, Florence, Italy (e-mail: fei\_jianwei@163.com; alessandro.piva@unifi.it);
Yunshu Dai and Xiaochun Cao are with the Shenzhen Campus of Sun Yat-sen University, Shenzhen, China (e-mail: daiyunshu0102@163.com; caoxiaochun@mail.sysu.edu.cn);
Zhihua Xia is with the College of Cyber Security, Jinan University, Guangzhou, China (e-mail: xia\_zhihua@163.com);
Jiantao Zhou is with the State Key Laboratory of Internet of Things for Smart City, and also with the Department of Computer and Information Science, Faculty of Science and Technology, University of Macau (e-mail:  jtzhou@um.edu.mo);
Benedetta Tondi is with the University of Siena, Siena, Italy (e-mail: benedetta.tondi@unisi.it).

This work was supported in part by Macau Science and Technology Development Fund under 001/2024/SKL, 0119/2024/RIB2 and 0110/2025/R1B2; in part by Research Committee at University of Macau under MYRG-CRG2025-00031-FST and MYRG-GRG2025-00086-FST; in part by the Guangdong Basic and Applied Basic Research Foundation under Grant 2024A1515012536. This work was also supported in part by the National Natural Science Foundation of China under Grant 625B2187, U23B2023, and 62472199, in part by Guangdong S\&T Program under 2026B0101100003, by Guangdong Key Laboratory of Data Security and Privacy Preserving under 2023B1212060036, Guangdong Hong Kong Joint Laboratory for Data Security and Privacy Protection under 2023B1212120007, by the basic and Applied Basic Research Foundation of Guangdong Province under 2025A1515011097, and by the Outstanding Youth Project of Guangdong Basic and Applied Basic Research Foundation under 2023B1515020064.

Corresponding author: Jiantao Zhou
}
}


\maketitle                                                                                                                                                                                                                                       

\begin{abstract}
Model fingerprinting, embedding user-specific identifiers (fingerprints) into generated outputs, has recently emerged as a popular solution to protect the intellectual property rights (IPR) of generative text-to-image (T2I) models and prevent unauthorized redistribution. In this work, we reveal a previously unexplored systematic vulnerability in existing generative model fingerprinting methods: they lack robustness against collusion attacks, where multiple attackers combine their models to remove or obscure the fingerprints. To address this issue, we take the first step towards a robust fingerprinting method for T2I models with anti-collusion capabilities. The proposed method encodes strings of bits, namely fingerprints, into the coefficients of a personalized normalization module (PNM) incorporated into T2I models, so that fingerprints can be reliably recovered from any generated image. To defend against collusion attacks and prevent unauthorized model redistribution, we introduce an anti-collusion mechanism based on lossless function-invariant parameter transformations. This mechanism significantly degrades the image generation quality of colluded models, making them effectively unusable. Moreover, our method allows developers to efficiently create multiple copies of fingerprinted T2I models by reparameterizing the PNM without the need for retraining. We also introduce a worst-case optimization strategy to improve robustness against model-level attacks. Our experiments demonstrate that the proposed method achieves high fidelity and robustness across multiple T2I image generation and editing tasks, with fingerprint extraction accuracy exceeding 99.5\%. Compared with existing methods, our method demonstrates, for the first time, a notable proactive robustness to collusion attack by significantly increasing the FID of colluded models.
\end{abstract}


\begin{figure}[h]
\centering
    \includegraphics[width=0.48\textwidth]{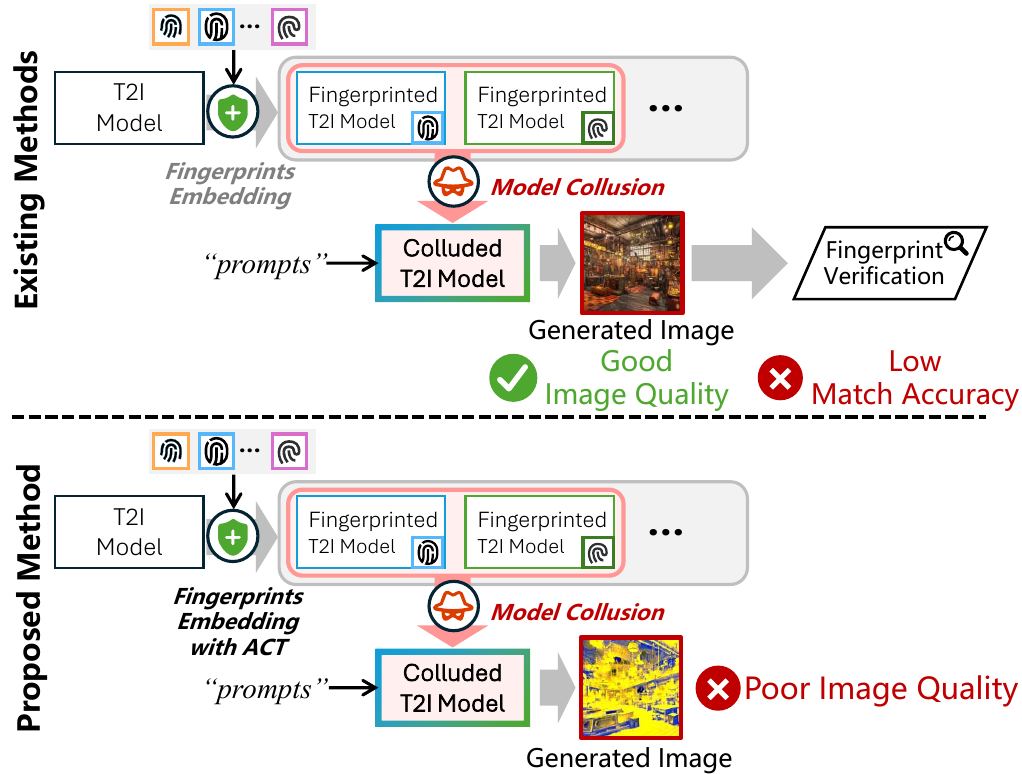}
    \caption{
    T2I model fingerprinting under collusion attacks. While existing methods allow colluders to obtain functional models that retain good image quality but exhibit low fingerprint matching accuracy, our proposed method effectively disables the collusion. By integrating the Anti-Collusion Transformation (ACT), any attempt to collude results in bad image quality, rendering the colluded model unusable.
    }
\label{fig: compare_0}
\end{figure}

\section{Introduction}
\label{sec: Introduction}

Recent progress in deep image generative models, particularly latent-diffusion-based text-to-image (T2I) models~\cite{rombach2022high}, has achieved remarkable breakthroughs in realism. These models enable high-quality text-guided image generation and editing, thus driving widespread application and leading to a continuous evolution of commercial tools~\cite{nichol2021glide,saharia2022photorealistic,rombach2022high}. As these models are increasingly integrated into commercial products and services, the substantial resources required for their development render them intellectual property (IP) with valuable economic worth.  Consequently, there is a strong demand for IP rights (IPR) protection against infringements such as unauthorized model redistribution.

To this end, recent studies have proposed generative model watermarking~\cite{yu2021artificial,fei2022supervised,fernandez2023stable,fei2024wide} and fingerprinting~\cite{fei2023robust,kim2024wouaf} to support IPR protection and attribution. Watermarking methods primarily fine-tune the model with additional optimization objectives to ensure that every image generated by the watermarked model contains a prescribed watermark for IPR protection. 
In contrast, fingerprinting methods embed user-specific identifiers, i.e., fingerprints, in different model copies to ensure user traceability. Fingerprinting methods can efficiently produce individualized fingerprinted models for each user~\cite{yuresponsible,fei2023robust,kim2024wouaf} without requiring retraining. In modern commercial settings, models are often developed and distributed as licensed copies to multiple users. Therefore, fingerprinting methods are particularly suitable for IPR protection and responsibility attribution due to their flexibility.

However, distributing distinct fingerprinted model copies to users introduces the collusion vulnerability. Due to the need for data privacy and local customization, users may have access to these distributed model weights. This empowers two or more malicious users to easily combine their models to produce a colluded copy that either contains no registered fingerprint~\cite{boneh1998collusion} or is erroneously attributed to non-participating parties. A simple yet highly realistic and effective collusion strategy is to average the parameters of the involved models~\cite{wortsman2022model}. Malicious users can exploit this parameter-level collusion to completely wipe out fingerprints with zero computational cost, all while preserving the model's generation capabilities.
As noticed in recent works~\cite{fernandez2023stable} and verified in Sec.~\ref{Sec: Anti-collusion Evaluations}, the collusion attack yields a colluded model that can generate high-quality images, while the extracted fingerprint shows low matching accuracy with any individual colluder. To the best of our knowledge, this threat has not been addressed so far in existing generative model fingerprinting research. 
%

To address this limitation, we propose, for the first time, a T2I model fingerprinting framework explicitly designed to resist collusion attacks. Our method ensures that, even when fingerprinted models are colluded, the resulting model fails to generate high-quality images, effectively preventing unauthorized use. Specifically, we introduce a personalized normalization module (PNM) for model fingerprinting, inspired by Fei~\textit{et al.}~\cite{fei2023robust}, who proposed the use of personalized normalization layers for fingerprinting generative adversarial networks (GANs). The PNM is integrated into the decoder of the variational autoencoder (VAE) of a pre-trained T2I model. The parameters of the normalization layer of the PNM are generated by fingerprint encoders conditioned on the assigned fingerprints. The VAE decoder is then fine-tuned to ensure that the generated images encode the fingerprints while maintaining high image quality. Simultaneously, a fingerprint decoder is trained to extract the fingerprints from the images generated by the VAE. During deployment, each user is assigned a unique fingerprint, from which the PNM parameters are generated via the fingerprint encoders, to create a personalized model copy without retraining or fine-tuning.

To defend against collusion attacks, we design a lossless anti-collusion transformation (ACT) that is integrated into the proposed PNM. The ACT applies user-specific transformations to the parameters of the PNM layers, ensuring that the collusion of two or more models compromises the image generation capability of the colluded model. Specifically, three operations, namely, parameter permutation, scaling, and sign flip, are applied to modify the PNM parameters in a user-dependent manner, while preserving the original output. 
These transformations guarantee that when multiple models are colluded, the image generation capability of the colluded model is disrupted, as shown in Fig.~\ref{fig: compare_0}. 
To further enhance fingerprint robustness against model-level modifications such as pruning and fine-tuning, we introduce an optimization strategy that minimizes fingerprinting loss under the worst-case parameter perturbation. At each fine-tuning iteration, the model is optimized over parameter perturbations in a neighborhood that maximizes the fingerprinting loss. Experimental results show that our method significantly increases the Fréchet Inception Distance (FID). For example, under a 2-party collusion attack on the COCO dataset using Stable Diffusion v2, the FID rises from 23 to 79, substantially reducing the usability of colluded models. Moreover,  our method maintains good performance under model-level and image-level attacks, achieving a high fingerprint extraction accuracy even when the generated images suffer from severe quality degradation.

The contributions of our work can be outlined as follows.
\begin{itemize}[itemsep=2pt,topsep=0pt,parsep=0pt]
    \item We develop a fingerprinting method for T2I models that enables distributors to efficiently create model copies with distinct fingerprints without additional training. To enhance robustness against collusion attacks, we propose the ACT, ensuring that colluding multiple models disrupts the generation capability of the colluded model. 
    \item We propose a worst-case embedding strategy that increases the fingerprint robustness to model-level attacks, particularly against fine-tuning.
    \item We introduce a rigorous framework for fingerprint verification and use it to evaluate the performance, considering two different scenarios: i) the claim-based verification scenario, where a claim on the specific fingerprinted model/user has to be verified; and ii) the identification scenario, where ownership is established and then 
    the specific model/user is identified.
    \item We conduct extensive experiments on both T2I generation and text-based image editing tasks, covering diverse datasets, COCO and ImageNet for generation, MagicBrush and InstructPix2Pix for editing, with multiple diffusion-based models. The results demonstrate that our method consistently improves the state-of-the-art in terms of security against collusion attacks as well as robustness to both image-level and model-level attacks.
\end{itemize}

\section{Related Work}
\label{sec: Related Work}

\subsection{Generative Model Watermarking}
\label{sec: Generative Model Watermarking}
Watermarking has been adopted as a solution to protect the IPR of deep neural network (DNN)~\cite{li2021survey,fan2021deepipr,zhang2021deep,li2022fedipr}. Generative model watermarking is a branch of DNN watermarking that focuses on generative models, including GANs~\cite{yu2021artificial,fei2022supervised,fei2024wide} and latent diffusion models (LDMs)~\cite{fernandez2023stable}. 
Most methods were designed for a black-box and, in particular, a box-free watermarking scenario, which represents the most interesting category. In black-box watermarking, the watermark is embedded in the behavior of the model output in correspondence to some specific inputs, referred to as triggers~\cite{zhao2023recipe,yuan2024watermarking}. The watermark is then extracted by querying the model through its API in a black-box setting. In box-free watermarking, the model is modified so that every output contains an extractable watermark, enabling ownership verification without accessing the model. 
Note that, unlike generative models designed for image watermarking~\cite{baluja2019hiding}, the carrier of generative model watermarking methods is the model parameters, consequently, their objectives and methodologies differ.
Yu~\emph{et al.}~\cite{yu2021artificial} introduced the first multi-bit, box-free GAN watermarking method via dataset watermarking. This method embedded a watermark into the training dataset so that any GAN trained with the watermarked dataset produced images containing the watermark. Zhao~\emph{et al.}~\cite{zhao2023recipe} extended the dataset watermarking approach to diffusion models and also found that the diffusion model trained on a watermarked dataset generated images containing the watermark. Fei~\emph{et al.}~\cite{fei2022supervised} proposed the first supervised box-free GAN watermarking method by fine-tuning GANs with an additional loss function using a pre-trained watermarking decoder and a predefined watermark. Similarly, Lin~\emph{et al.}~\cite{lin2024cyclegan} proposed an improved bi-directional supervised embedding strategy for image translation models, which integrated a pre-trained watermarking decoder to supervise the watermark embedding for both the source domain and the translated domain. Zhang~\emph{et al.}~\cite{zhang2024robust} proposed a structure-consistent watermarking method for image processing networks, which aligned watermarks with image structures to improve robustness against augmentation attacks. Recent works have focused on diffusion models. In~\cite{fernandez2023stable}, the supervised embedding approach was extended to LDM~\cite{fernandez2023stable}, by fine-tuning the VAE component. In addition, some works also proposed to fine-tune the U-Net~\cite{min2024watermark,feng2024aqualora}. 
Beyond 2D images, pre-trained watermarking decoders were also used to watermark 3D generative models for IPR protection~\cite{jang2024waterf,song2025protecting,luo2025nerf}.

\subsection{Generative Model Fingerprinting}
\label{sec: Generative Model Fingerprinting}
Beyond watermarking, recent research has explored methods for model fingerprinting. The goal of fingerprinting is to insert unique identifiers (fingerprints) into copies of a model, thus creating different fingerprinted copies that can be distributed to different users. The user-specific fingerprint can then be used for user traceability and to track unauthorized re-distribution~\cite{luh2006digital}.
In recent work, fingerprints were used to modulate the parameters of specific/purposely added layers~\cite{yuresponsible,kim2024wouaf,fei2025omnimark} or architectures~\cite{fei2023robust,fei2025scalable}.
For efficient generation, the creation of new fingerprinted model copies typically does not require re-training or fine-tuning, but only that the parameters are adjusted based on the fingerprints. Yu~\emph{et al.}~\cite{yuresponsible} introduced a pioneering GAN fingerprinting method. The method embedded binary fingerprints into model parameters via weight modulation~\cite{karras2019style}. The GAN was fine-tuned to enable fingerprint extraction from generated images via a decoder. This method can efficiently produce new fingerprinted models by altering the fingerprint and adjusting the parameters via modulation. A similar method for diffusion models by modulating the weights of the VAE decoder has been proposed in~\cite{kim2024wouaf}. Fei~\emph{et al.}~\cite{fei2023robust} also developed a retraining-free GAN fingerprinting method that resorted to a personalized normalization layer. The parameters of the normalization layer were modulated by the fingerprints to create model copies. 
Feng~\emph{et al.}~\cite{feng2024aqualora} introduced a LoRA-based fingerprinting method for T2I models that also enabled flexible fingerprint updates.

\subsection{Collusion Attacks Challenges}
\label{sec: Challenges Against Collusion Attacks}
The collusion attack, where two or more colluders combine their fingerprinted copies to remove the fingerprint~\cite{feng2010optimal}, is a well-known threat in traditional media fingerprinting~\cite{zhao2005forensic}. This threat also exists in the context of model fingerprinting, and recent studies have demonstrated that model collusion can degrade fingerprint verification performance~\cite{fernandez2023stable,fei2024wide}, while retaining high-quality image generation.

Motivated by this, we aim to address the following unexplored problem: \textbf{Can collusion attacks be proactively prevented in the context of generative model fingerprinting?} Traditional image fingerprinting methods rely on traitor-tracing codes or group testing~\cite{wu2004collusion,vskoric2015tally} to identify colluders. 
However, applying these methods to generative model fingerprinting has two limitations:
(1) the limited bit capacity of existing model fingerprinting methods might be insufficient to guarantee the redundancy required for traitor-tracing encoding; and, more importantly, (2) traitor-tracing schemes are designed for post-hoc attribution, and can be used to identify colluders only {\em after} a colluded model has been created. They also require the application of a tracing algorithm.

In this work, we address for the first time the collusion attack threat in generative model fingerprinting. We propose a fundamentally different solution that leverages the new paradigm of generative model fingerprinting, where fingerprints are embedded in the functionals (i.e., models) rather than directly in the signals (i.e., generated images) as in traditional media fingerprinting~\cite{barni2021dnn}. Our method proactively mitigates collusion attacks by disabling model functionality, rendering any colluded model unusable.

\section{Preliminaries}
\label{sec: Preliminaries}

\subsection{Text-to-Image Diffusion Models}
\label{sec: Text to Image Diffusion Models}
Diffusion models are a class of generative models capable of synthesizing high-quality images by gradually denoising random noise~\cite{ho2020denoising,croitoru2023diffusion}. These models can accept various conditioning inputs, such as textual descriptions (prompts) and other constraints, enabling text-to-image and controlled image generation~\cite{zhang2023adding,mou2024t2i}. 
Early methods perform the diffusion process in the pixel space~\cite{songscore,ho2020denoising,ho2022cascaded,bie2024renaissance} with U-Net as the denoising backbone, which is computationally demanding.

To improve generation efficiency, Rombach~\textit{et al.}~\cite{rombach2022high} introduced the LDM, which performs denoising in the latent space of a pre-trained VAE using the U-Net backbone. 
During inference, the reverse diffusion process is initialized from pure noise in the latent space and progressively denoises it to obtain the latent representation of the image, denoted as $\bm{z}_0$, conditioned on the input prompt. The final image is subsequently generated by passing $\bm{z}_0$ through the VAE decoder, i.e., $\hat{\bm{x}} = \mathcal{D}(\bm{z}_0)$.
Due to the superior scalability and efficiency, LDM has become the mainstream framework for modern diffusion models~\cite{podellsdxl}, including state-of-the-art variants that replace the U-Net backbone with DiT~\cite{peebles2023scalable}. Therefore, our work focuses on latent diffusion models, which, without exception, rely on a VAE for final image synthesis.

\subsection{Fingerprint Verification}
\label{sec: Fingerprint Verification}
We consider a scenario where a model provider distributes distinct fingerprinted generative models (copies) to $M$ users. Each user $i \in \{1, \ldots, M\}$ is associated with a unique fingerprint $\bm{m}^{(i)}$.
The distributor uses the user-specific fingerprint $\bm{m}^{(i)}$ to instantiate a new model $\mathcal{M}^{(i)}$ that is released to user $i$. 
The goal is to verify whether a suspicious image $\bm{x}$ is generated by a claimed fingerprinted model, or by one of the $M$ models of the distributor, and identify the responsible model.\footnote{A similar approach can be used to verify the identity of a suspicious model, assuming that the model can be queried to generate at least one image.}
A binary fingerprint $\bm{m}$ is extracted from the image $\bm{x}$ via the fingerprint decoder $\mathcal{W}$, i.e., $\bm{m} = \mathcal{W}(\bm{x})$. 
We define the bit-wise matching accuracy (Bit Acc) between the extracted fingerprint and a given fingerprint $\bm{m}^{(i)}$ as:\footnote{To keep the notation light, we omit the explicit dependence on $\bm{x}$ in $p^{(i)}$ (and also $\mathcal{S}$, defined in Eq. \eqref{eq.S}).}
\begin{equation}
\label{eq: fingerprint verification}
p^{(i)} = \text{Acc}(\bm{m},\bm{m}^{(i)}) = \frac{1}{d} \sum_{j=1}^{d} \mathds{1}\left(\bm{m}_j = \bm{m}^{(i)}_j\right),
\end{equation}
where $d$ is the number of bits, $\mathds{1}(\cdot)$ is the indicator function that outputs 1 if the condition is true and 0 otherwise. A threshold $\tau\in [0,1]$ is used to determine whether a match is considered valid. We consider two fingerprint verification scenarios detailed in the following:

\textbf{1) Claim-based verification}: “{\em Was $\bm{x}$ generated by the claimed user/model?}”. 
In this scenario, a claim is made about the model that generated the image, i.e., model $\mathcal{M}^{(\text{claim})}$, or equivalently, the model associated with fingerprint $\bm{m}^{(claim)}$.
The binary hypothesis testing problem can be defined as:
\begin{itemize}
  \item $H_1$: The image was generated by the claimed model.
  \item $H_0$: The image was generated by a non-fingerprinted model or a different fingerprinted model.
\end{itemize}
We verify the claim by comparing the extracted fingerprint $\bm{m}$ with $\bm{m}^{(claim)}$.
If the match is above $\tau$, i.e., $\text{Acc}(\bm{m}, \bm{m}^{(\text{claim})}) > \tau$, the claim is accepted and the image is attributed to $\mathcal{M}^{(\text{claim})}$ (decision for $H_1$). To evaluate performance, we compute the true positive rate (TPR) and false positive rate (FPR) as: 
\begin{equation}
\label{eq: detection}
\begin{aligned}
  \text{TPR} &= 
  N(p^{(\text{claim})} > \tau \mid  H_1)/N(H_1), \\
  \text{FPR} &= N( p^{(\text{claim})} > \tau \mid H_0)/N(H_0),
\end{aligned}
\end{equation}
where $N(\cdot)$ denotes the number of samples $\bm{x}$ satisfying the the specified event.
In practice, $\tau$ is chosen to achieve a desired TPR while keeping FPR below a desired threshold.
$\tau$ is estimated empirically using a validation dataset containing both positive and negative samples.

\textbf{2) Identification (with rejection)}: 
“{\em Was $\bm{x}$ generated from one of the models from the distributor? If yes, which specific model generated it?}”
In this scenario, no claim is provided. The authority aims to determine {\em if any} of the fingerprinted models generated the image $\bm{x}$, and, in case of a positive answer, which is the specific model.
The problem of detecting whether the image comes from a model of the distributor can be modeled as a binary hypothesis test:
\begin{itemize}
  \item $H_1$: $\bm{x}$ was generated by one of the fingerprinted models.
  \item $H_0$: $\bm{x}$ comes from a non-fingerprinted model.
\end{itemize}
If $H_1$ is accepted, the image is attributed to the model with the highest Bit Acc. More formally, let us define the set of candidate models $\mathcal{M}^{(\text{i})}$, i.e., those models whose fingerprints match the extracted fingerprint $\bm{m}$ for a given threshold $\tau$:
\begin{equation}
\begin{aligned}
\mathcal{S} = \left\{\mathcal{M}^{(\text{i})}, i \in \{1, \ldots, M\} \mid p^{(i)} > \tau \right\}.
\end{aligned}
\label{eq.S}
\end{equation}
If $\mathcal{S} \neq \emptyset$, a decision for $\mathcal{H}_{1}$ is made. The image is attributed to the model in $\mathcal{S}$ with the highest matching accuracy:
\begin{equation}
\begin{aligned}
\mathcal{M}^{(i^*)} \text{, with }  i^* = \arg\max_{\mathcal{M}^{(i)} \in \mathcal{S}} p^{(i)}.
\end{aligned}
\end{equation}
The performance of identification with rejection can be evaluated using the following metrics: 

1) the TPR and FPR of the binary hypothesis test (i.e., the rejection test), where
\begin{equation}
\small
\label{eq: identification}
\begin{aligned}  
& \text{TPR}  = \frac{N\left(\mathcal{S} \neq \emptyset \mid H_1 \right)}{N(H_1)}, \quad
\text{FPR}  = \frac{N\left(\mathcal{S} \neq \emptyset \mid H_0 \right)}{N(H_0)}.
\end{aligned}
\end{equation}

2) the Correct Identification Rate (CIR), measuring the percentage of images from fingerprinted models ($H_1$) for which $H_1$ is accepted and which are attributed to the correct model. Formally:
\begin{equation}
\label{eq: identification2}
\begin{aligned}
& \text{CIR}  = N \left( (\mathcal{M}^{(t)} \in \mathcal{S})
\wedge
(p^{(t)} > p^{(i)},\ 
\forall i \neq t)
~\middle|~ H_1 \right) / N(H_1),
\end{aligned}
\end{equation}
where $\mathcal{M}^{(t)}$ denotes the ground-truth fingerprinted model.
In the following, we refer to these two scenarios as fingerprint verification and identification, respectively.

\begin{figure*}[ht]
\centering
    \includegraphics[width=0.98\textwidth]{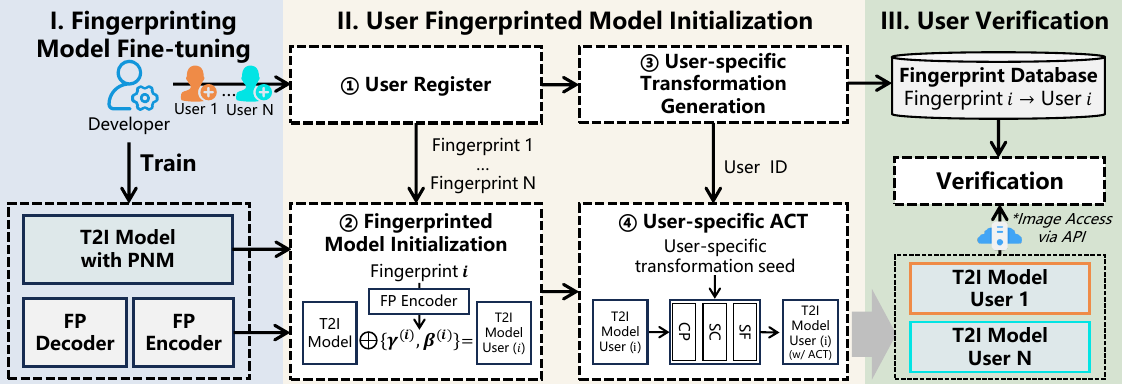}
    \caption{Application workflow of the proposed framework.}
\label{fig: overview2}
\end{figure*}


\section{Methodology}
\label{sec: Methodology}
\subsection{Framework Overview}
The workflow of the proposed framework, as shown in Fig.~\ref{fig: overview2}, is structured in 3 phases: Fingerprinting Model Fine-tuning, User Fingerprinted Model Initialization, and User Verification. 

\subsubsection{Phase I: Fingerprinting Model Fine-tuning}
The developer first integrates the Personalized Normalization Module (PNM) into the T2I model. The fingerprint encoder and decoder are jointly trained with the model to ensure accurate embedding and extraction. Detailed training objectives and loss functions are provided in \textbf{Sec.~\ref{sec: Method Overview}}.

\subsubsection{Phase II: User Fingerprinted Model Initialization}
\label{sec:Phase II}
Once the components are trained, the developer can efficiently instantiate uniquely fingerprinted model copies without retraining. As illustrated in the middle of Fig.~\ref{fig: overview2}, this process involves 4 sequential steps: 1) \textbf{User Registration}, where a user is assigned a unique ID and a binary fingerprint $\text{FP}_i$; 2) \textbf{Model Initialization}, where $\text{FP}_i$ is mapped to PNM parameters via the pre-trained encoders (details in \textbf{Sec.~\ref{sec: User Fingerprinted Model Initialization}}); 3) \textbf{Transformation Generation}, where user-specific ACT keys are deterministically derived from the User ID; and 4) \textbf{ACT Application}, where these transformations are applied to the PNM parameters to proactively prevent collusion. The final model is then released to the user.

\subsubsection{Phase III: User Verification}
When a suspicious image is encountered, the model owner uses the pre-trained \textit{Fingerprint Decoder} to extract the signature. This extracted fingerprint is then matched against the \textit{Fingerprint Database} to identify the specific user or detect potential collusion, following the verification protocols defined in \textbf{Sec.~\ref{sec: Fingerprint Verification}}.

\subsection{Fingerprint Model Fine-tuning}
\label{sec: Method Overview}

We illustrate the fine-tuning pipeline in the upper part of Fig.~\ref{fig: overview}.
To fingerprint the T2I model, the original VAE decoder $\mathcal{D}$ is modified by inserting an intermediate PNM, whose parameters $\bm{\gamma}$ and $\bm{\beta}$ are given by two (trainable) encoding networks, $\mathcal{F}_{\bm{\gamma}}$ and $\mathcal{F}_{\bm{\beta}}$, fed with the fingerprint message $\bm{m}$ as input. 
The modified decoder is fine-tuned (with the encoder $\mathcal{E}$ frozen) for fingerprint embedding, as detailed in Section \ref{sec: Fine-tuning of VAE decoder}.
The fingerprint decoder $\mathcal{W}$ is jointly trained to extract the fingerprints from the images generated by the VAE decoder. We denote the fingerprinted VAE decoder by $\mathcal{D}_m$, which replaces the original one $\mathcal{D}$ inside the T2I model, thus obtaining the fingerprinted T2I model that can be used for image generation. Note that the denoising U-Net and text encoder are not involved in the fine-tuning process.


\begin{figure}[ht]
\centering
    \includegraphics[width=0.48\textwidth]{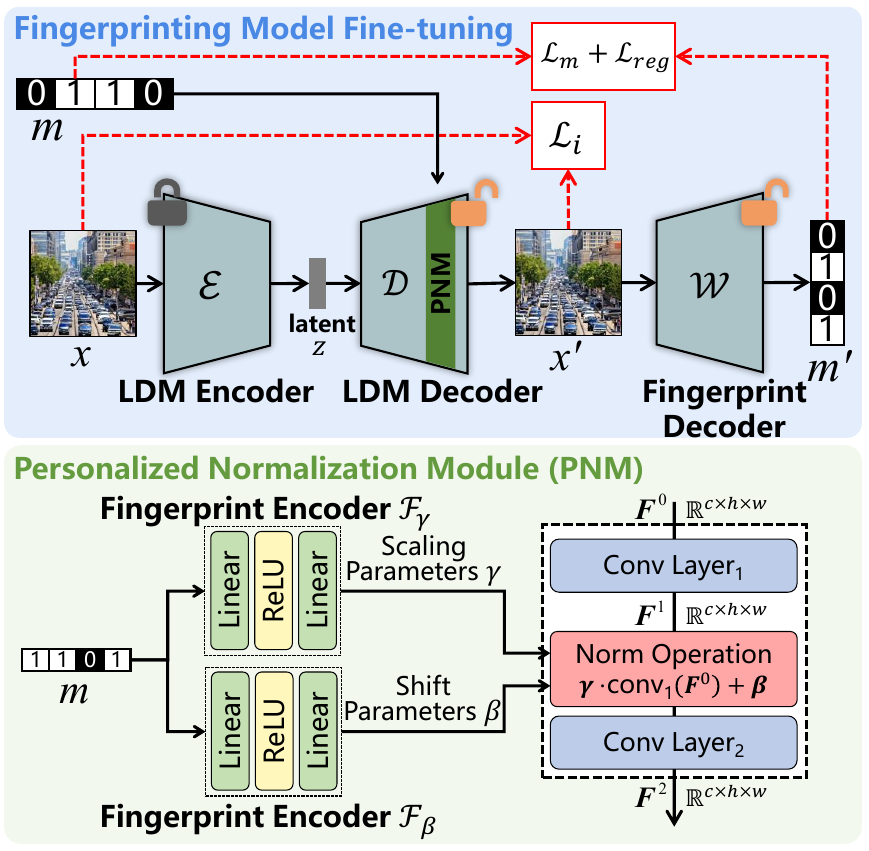}
    \caption{Pipeline of the proposed fine-tuning method.
    }
\label{fig: overview}
\end{figure}

\subsubsection{Personalized Normalization Module}
\label{sec: Personalized Normalization Module on VAE}
The details of the personalized normalization module (PNM) are shown in the lower part of Fig.~\ref{fig: overview}. The idea behind the use of personalized normalization is inspired by Fei~\emph{et al.}~\cite{fei2023robust}, where it was first introduced for GAN fingerprinting. 
In this work, we extend the idea to T2I models by designing a \textit{Conv–Norm–Conv} structured PNM, which enables the integration of the proposed ACT against collusion. The detailed operations performed inside our PNM are detailed below, while the details on the ACT are provided in Section \ref{sec: User Fingerprinted Model Initialization}.

Given a fingerprint message $\bm{m} \in \{0,1\}^{d}$, the outputs of the two encoding networks $\mathcal{F}_{\bm{\gamma}}$ and $\mathcal{F}_{\bm{\beta}}$ are used for the normalization operation, providing, respectively, the scaling and shift parameters.
In addition, two $1\times1$ convolution layers (without bias parameters) are placed before and after the normalization layer. 
Let $\bm{F}^{(0)} \in \mathbb{R}^{c\times h \times w}$ be the input feature of PNM, where $c$, $h$, and $w$ represent the number of channels, height, and width, respectively. The output feature map $\bm{F}^{(2)}$ of the PNM is computed as:
\begin{align}
    \label{eq: pn module}
    \bm{F}^{(2)}  =  \text{Conv}_2( \bm{F}^{(1)} ) = \text{Conv}_2(  {\bm{\gamma}} \cdot \text{Conv}_1(\bm{F}^{(0)})+{\bm{\beta}}),
\end{align}
where $\text{Conv}_2$ and $\text{Conv}_1$ denote the convolution operation, and $\cdot$ denotes element-wise multiplication. $\bm{\gamma} = \mathcal{F}_{\bm{\gamma}}(\bm{m})$ and $\bm{\beta} = \mathcal{F}_{\bm{\beta}}(\bm{m})$ are vectors in $\mathbb{R}^c$.
To enable element-wise operations with the intermediate feature map $\text{Conv}_1(\bm{F}^{(0)}) \in \mathbb{R}^{c \times h \times w}$, both $\bm{\gamma}$ and $\bm{\beta}$ are broadcast to match its shape. 
Specifically, broadcasting extends each $c$-dimensional vector into a 3D tensor of shape $\mathbb{R}^{c \times h \times w}$ by replicating the values of each channel across all spatial positions (i.e., along the height and width dimensions). 
The normalized intermediate feature map $\bm{F}^{(1)}$ is then passed through the second convolutional layer $\text{Conv}_2$ to produce the final output $\bm{F}^{(2)}$.

\subsubsection{Fine-tuning Objective}
\label{sec: Fine-tuning of VAE decoder}

In the following, we describe the fine-tuning loss objective in detail.
To instruct the network to embed a different fingerprint inside the output images for every different input message, we randomly sample $\bm{m}\in \{0,1\}^{d}$ at every step. We use the networks $\mathcal{F}_{\bm{\gamma}}$ and $\mathcal{F}_{\bm{\beta}}$ to get from $\bm{m}$ the scaling and shift parameters $\bm{\gamma}$ and $\bm{\beta}$ used in the PNM. Simultaneously, we randomly sample real images and encode them to get the latent representation $\bm{z}$ using the frozen VAE encoder $\mathcal{E}$, and use $\mathcal{D}_m$ to reconstruct the image from $\bm{z}$. The fingerprint decoder $\mathcal{W}$ is jointly trained to extract the fingerprint from $\mathcal{D}_m(\bm{z})$. The optimization includes an image reconstruction loss, a fingerprinting loss and a regularization loss for improved robustness.

\textbf{Image Reconstruction Loss}: The purpose of this loss is to maintain the visual quality of the images obtained from the fingerprinted $\mathcal{D}_m$. We used the original image loss employed during the training of the VAE~\cite{stabilityai2024sdvae}, including learned perceptual image patch similarity (LPIPS)~\cite{zhang2018unreasonable} and mean squared error (MSE) between input images $\bm{x}$ and the reconstructed images $\mathcal{D}_m(\mathcal{E}(\bm{x}))$.
This combination of these two loss terms ensures a balance between pixel-wise error and perceptual similarity~\cite{larsen2016autoencoding,dosovitskiy2016generating}. 
Specifically, LPIPS is measured by the high-level features extracted by a pre-trained VGG network $\phi$:
\begin{align}
\label{eq: lpips}
\mathcal{L}_{\text{LPIPS}} = \sum_l w_l \cdot \|\phi_l(\bm{x}) - \phi_l(\mathcal{D}_m(\mathcal{E}(\bm{x})))\|_2^2,
\end{align}
where $\bm{x}$ and $\mathcal{D}_m(\mathcal{E}(\bm{x}))$ are the input and reconstructed images, $\phi_l(\bm{x})$ represents the feature map of image $\bm{x}$ extracted from the $l$-th layer of $\phi$, and $w_l$ is a learned weight for layer $l$. Then, the image reconstruction loss is expressed as:
\begin{align}
    \label{eq:img_loss}
    \mathcal{L}_i = 
     \lambda_{\text{lpips}}\mathcal{L}_{\text{LPIPS}} +  \lambda_{\text{mse}} \left\| \mathcal{D}_m(\mathcal{E}(\bm{x})) - \bm{x} \right\|_2^2 .
\end{align}

\textbf{Fingerprinting Loss}: It instructs the fingerprint decoder $\mathcal{W}$ to extract the fingerprints from the images produced by $\mathcal{D}_m$. The binary cross-entropy (BCE) between the input fingerprint and the output of the fingerprint decoder is considered:
\begin{equation}
    \begin{aligned}
     \mathcal{L}_{m} = &
      \sum_{j=1}^{d} \bm{m}_j \log \sigma (\mathcal{W}(\mathcal{D}_m(\mathcal{E}(\bm{x})))_j)  \\ & + (1 -\bm{m}_j) \log (1-\sigma (\mathcal{W}(\mathcal{D}_m(\mathcal{E}(\bm{x})))_j)),
    \end{aligned}
    \label{eq:m_loss}
\end{equation}
where $\sigma(\cdot)$ denotes the sigmoid function. Minimizing $\mathcal{L}_{m}$ corresponds to minimizing the bit-wise error. Remind that $\mathcal{E}$ is frozen and only $\mathcal{D}_m$ and $\mathcal{W}$ are trained.

\textbf{Worst-Case Regularization Loss}:
To proactively enhance the robustness of the fingerprinted model against parameter perturbations, we propose a novel worst-case regularization Loss. Unlike standard optimization objectives that often lead to sharp minima sensitive to modifications, our designed loss term explicitly aims to encourage the model to converge to a wider, flatter minimum of the fingerprinting loss $\mathcal{L}_m$. In this way, the model should be less sensitive to small perturbations in the parameter space, such as those introduced during fine-tuning, thereby preserving the embedded fingerprints. To achieve this, we formulate a min-max optimization objective where we consider the perturbation of the parameters that maximizes the fingerprinting loss within a neighborhood. Formally, the proposed objective is defined as:
\begin{equation}
\begin{alignedat}{2}
&
\mathcal{L}_{\text{reg}} = \mathcal{L}_m\big(\bm{x}, \bm{m}; \theta_{\mathcal{D}_m} + \delta^*\big), \\
& \text{where } \delta^* = \underset{\delta}{\mathop{\arg\max}} \, \mathcal{L}_m\big(\bm{x}, \bm{m}; \theta_{\mathcal{D}_m} + \delta\big), \, 
\text{s.t. } \|\delta\| < \xi.
\end{alignedat}
\label{eq:Regularization}
\end{equation}
where $\theta_{\mathcal{D}_m}$ is the parameter vector of $\mathcal{D}_m$ and $\xi$ is an upper bound for the perturbation. $\delta^*$ denotes the worst-case perturbation. 
Since solving \eqref{eq:Regularization} exactly is computationally expensive, we propose to approximate $\delta^*$ via a single-step gradient ascent:
\begin{align}
\delta^* = \eta_1 \cdot \nabla_{\theta_{\mathcal{D}_m}} \mathcal{L}_m(\bm{x}, \bm{m}; \theta_{\mathcal{D}_m}),
\end{align}
where $\eta_1$ is a step size controlling the magnitude of the perturbation. While theoretically $\delta^*$ should be projected onto the $\xi$-sphere (i.e., $\xi \cdot \nabla \mathcal{L} / \|\nabla \mathcal{L}\|$), we found that a fixed step size $\eta_1$ acts as an effective proxy for the worst-case perturbation in practice and reduces computational overhead.
Specifically, at each fine-tuning step $i$, given the current parameters $\theta_{\mathcal{D}_m}^{(i)}$, we compute the perturbed parameters in the regularization loss as: 
\begin{align}
\theta_{\mathcal{D}_m}^{(i*)} =
\theta_{\mathcal{D}_m}^{(i)} + \delta^* = \theta_{\mathcal{D}_m}^{(i)} +\eta_1 \cdot \nabla_{\theta_{\mathcal{D}^{(i)}_m}} \mathcal{L}_m(\bm{x}, \bm{m}; \theta_{\mathcal{D}^{(i)}_m}).
\end{align}
This newly designed regularization term is minimized jointly with the other losses to update the model parameters $\theta_{\mathcal{D}_m}$. Compared with standard training, the regularization ensures that, under a worst-case perturbation of the VAE parameters, the fingerprinting loss remains low, thereby improving the robustness of the fingerprints against model parameter modification, introduced, for instance, via fine-tuning.

Therefore, the overall loss used to fine-tune the VAE is 
\begin{align}
\label{eq:total_loss}
\mathcal{L}(\bm{x},\bm{m};\theta_{\mathcal{D}}) = \mathcal{L}_{i} + \lambda_m \mathcal{L}_{m} + \lambda_{reg} \mathcal{L}_{reg},
\end{align}
where $\lambda_m$ and $\lambda_{reg}$ are the weights applied to each loss term. At the end of the fine-tuning procedure, the following four models are obtained: the fingerprinted VAE decoder $\mathcal{D}_m$, the fingerprint decoder $\mathcal{W}$, and fingerprint encoders $\mathcal{F}_{\bm{\gamma}}$ and $\mathcal{F}_{\bm{\beta}}$.


\subsection{User Fingerprinted Model Initialization with ACT}
\label{sec: User Fingerprinted Model Initialization}
Once a new user ($i$) comes, a model containing the user-specific fingerprint $\bm{m}^{(i)}$ is obtained by assigning to the normalization layer in the PNM layer the corresponding parameters obtained from the trained encoding networks, that is, $\bm{\gamma}^{(i)} = \mathcal{F}_{\bm{\gamma}}({{\bm{m}}^{(i)}})$ and $\bm{\beta}^{(i)} = \mathcal{F}_{\bm{\beta}}({\bm{m}}^{(i)})$. 
Furthermore, a user-dependent ACT is applied to the parameters of the PNM to enable robustness against collusion attacks (the details of the ACT are provided below). While the fingerprinted T2I model $\mathcal{M}^{(i)}$ is then distributed to the user, the two trained encoding networks $\mathcal{F}_{\bm{\gamma}}$ and $\mathcal{F}_{\bm{\beta}}$, used for setting user-dependent parameters of the PNM in $\mathcal{D}_m$, and the trained fingerprint decoder $\mathcal{W}$, used for the extraction of the fingerprint in the fingerprint verification phase, are not distributed.


\textbf{Objective and Motivation.}
Recent studies have shown that the parameter space of deep neural networks exhibits notable connectivity and flatness. Garipov~\emph{et al.}~\cite{garipov2018loss} and Draxler~\emph{et al.}~\cite{draxler2018essentially} demonstrated that independently trained models lie on a connected low-dimensional manifold. As a consequence of such mode connectivity, linear interpolation between two independently trained models often gets well-performing solutions. Izmailov~\emph{et al.}~\cite{izmailov2018averaging} further showed that averaging model parameters not only preserves model performance but often also improves generalization. These findings suggest that parameter interpolation or averaging is an effective way to maintain model fidelity. However, this property may facilitate collusion attacks, since attackers can exploit parameter-space connectivity to remove or forge embedded fingerprints. The proposed ACT allows for counteracting this by applying user-specific, function-preserving transformations to the model. By deliberately reshaping the parameter space while keeping the generation quality unchanged,  ACT breaks the mode connectivity that colluders rely on. The analysis carried out in Sec.~\ref{sec:Anti Collusion Analysis}  
provides empirical evidence of ACT’s effectiveness against collusion attacks.

\textbf{Pipeline.} Once $\mathcal{D}_m$ is trained, its PNM can be efficiently initialized with distinct fingerprints to produce different model instances, as shown in Sec.~\ref{sec:Phase II}. Before distributing the fingerprinted models to users, we apply a user-specific ACT to the PNM. As shown in Fig.~\ref{fig: overview2}, we combine three operations: parameter permutation, scaling, and sign flip. These transformations are applied sequentially and are parameterized by user-specific keys. Recall that the PNM is a Conv-Norm-Conv module, and $\bm{F}^{(0)}$ is the input, $\bm{F}^{(1)}$ and $\bm{F}^{(2)}$ are the intermediate and final outputs of PNM, respectively. We say a transformation preserves the function of the PNM if for all input $\bm{F}^{(0)}$, the output $\bm{F}^{(2)}$ remains unchanged. Below, we formally define each transformation. 
The theoretical guarantees of function preservation are provided in Appendix A.

\textbf{Channel-wise parameter Permutation (CP).}
CP aims to rearrange the parameters of the PNM layers without changing the output. 
Consider the PNM operation given by Eq.~\eqref{eq: pn module}, where $\bm{W}^{(1)}$ and $\bm{W}^{(2)}$ denote the kernels of $\text{Conv}_1$ and $\text{Conv}_2$, respectively. These are 4-dimensional tensors with shape $C \times C \times k\times k$, where we set the input and output channel dimensions equal to ensure PNM can be inserted at any position in $\mathcal{D}_m$.
Let $\bm{W}_i \in \mathbb{R}^{C \times k\times k}$ denote the $i$-th filter in kernel $\bm{W}$, and $\bm{W}_{i,j} \in \mathbb{R}^{k\times k}$ the $j$-th channel of the $i$-th filter. 
For input feature map $\bm{F}^{(0)}$ with $i$-th channel $\bm{F}^{(0)}_{i} \in \mathbb{R}^{h\times w}$, after $\text{Conv}_1$ and normalization, the $c$-th channel is:
\begin{align}
\label{eq:conv1}
\bm{F}^{(1)}_c = \bm{\gamma}_{c} \cdot \sum_{i=1}^{C} \bm{F}^{(0)}_{i}\ast \bm{W}_{c,i}^{(1)} + \bm{\beta}_{c},
\end{align}
where * represents the convolution operation, $\bm{\gamma}_{c}$ and $\bm{\beta}_{c}$ are the $c$-th normalization parameters.
After $\text{Conv}2$, the $c$-th output channel becomes:
\begin{align}
\label{eq:conv2}
\bm{F}^{(2)}_c = \sum_{j=1}^{C} \bm{F}^{(1)}_j \ast \bm{W}_{c,j}^{(2)}.
\end{align}
Let $[C] := \{1, 2, \dots, C\}$. The CP operation applies a permutation function $\pi: [C] \to [C]$ to rearrange parameters as:
\begin{equation}
\small
\label{eq:cp1}
\begin{aligned}
\footnotesize
& \widetilde{\bm{W}}_{i}^{(1)} := \bm{W}_{\pi(i)}^{(1)}, \quad 
  \widetilde{\bm{\gamma}}_{i}:=\bm{\gamma}_{\pi(i)}, \quad
  \widetilde{\bm{\beta}}_{i}:=\bm{\beta}_{\pi(i)}, \quad \forall i \in [C] \\
& \widetilde{\bm{W}}_{i,j}^{(2)}  := \bm{W}_{i,\pi(j)}^{(2)},  \quad \forall i,j \in [C]
\end{aligned}
\end{equation}
This permutation modifies filters in $\bm{W}^{(1)}$, channels in $\bm{W}^{(2)}$, and normalization parameters consistently. The permutation function $\pi$ is randomly generated for each user using their unique user ID as the random seed. This ensures reproducibility while providing distinct transformations across users.

\textbf{Parameter Scaling (SC).}
SC is the second transformation considered in the ACT framework. It modifies the magnitude of convolutional and normalization parameters in such a way that the overall function of the PNM remains unchanged. 
Specifically, given scaling vectors $\bm{\alpha}^{(1)}, \bm{\alpha}^{(2)}, \bm{\alpha}^{(3)}, \bm{\alpha}^{(4)} \in \mathbb{R}^C$, we define the scaled parameters of the PNM as follows:
\begin{equation}
\begin{aligned}
\label{eq:scal1}
\widetilde{\bm{W}}_{i}^{(1)} & = \bm{\alpha}_i^{(1)}  \cdot {\bm{W}}_{i}^{(1)}, \forall i,\\
\widetilde{\bm{W}}_{i,j}^{(2)} & = \bm{\alpha}_j^{(2)} \cdot  {\bm{W}}_{i,j}^{(2)}, \forall i,j,\\
\widetilde{\bm{\gamma}}_{i} & = \bm{\alpha}_i^{(3)} \cdot  {\bm{\gamma}}_{i}, \forall i,\\
\widetilde{\bm{\beta}}_{i} & = \bm{\alpha}_i^{(4)} \cdot  {\bm{\beta}}_{i}, \forall i,
\end{aligned}
\end{equation}
where $\bm{\alpha}^{(1)},\bm{\alpha}^{(2)},\bm{\alpha}^{(3)},\bm{\alpha}^{(4)} \in \mathbb{R}^{C}$, and such that 
\begin{equation}
\begin{aligned}
\label{eq:con1}
\bm{\alpha}_i^{(1)} \bm{\alpha}_i^{(2)} \bm{\alpha}_i^{(3)} = 1, 
\bm{\alpha}_i^{(2)} \bm{\alpha}_i^{(4)} = 1, \forall i.
\end{aligned}
\end{equation}
To obtain the scaling vectors, a random generator is used to sample $\bm{\alpha}^{(1)}$ and $\bm{\alpha}^{(2)}$ with a user-specific random seed. Then, $\bm{\alpha}^{(3)}$ and $\bm{\alpha}^{(4)}$ are derived accordingly to satisfy the constraints in Eq.~\eqref{eq:con1}. Hence, scaling is applied across the filter dimension of $\bm{W}^{(1)}$ and the channel dimension of $\bm{W}^{(2)}$, and to $\bm{\beta}$ and $\bm{\gamma}$, with the above constraints ensuring that the output does not change.

\textbf{Sign Flip (SF).}
The third transformation in the ACT framework is sign flip, which inverts the signs of parameters in the PNM while ensuring the outputs remain equivalent. By making the parameter sign-flipping user-dependent, this operation provides additional parameter obfuscation. 

The sign flip transformation applies element-wise sign changes to the convolutional kernels and normalization parameters. 
Specifically, we define sign flip vectors $\hat{\bm{\alpha}}^{(1)}, \hat{\bm{\alpha}}^{(2)}, \hat{\bm{\alpha}}^{(3)}, \hat{\bm{\alpha}}^{(4)} \in \{-1,1\}^C$, where $\hat{\bm{\alpha}}^{(1)}$ and  $\hat{\bm{\alpha}}^{(2)} \in \{-1,1\}^C$
are generated independently using user-specific random seeds, while $\hat{\bm{\alpha}}^{(3)}$ and $\hat{\bm{\alpha}}^{(4)}$ are determined by the same constraints in Eq.~\eqref{eq:con1}. The transformed parameters are then obtained as Eq.~\eqref{eq:scal1} by replacing the scaling vectors with the sign flip vectors.  Hence, sign flip is a special case of parameter scaling where scaling factors are constrained to $\{-1, 1\}$.

The security of the ACT is also analyzed in the appendix (Appendix B), where the complexity of achieving parameter alignment across different users is evaluated. 

\section{Experimental Results on Fidelity and Effectiveness}
\label{Sec: Exeriments}
\subsection{Settings}
\label{Sec: Datasets, Models, and Tasks}

\textbf{Datasets and Tasks.}
We considered both T2I image generation and text-based image editing tasks.
We fine-tuned the VAE decoder for fingerprinting using the MS-COCO-2017 train set~\cite{lin2014microsoft}. The evaluations are based on the MS-COCO-2017 val set (generation task), ImageNet~\cite{imagenet15russakovsky} (generation task), MagicBrush~\cite{Zhang2023MagicBrush} (editing task), and InstructPix2Pix~\cite{brooks2023instructpix2pix} (editing task). All images are first resized such that the shorter side is scaled to 512 pixels while maintaining the aspect ratio, followed by a random crop to obtain a final image of size $512 \times 512$. For image generation from text, following common practice~\cite{fernandez2023stable,kim2024wouaf}, we selected a caption per image in the COCO and ImageNet datasets as the input prompt. For image editing, we used: i) MagicBrush, which provides (image, mask, prompt) triplets, comprising a source image, a binary mask that indicates the region to be modified, and a prompt describing the intended modification; and ii)  InstructPix2Pix, which provides source images paired with editing prompts.

\noindent\textbf{Models.}
We considered multiple models, including Stable Diffusion v2-base (SD2)
for T2I generation, and SD2-inpainting
for the MagicBrush dataset, and the InstructPix2Pix model.
Note that they share the same VAE, so we only need to modify and fine-tune the decoder once, which can then be used across these models. The PNM was inserted before the final convolutional layer in $\mathcal{D}_m$ with $C=128$.
$\mathcal{F}_{\bm{\gamma}}$ and $\mathcal{F}_{\bm{\beta}}$ are 2-layer fully connected networks with 128 neurons and LeakyReLU activation (negative slope = 0.1)~\cite{maas2013rectifier}. 
The fingerprint decoder $\mathcal{W}$ is implemented via an EfficientNet-B0.
$\mathcal{F}_{\bm{\gamma}}$ and $\mathcal{F}_{\bm{\beta}}$ are randomly initialized, while $\mathcal{W}$ is pre-trained on ImageNet. We used 48-bit fingerprints (i.e., $d = 48$), which is the same payload used in \cite{fernandez2023stable}.

Following common practice, a noise layer is introduced in the scheme before the fingerprint decoder, performing data augmentation on the  VAE reconstructed image. This procedure forces the  VAE decoder to embed a more robust fingerprint that can survive processing. The operations we considered are random Gaussian blurring, brightness adjustment, contrast adjustment, Gaussian noise, horizontal flipping, JPEG compression, and random cropping with factors chosen from 0.5 to 0.9. Given an image, a processing in the above set is randomly chosen and applied with probability 70\%, while the image is left unchanged with probability 30\%.

With regard to the comparison of fingerprinting methods, we consider both U-Net-based methods  \textbf{AquaLoRA}~\cite{feng2024aqualora} and  \textbf{WatermarkDM}~\cite{zhao2023recipe}\footnote{As \cite{zhao2023recipe} belongs to the black-box watermarking category, comparison is done only on fidelity.}, and VAE-based methods \textbf{Per. Norm.}~\cite{fei2023robust} (applied to the  VAE decoder), \textbf{Sta. Sig.}~\cite{fernandez2023stable} and \textbf{WOUAF}~\cite{kim2024wouaf}. The hyperparameters are set as follows:  $\lambda_{\text{lpips}} = 10$, $\lambda_{\text{mse}} = 1$, $\lambda_{\text{m}} = 1$, $\lambda_{\text{reg}} = 1$, and $\eta_1 = 0.05$. The model is optimized using the Adam optimizer with a learning rate of 0.0003, $\beta_1 = 0.9$, $\beta_2 = 0.999$, and a batch size of 6.

\subsection{Evaluation Metrics}
To evaluate the impact of the fingerprint on the performance of image reconstruction, we measured the peak signal-to-noise ratio (PSNR), the structural similarity index (SSIM), and the learned perceptual image patch similarity (LPIPS).
The quality of the image generation is also measured via the FID score. Finally, we also considered the CLIP score~\cite{hessel2021clipscore} to measure the semantic similarity between the generated image and the prompt. Specifically:
\begin{description}
\item i) To evaluate image reconstruction performance, we measured PSNR, SSIM, and LPIPS between images reconstructed by the fingerprinted VAE and: 
(1) the reconstructed images produced by the non-fingerprinted
model (baseline); (2) the original (real) images. The corresponding metrics are referred to respectively as $\text{PSNR}_b$, $\text{SSIM}_b$, and $\text{LPIPS}_b$ (b = baseline), and $\text{PSNR}_r$, $\text{SSIM}_r$, and $\text{LPIPS}_r$ (r = real).
\item ii) To evaluate image generation performance, we measured the $\text{FID}$ 
between images generated by the fingerprinted model and real images, and CLIP score (scaled to [0, 1]) between generated images and the prompt.
\end{description}

The effectiveness of the fingerprinting method is evaluated via the bit-wise matching accuracy (Bit Acc, \%) between the extracted fingerprint and the ground truth fingerprint, given by Eq.~\eqref{eq: fingerprint verification}. To evaluate the fingerprint verification performance, we also measured the TPR/FPR and the CIR (in the identification case) given by Eq.~\eqref{eq: detection} and Eq.~\eqref{eq: identification}-\eqref{eq: identification2}.

\begin{figure}[H]
\centering
    \includegraphics[width=0.48\textwidth]{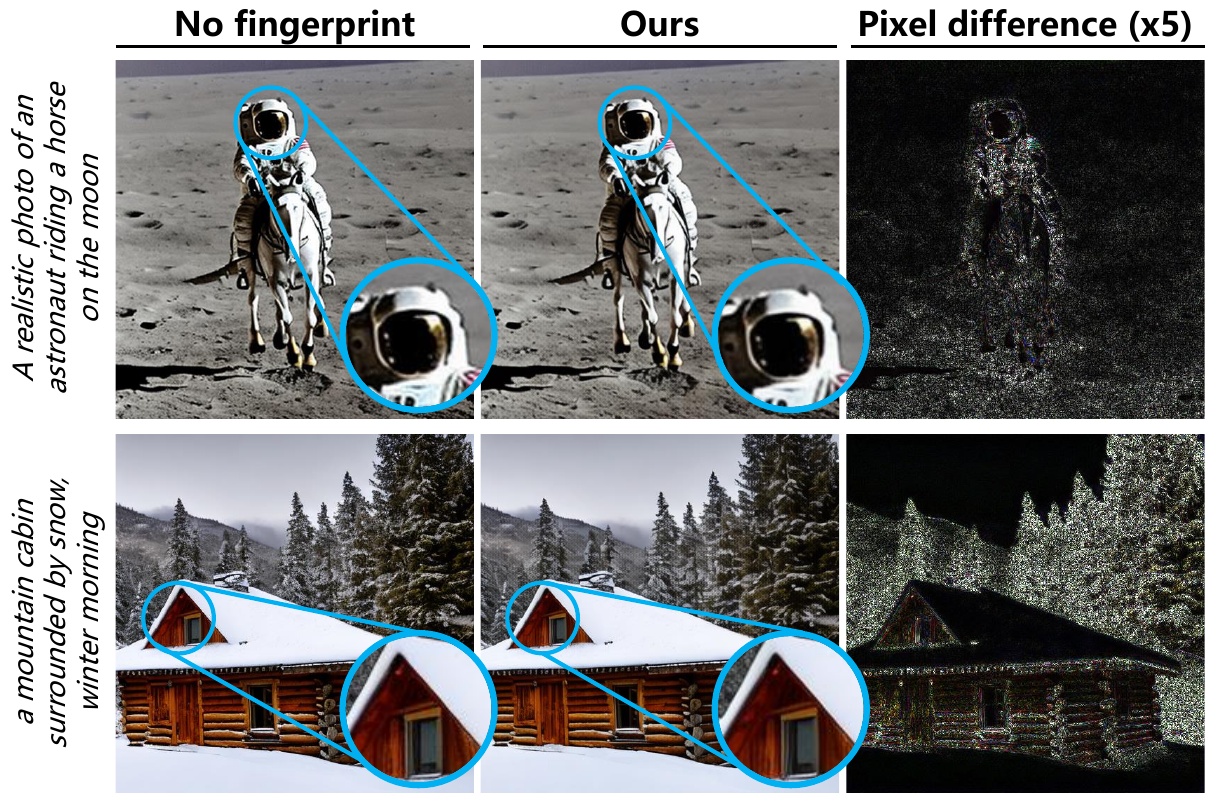}
    \caption{Images generated by SD2 (no fingerprint) and our fingerprinted model, along with their differences ($\times$5).}
\label{fig: sample}
\end{figure} 

\subsection{Performance Evaluation}
\label{Sec: Model Fidelity and Fingerprint Effectiveness}
\textbf{Fidelity.}
We first evaluate the impact of the fingerprinting on image quality.
As shown in Table~\ref{table: fid}, in terms of image reconstruction performance, our method achieves the highest $\text{PSNR}_b$ values across all models when evaluated against images generated by the non-fingerprinted model. Our method achieves competitive performance also in terms of SSIM and LPIPS, whose values are always close to those achieved by the other VAE-based methods. Since U-Net–based methods do not modify the VAE, the comparison with the non-fingerprinted model is not applicable.

However, in terms of image generation quality (FID and CLIP score), U-Net–based methods lead to a notable degradation.
For example, on COCO, for our method $\text{FID}$ is 24.03, nearly matching WOUAF (23.93), Sta. Sig. (24.06) and Per. Norm. (23.97), outperforming AquaLoRA and WatermarkDM (for which $\text{FID}$  is  24.92 and 27.45).
On ImageNet, our $\text{FID}$ is 25.46, which is 2-5 better than AquaLoRA and WatermarkDM, and better than WOUAF (26.34) and almost as good as Sta. Sig. (25.71). On MagicBrush, our method achieves 22.18, close to the best (21.76 of Per. Norm.), and better than  WOUAF (22.24) and Sta. Sig. (22.40). This advantage is consistent in image editing. Our method achieves the best or second-best $\text{FID}$ on both MagicBrush and InstructPix2Pix, and the change in FID relative to the non-fingerprinted model is within 0.5, which is 2–3 points better than U-Net–based methods and also superior to VAE–based methods. CLIP scores are stable across all VAE–based methods (0.620–0.659), showing that semantic consistency is preserved regardless of the presence of the fingerprint, whereas U-Net–based methods cause a slight degradation in semantic consistency.

In Fig.~\ref{fig: sample}, we show some images generated by the original non-fingerprinted model and by our fingerprinted model,  obtained using the same input prompt, in the case of SD2. The images produced by the fingerprinted model are almost indistinguishable from those produced by the original model in both semantic content and fine-grained details.

\begin{table*}[ht]
    \renewcommand{\arraystretch}{0.95}
    \centering
    \caption{Performance of model fingerprinting on various datasets and tasks. for T2I and also image editing (PSNR $\uparrow$; SSIM$\uparrow$; LPIPS$\downarrow$; FID$\downarrow$; CLIP$\uparrow$; Bit Acc(\%) $\uparrow$). ‘Generation’ refers to T2I image generation tasks based on SD2; ‘Editing’ refers respectively to text-based image editing tasks based on the SD2-inpainting and InstructPix2Pix models. 'retrain-free' denotes whether a new fingerprinted model necessitates retraining.
    }
    \begin{tabular}{
    m{1.9cm}<{\centering}|
    m{0.8cm}<{\centering}|
    m{2.2cm}<{\centering}|
    m{0.9cm}<{\centering}|
    m{0.75cm}<{\centering}
    m{0.75cm}<{\centering}
    m{0.75cm}<{\centering}
    m{0.75cm}<{\centering}
    m{0.75cm}<{\centering}
    m{0.75cm}<{\centering}|
    m{0.75cm}<{\centering}
    m{0.75cm}<{\centering}|
    m{0.75cm}<{\centering}}
    \toprule
    \multirow{2}*{\textbf{Dataset}} & \multirow{2}*{\textbf{Type}} &\multirow{2}*{\textbf{Method}}   & \multirow{2}*{\makecell{\textbf{Retrain}\\\textbf{-free?}}} & \multicolumn{6}{c|}{\textbf{Reconstruction}} & \multicolumn{2}{c|}{\textbf{Generation}} & \multirow{2}*{\textbf{Bit Acc}}\\
    & & &  &\textbf{$\text{PSNR}_{b}$}  & \textbf{$\text{PSNR}_{r}$} & \textbf{$\text{SSIM}_{b}$} & \textbf{$\text{SSIM}_{r}$}   & \textbf{$\text{LPIPS}_{b}$} & \textbf{$\text{LPIPS}_{r}$} & \textbf{$\text{FID}$} & \textbf{CLIP} \\
    \midrule
    
   \multirow{7}*{\makecell{\textbf{CoCo}\\ \textbf{(Generation)}}}& &No fingerprint& - & - & 28.55 & - & 0.805 & -  &0.128  & 23.26& 0.658 & - \\ \cmidrule(lr){2-13}
    & \multirow{2}*{U-net} &AquaLoRA~\cite{feng2024aqualora} & \checkmark& -&\textbf{28.55} &- & 0.805& -  & \underline{0.128}  &24.92 &0.657 & 95.43\\ 
    & & WatermarkDM~\cite{zhao2023recipe} &  \ding{55}& -&\textbf{28.55} &- & 0.805& -& \underline{0.128}  &27.45 &0.655 &-\\ \cmidrule(lr){2-13}
    & \multirow{4}*{VAE} &Per. Norm.~\cite{fei2023robust} & \checkmark & 29.97& 27.56 & 0.858 & 0.789 & \textbf{0.068} & \textbf{0.124}  &\underline{23.97}  &   0.658  &\textbf{99.60}\\
    & &Sta. Sig.~\cite{fernandez2023stable} & \ding{55} & 30.90 & 28.04& \textbf{0.897} & \textbf{0.815} & 0.084&  0.142 & 24.06 &   0.658 & 99.48\\
    & &WOUAF~\cite{kim2024wouaf} & \checkmark & \underline{31.11}  & 28.26 &0.882 &\underline{0.814} &0.078 &\underline{0.128} &\textbf{23.93} &0.658 & 98.95 \\ \cmidrule(lr){3-13}
    & &Ours&  \checkmark  &\textbf{31.80} & \underline{28.51} & \underline{0.883} & 0.808 & \underline{0.071} &0.129   & 24.03 &  0.658 & \underline{99.57} \\
    \midrule
    
   \multirow{7}*{\makecell{\textbf{ImageNet}\\ \textbf{(Generation)}}} &&No fingerprint& -  & - &29.58 & -& 0.825 &   & 0.125 & 24.23 & 0.659 &- \\ \cmidrule(lr){2-13}
    & \multirow{2}*{U-net}&AquaLoRA~\cite{feng2024aqualora} & \checkmark&  -  & \textbf{29.58}&- &\underline{0.825} & -& \textbf{0.125}&27.65 &0.659 &95.26\\ 
   & & WatermarkDM~\cite{zhao2023recipe} &  \ding{55} & -&\textbf{29.58} &- & \underline{0.825} & -& \textbf{0.125}&30.14 &0.654 & -\\  \cmidrule(lr){2-13}
    & \multirow{4}*{VAE} &Per. Norm.~\cite{fei2023robust} &  \checkmark  & 30.54& 28.13 &0.879  & 0.809 & \textbf{0.066} & \textbf{0.125}  &  \textbf{25.25} &  0.659 & \textbf{99.65} \\
    &&Sta. Sig.~\cite{fernandez2023stable} & \ding{55} & 32.10 &28.90 & \underline{0.919} & \textbf{0.831} &0.082 & 0.148  & 25.71 &   0.659 &99.48 \\
    &&WOUAF~\cite{kim2024wouaf} & \checkmark &\underline{32.05} &29.19 & \underline{0.919} &0.828 &0.081 &0.149 & 26.34 & 0.659 & 98.82\\  \cmidrule(lr){3-13}
    & &Ours& \checkmark  &\textbf{32.66} & \underline{29.30} &\textbf{0.928} &0.822 & \underline{0.067}& \underline{0.127}  & \underline{25.46} &  0.659 & \underline{99.61}\\
    \midrule
    
   \multirow{7}*{\makecell{\textbf{MagicBrush}\\ \textbf{(Editing)}}}& &No fingerprint & -  &- & 30.82& - &0.871 & -& 0.105   & 19.92 &  0.620&- \\ \cmidrule(lr){2-13}
   & \multirow{2}*{U-net} &AquaLoRA~\cite{feng2024aqualora} & \checkmark&   -  & \textbf{30.82}&- &\textbf{0.871} & -& \textbf{0.105}&23.52 &0.618 & 94.81\\   
    && WatermarkDM~\cite{zhao2023recipe} &  \ding{55} & -&\textbf{30.82} &- & \textbf{0.871}& -& \textbf{0.105}&24.95 &0.612 & -\\ \cmidrule(lr){2-13}
    & \multirow{4}*{VAE}&Per. Norm.~\cite{fei2023robust} & \checkmark  & 31.41&   29.01&0.898  &0.844  &\textbf{0.053} &\underline{0.106}   & \textbf{21.76}  &  0.620  & \textbf{99.59}\\
    &&Sta. Sig.~\cite{fernandez2023stable} & \ding{55}  & 32.63 &29.83 &\textbf{0.928}  &  0.861 & 0.066& 0.120  & 22.40 &  0.620 & \underline{99.51} \\
    &&WOUAF~\cite{kim2024wouaf} & \checkmark  &\underline{33.01} &30.22 &\underline{0.924} &0.850 &\underline{0.062} &0.109 &22.24 &0.620 & 99.17 \\ \cmidrule(lr){3-13}
     &&Ours &  \checkmark  &  \textbf{33.95} &\underline{30.56} &\underline{0.924} & \underline{0.869} & \textbf{0.053} &\underline{0.106}  & \underline{22.18} & 0.620  & 99.49\\ \midrule

   \multirow{7}*{\makecell{\textbf{InstructPix2Pix}\\ \textbf{(Editing)}}} & &No fingerprint & - & - &33.49  & - & 0.936 & -& 0.040   & 13.21 & 0.657  &- \\ \cmidrule(lr){2-13}
   & \multirow{2}*{U-net} &AquaLoRA~\cite{feng2024aqualora} & \checkmark&   -  & \textbf{33.49} &- &\textbf{0.936} & -& \textbf{0.040} &14.59 &0.655 & 95.49\\
   & & WatermarkDM~\cite{zhao2023recipe} &  \ding{55} & -&\textbf{33.49} &- & \textbf{0.936}& -& \textbf{0.040} &16.78 &0.649 &- \\  \cmidrule(lr){2-13}
   & \multirow{4}*{VAE} &Per. Norm.~\cite{fei2023robust} & \checkmark &29.59  & 29.20 & 0.841 & 0.822 & 0.063& 0.074  & 13.51  &  0.658 & \underline{99.27}\\
   & &Sta. Sig.~\cite{fernandez2023stable} & \ding{55} & 29.63 &28.96 &  0.819& 0.814 &0.063 &0.076  & 13.69& 0.658 &99.20 \\
    &&WOUAF~\cite{kim2024wouaf} & \checkmark &\textbf{30.05} &  29.17 &0.820  &0.804 &  0.067& 0.075& 13.69 &0.658 & 99.15\\  \cmidrule(lr){3-13}
    & &Ours &  \checkmark & \underline{30.02} & 29.84  & \textbf{0.879}  &\underline{0.845} &\textbf{0.060} &\underline{0.069} &\textbf{13.06} & 0.658 & \textbf{99.46}\\ 
    
    \bottomrule
    \end{tabular}
    \label{table: fid}
\end{table*}

\begin{figure*}[ht]
    \centering
    \begin{minipage}{0.25\textwidth}
        \centering
        \includegraphics[width=\linewidth]{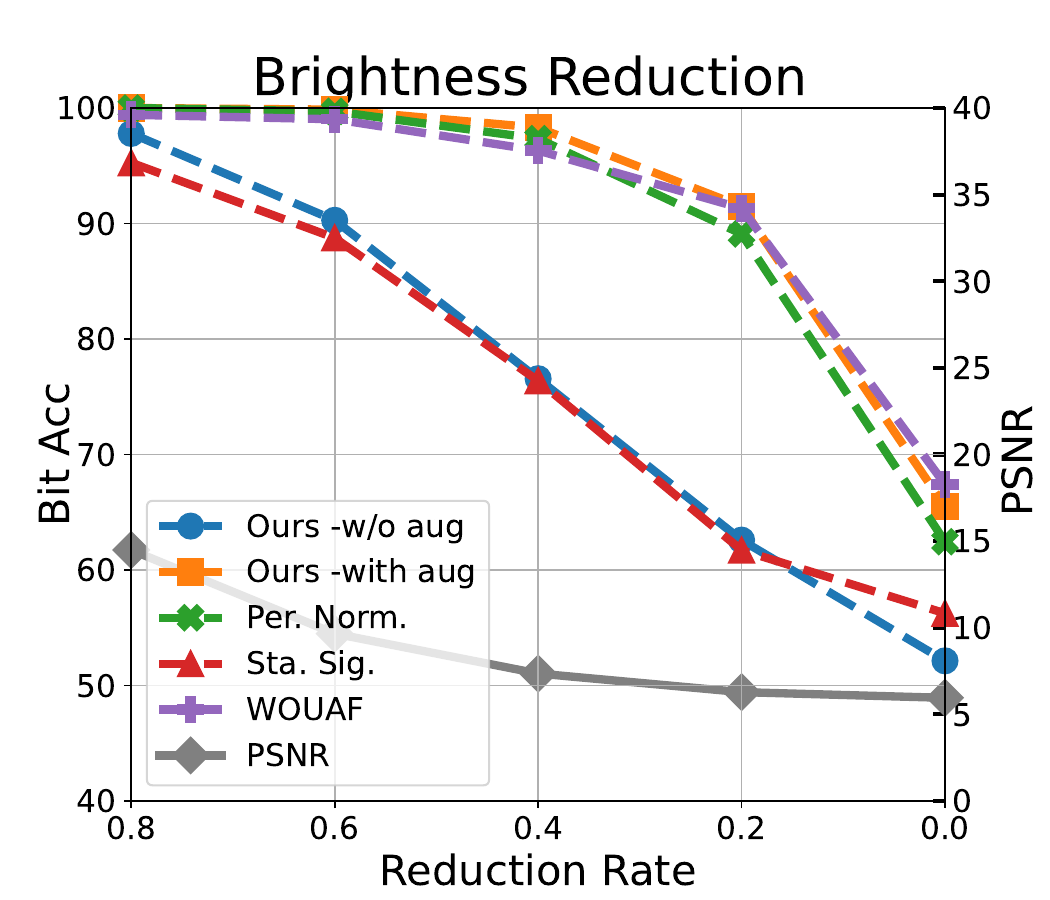}
    \end{minipage}\hfill
    \begin{minipage}{0.25\textwidth}
        \centering
        \includegraphics[width=\linewidth]{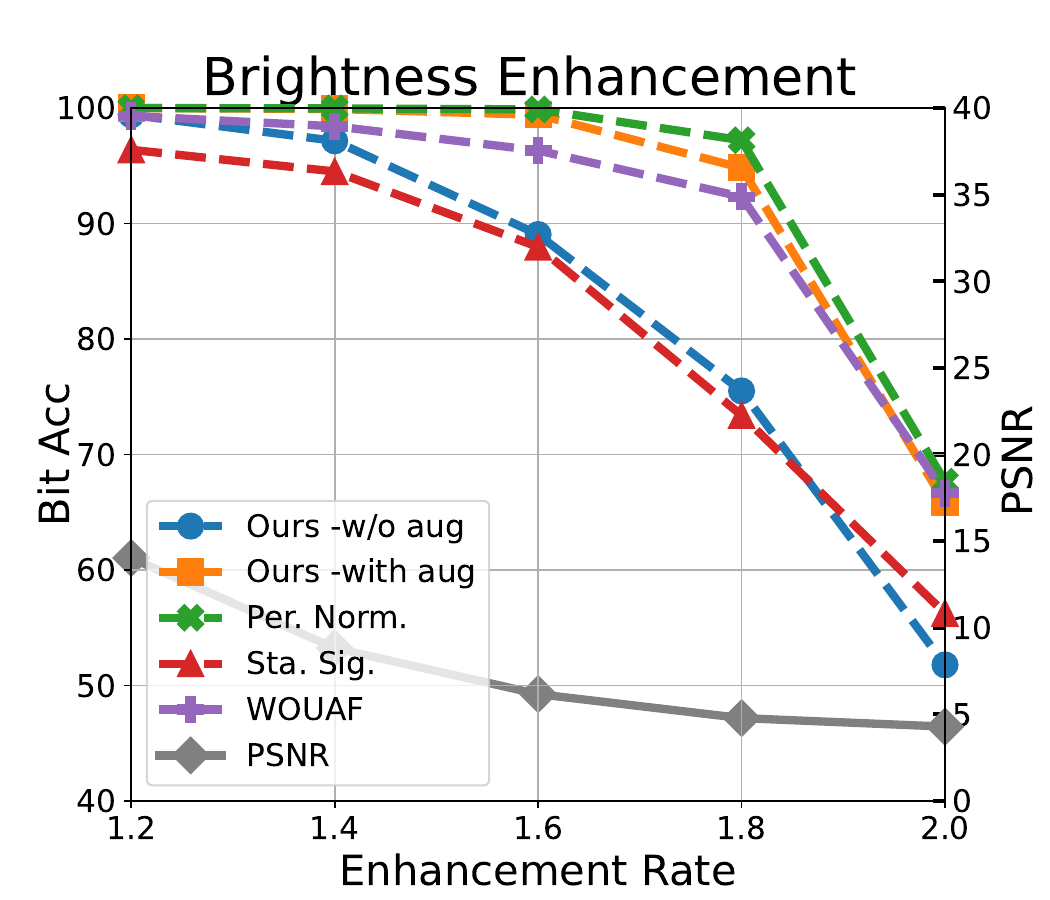}
    \end{minipage}\hfill
    \begin{minipage}{0.25\textwidth}
        \centering
        \includegraphics[width=\linewidth]{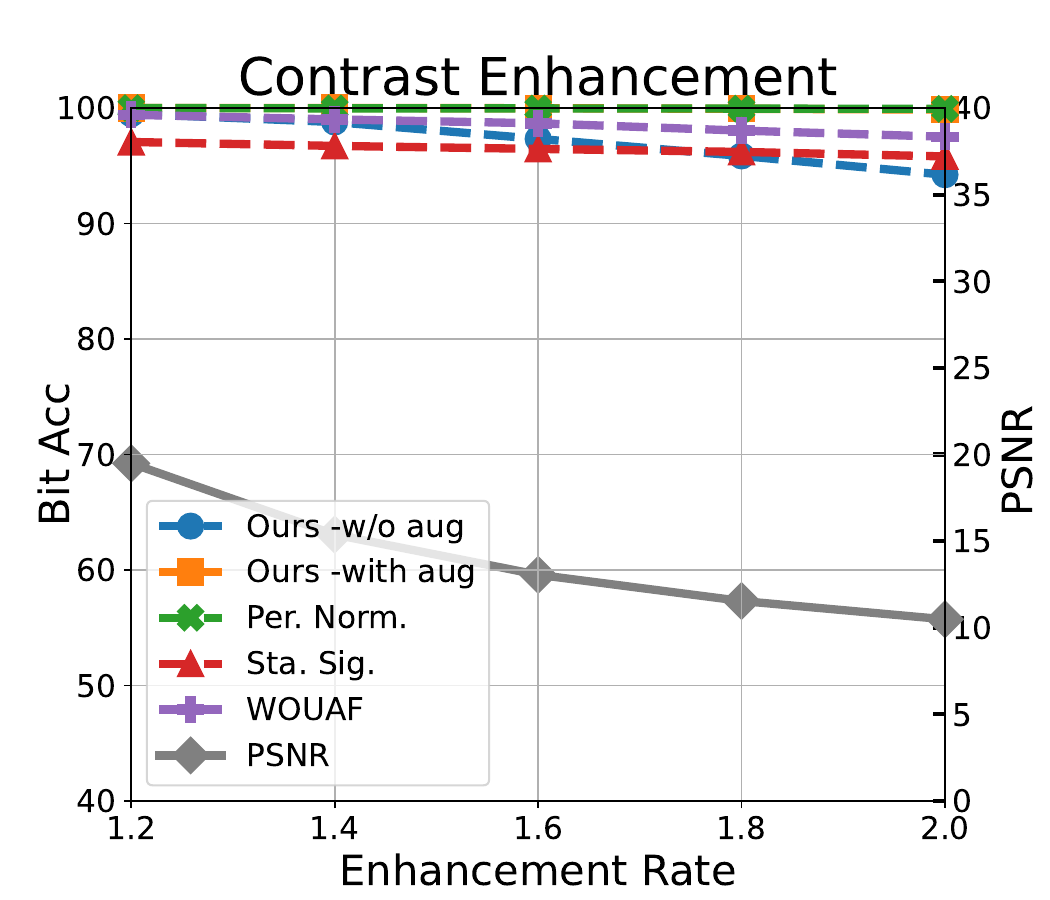}
    \end{minipage}\hfill
    \begin{minipage}{0.25\textwidth}
        \centering
        \includegraphics[width=\linewidth]{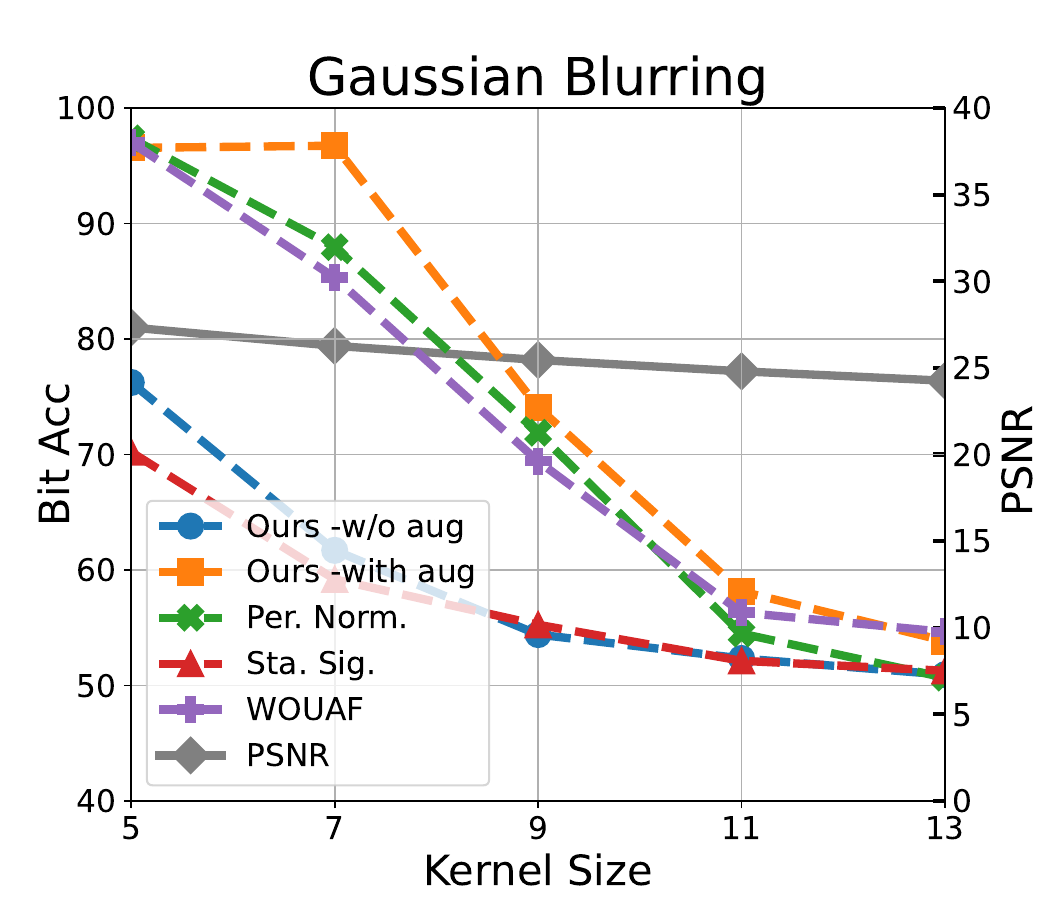}
    \end{minipage}
    \begin{minipage}{0.25\textwidth}
        \centering
        \includegraphics[width=\linewidth]{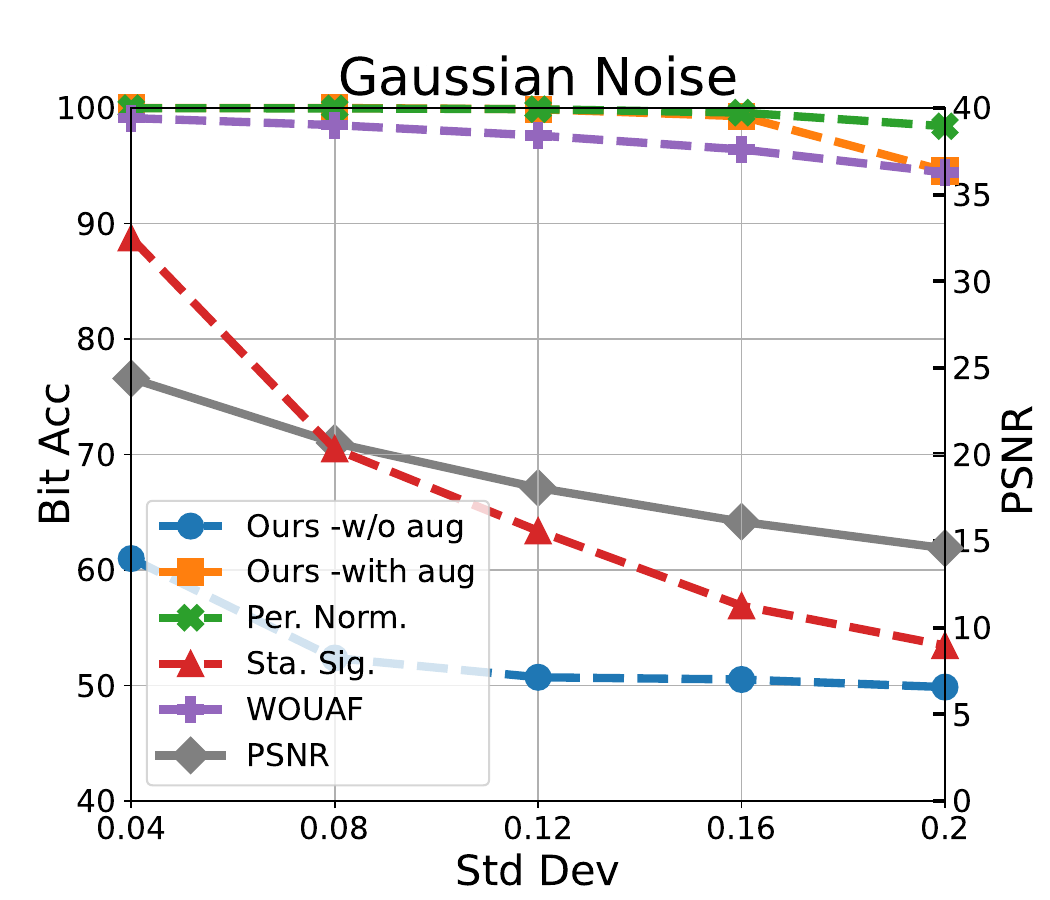}
    \end{minipage}\hfill
    \begin{minipage}{0.25\textwidth}
        \centering
        \includegraphics[width=\linewidth]{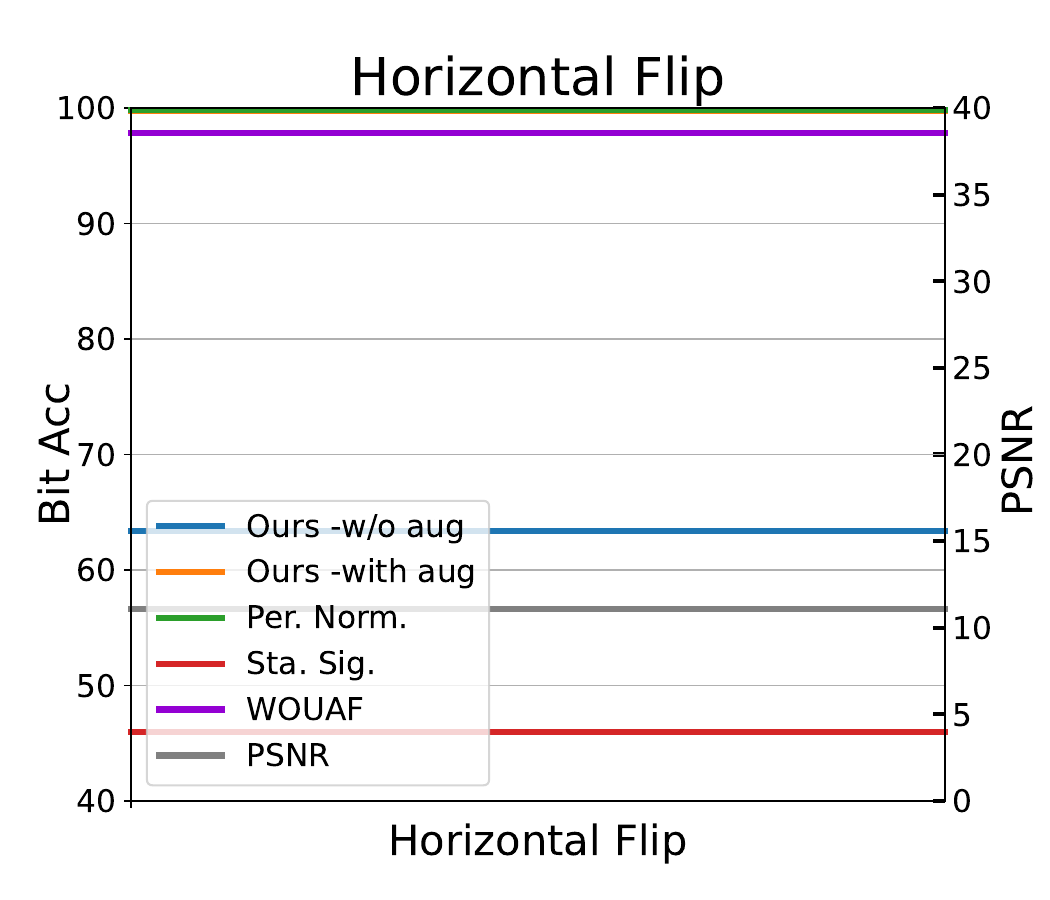}
    \end{minipage}\hfill
    \begin{minipage}{0.25\textwidth}
        \centering
        \includegraphics[width=\linewidth]{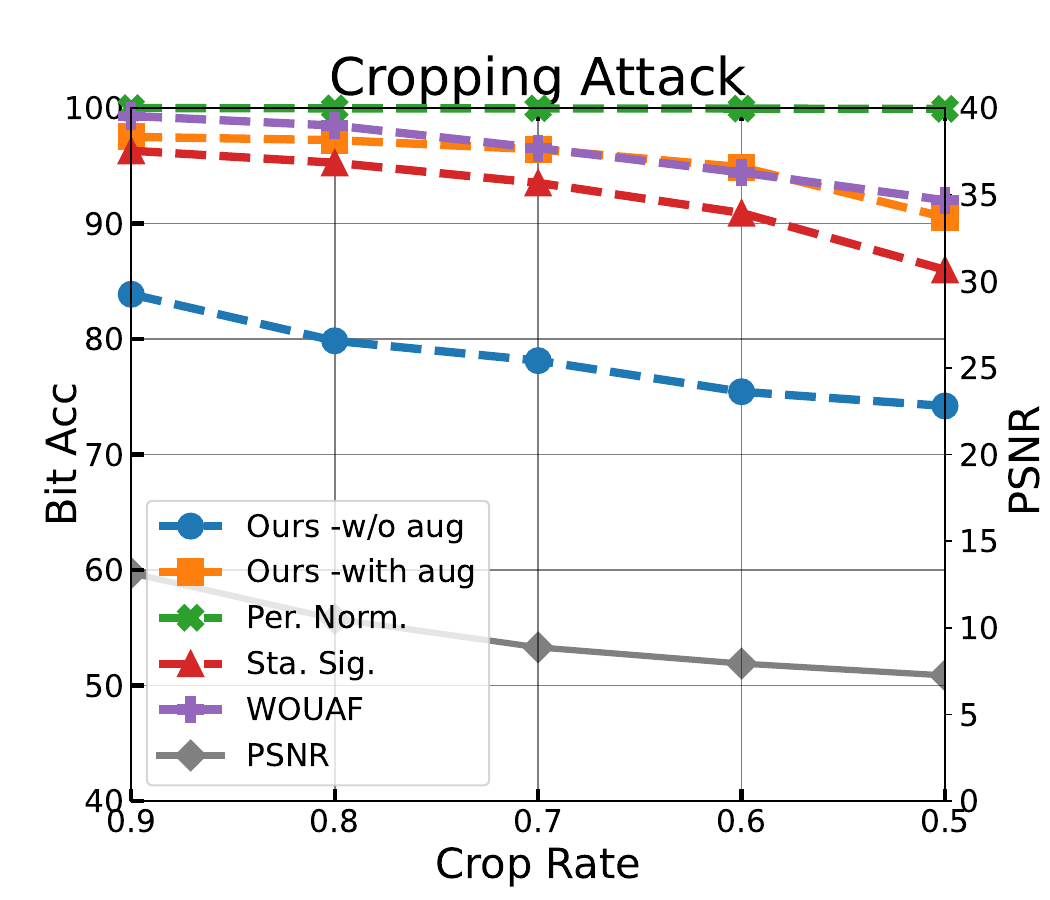}
    \end{minipage}\hfill
    \begin{minipage}{0.25\textwidth}
        \centering
        \includegraphics[width=\linewidth]{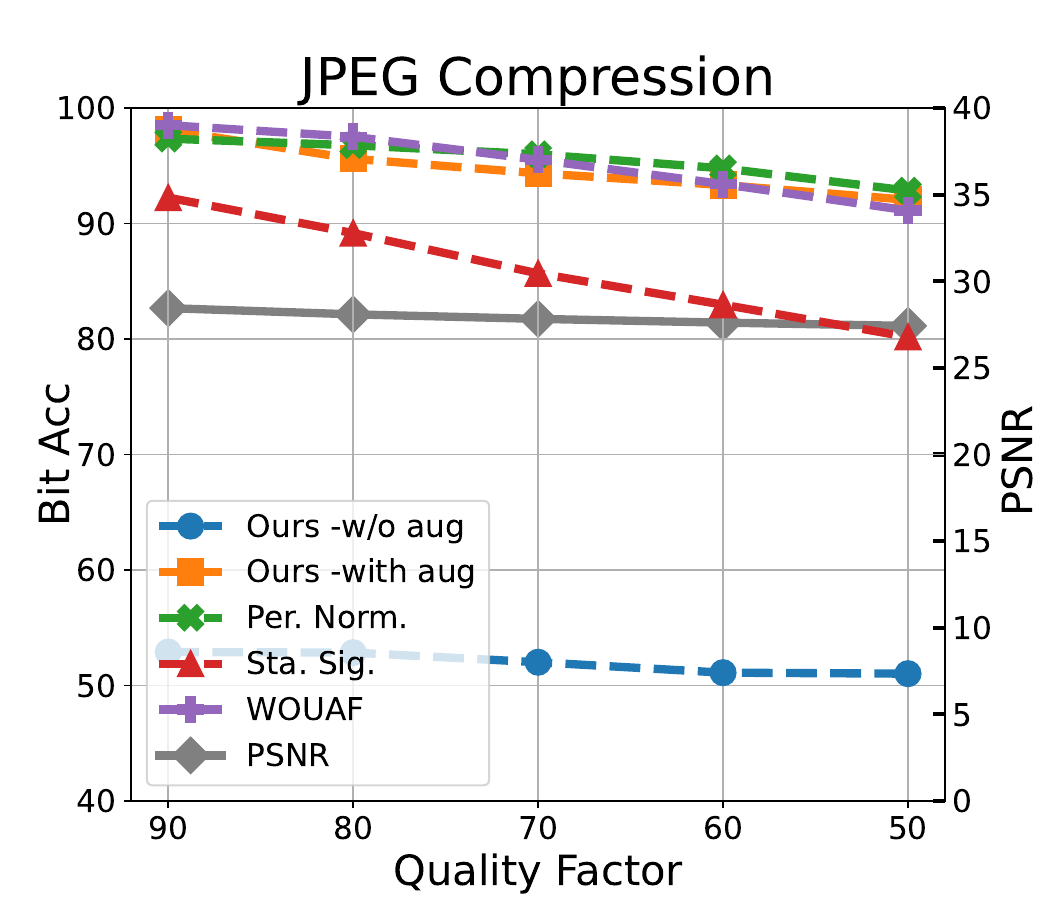}
    \end{minipage}
\caption{Robustness against different image-level attacks.
The grey line indicates the PSNR after the attack.}
\label{fig: image attack}
\end{figure*}

\textbf{Fingerprinting Performance.}
In the last column of Table~\ref{table: fid}, we report the Bit Acc (\%) achieved on all models and datasets.
The reported Bit Acc is averaged over 32 random fingerprints and 1,000 fingerprinted images for each distinct fingerprint (i.e., 32,000 images per dataset). Our method achieves a Bit Acc of around 99.5\% across all cases, demonstrating accurate fingerprint extraction. We observe that VAE–based methods achieve higher performance, whereas the Bit Acc of AquaLoRA is limited to 95\%. Since WatermarkDM is a trigger-based black-box watermarking scheme, the watermark is not extracted from model outputs but defined as a specific output pattern corresponding to trigger inputs. Therefore, comparison in terms of Bit Accuracy is not applicable.

We further evaluate the performance of our method in fingerprint verification. The results are presented in Fig.~\ref{fig: Threshold vs. TPR and FPR}. The behavior of TPR/FPR (Eq.~\ref{eq: detection}) is reported as a function of the threshold, defined as the number of matched bits (i.e., $\tau \times d$, where $d = 48$). The rates are evaluated on 1024k matches under $H_0$ and 32k matches under $H_1$. As the threshold increases from 35 to 47, the TPR remains consistently high (equal to 1.00 up to 41) and begins to decline thereafter, whereas the FPR is around $10^{-4}$ at 35 and drops sharply around 39 to 40. At a threshold of 41, the FPR is 0 in all cases, while the TPR is 1.0 for all cases. 
Therefore, in the subsequent evaluations, we fix $\tau = 0.85 (41/48)$. We also computed the TPR and FPR for this case, and all methods achieved perfect performance, with TPR = 1 and FPR = 0.

In Fig.~\ref{fig: Number of users M  vs. TPR/FPR/CIR}, we report the performance of fingerprint identification (see Section \ref{sec: Fingerprint Verification}) in the T2I COCO case (results are similar in the other cases). The TPR/FPR and CIR (defined in Eq.~\eqref{eq: identification}-\eqref{eq: identification2}) are reported as a function of the number of users, i.e., distinct fingerprinted models, when $\tau = 0.85$.
The rates under $H_1$ (i.e., TPR and CIR) are evaluated considering 10 fingerprinted images for each user, for a total of $10 \cdot M$ fingerprinted images, while the FPR is computed on 1,000 non-fingerprinted images. We see that the TPR and CIR remain consistently near 1.0, even as the number of users reaches $10^4$. Meanwhile, the FPR remains close to 0. Hence, given an image, the source model can be correctly identified among $10^4$ distinct fingerprinted models released to users. These results show that our method can achieve good identification performance.\footnote{It is worth stressing that the fingerprint associated with every user is chosen randomly. In principle, given a maximum number of users $M$ ($M \ll 2^d$) that the distributor wants to allocate, better performance can be achieved by choosing the $d$-bit fingerprints to be associated with every user by means of a suitable binary fingerprint code~\cite{tardos2008optimal}. This analysis goes beyond the scope of this paper.} 
Since the results obtained for the various tasks are very similar, in the following, we focus on  T2I COCO, and results are reported for this case, unless stated otherwise.

\begin{figure}[h]
    \centering
    \begin{minipage}{0.45\textwidth}
        \centering
        \includegraphics[width=\linewidth]{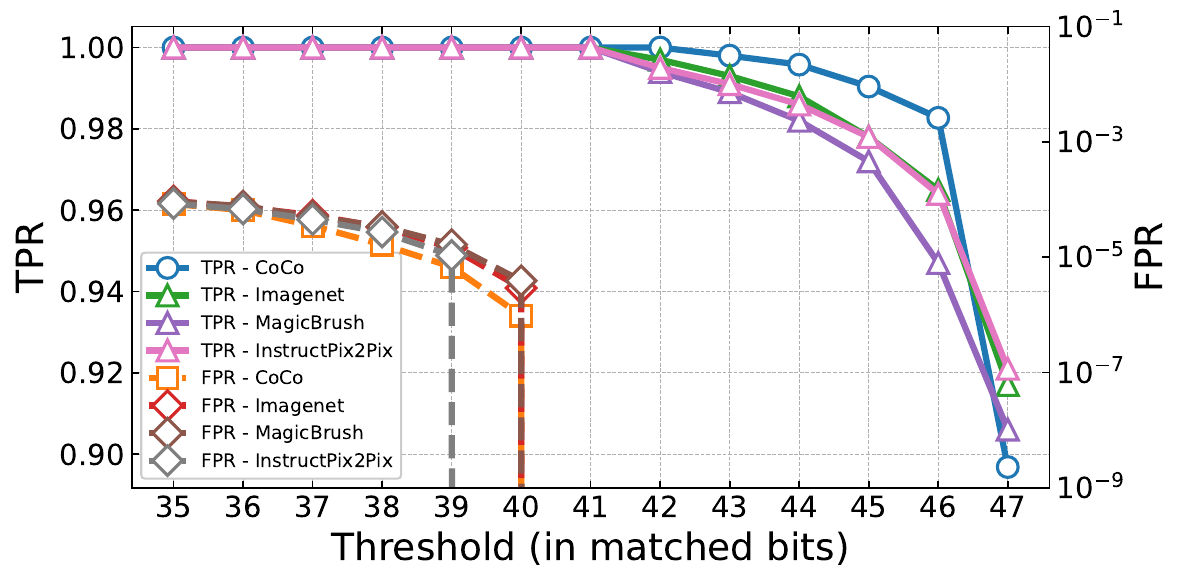}
        \subcaption{TPR and FPR vs Threshold ($\tau \times d$)} 
        \label{fig: Threshold vs. TPR and FPR}
    \end{minipage}
    \hfill
    \begin{minipage}{0.45\textwidth}
        \centering
        \includegraphics[width=\linewidth]{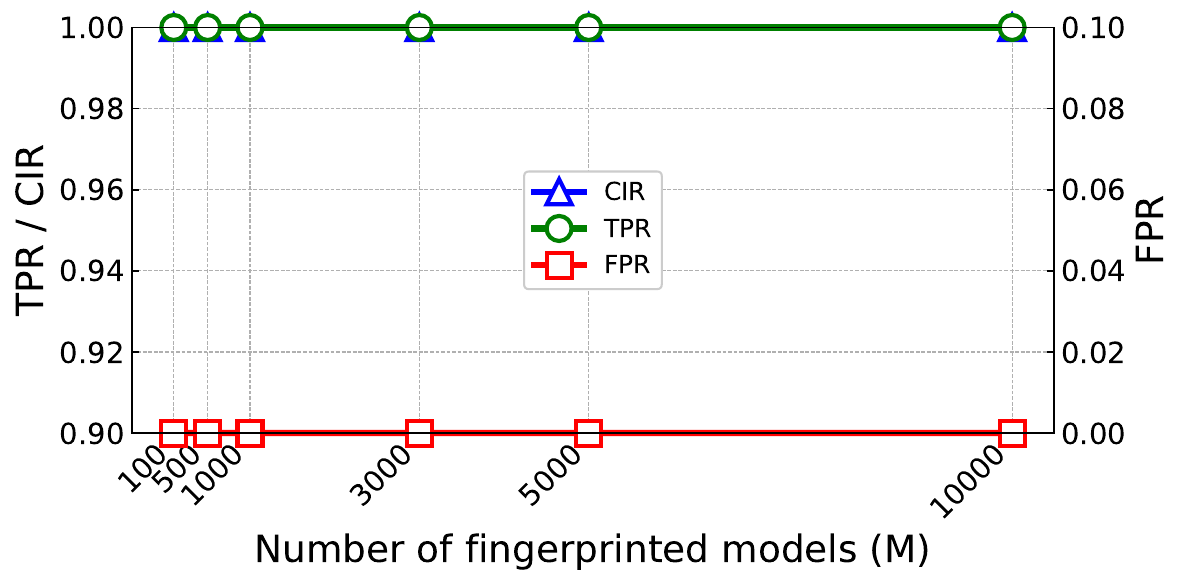}
        \subcaption{TPR,FPR and CIR vs M (number of users). 
        }
        \label{fig: Number of users M  vs. TPR/FPR/CIR}
    \end{minipage}
    \caption{Impact of the threshold on fingerprint verification (a) and of the number of users $M$ on fingerprint identification (b). 
    }
    \label{fig: collusion_plts}
\end{figure}

\subsection{Robustness Analysis} 
\label{Sec:Robustness Analysis}
\noindent \textbf{Robustness Against Image-level Attacks.}
Fig.~\ref{fig: image attack} shows the robustness against various image-level modifications. We report the Bit Acc under various attacks, as well as the PSNR between the processed images and the original ones. To show the positive impact of the inclusion of the noise layer (which is included for all the methods), we also report the performance of the proposed method without the noise layer augmentation (denoted by '-w/o aug'). We can observe that, as expected, in all the cases, the Bit Acc of the fingerprint decreases as the attack strength increases. The incorporation of the noise layer augmentation during fine-tuning considerably enhances the fingerprint robustness. When the attack is very strong, the Bit Acc reduces. However, the PSNR also drops, meaning that the attack strongly impairs the quality of the images, making them unusable. Regarding the comparison with the existing methods, Per. Norm.~\cite{fei2023robust} and our method are those that achieve the best performance, which is similar in most cases, while for Sta. Sig.~\cite{fernandez2023stable}, the robustness is significantly lower. A possible explanation for this lies in the two-stage procedure adopted by this method, in which the fingerprint decoder and the T2I VAE are optimized independently. The joint fine-tuning of the fingerprint decoder and the VAE in Per. Norm.~\cite{fei2023robust} and our method forces the VAE decoder to embed a more robust fingerprint, allowing the fingerprint decoder to recover it also from processed versions of the image.

We also evaluated our method against fingerprint purification attacks using deep learning-based image compression models (Cheng~\cite{cheng2020learned} and Ballé~\cite{balle2018variational}). As shown in Fig.~\ref{fig: vae}, our method shows moderate robustness, as attackers must degrade image quality with a PSNR drop of over 3 dB to reduce the Bit Acc to ~75\%. Moreover, we found that incorporating Cheng~\cite{cheng2020learned} as augmentation during fingerprint model fine-tuning is highly effective.  
It improves robustness by increasing Bit Acc by over 10\% under the same attack for comparable quality degradation, while also enhancing resistance to unseen purification methods such as the Ballé~\cite{balle2018variational} attack.

\begin{figure}[H]
\centering
    \includegraphics[width=0.48\textwidth]{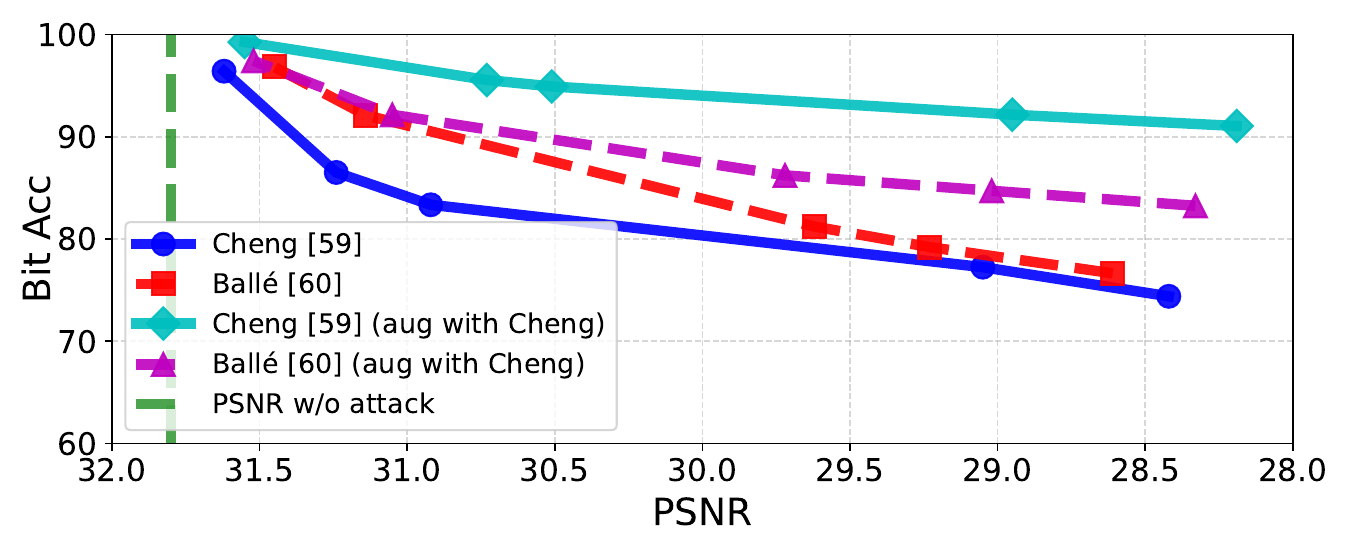}
    \caption{
    Robustness against purification attacks. The trade-off between Bit Acc and post-attack image quality is shown. Augmenting our method with Cheng~\cite{cheng2020learned} can also enhance robustness against both known (Cheng~\cite{cheng2020learned}) and unknown (Ballé~\cite{balle2018variational}) attacks.
    }
    \label{fig: vae}
\end{figure}

\begin{figure}[h]
    \centering
    \begin{minipage}{0.48\textwidth}
        \centering
        \includegraphics[width=\linewidth]{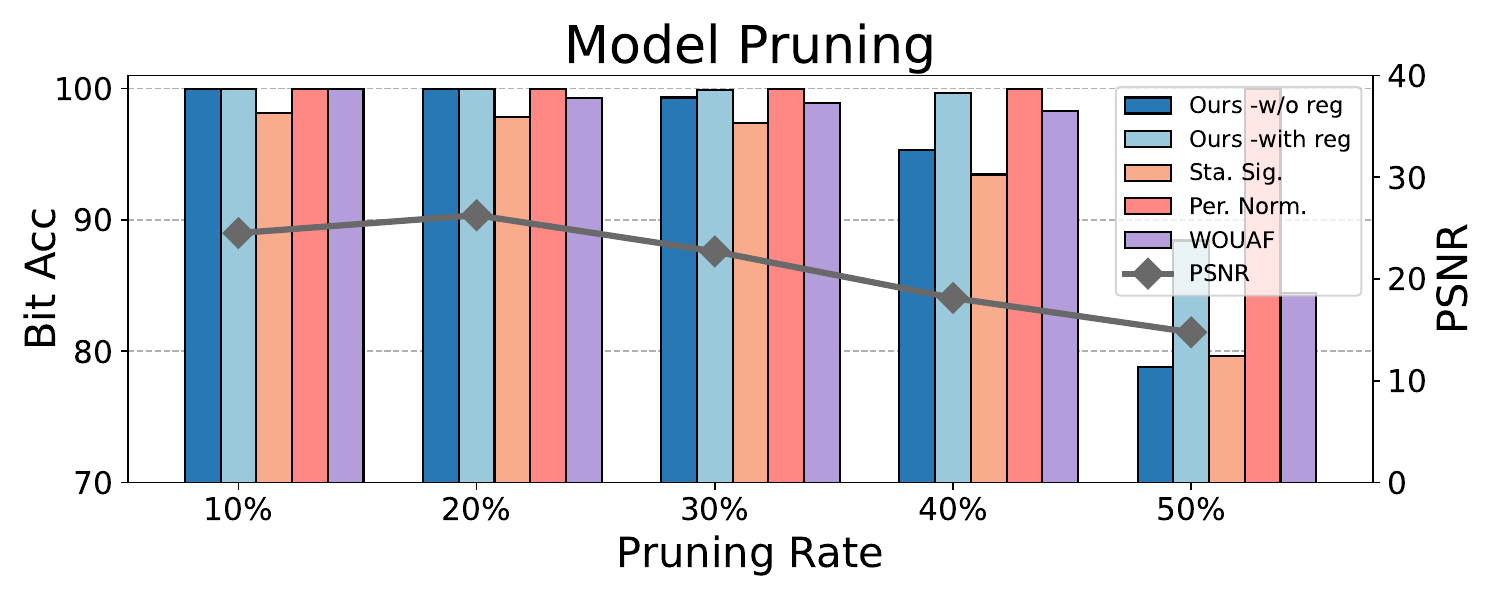}
    \end{minipage}

    \begin{minipage}{0.48\textwidth}
        \centering
        \includegraphics[width=\linewidth]{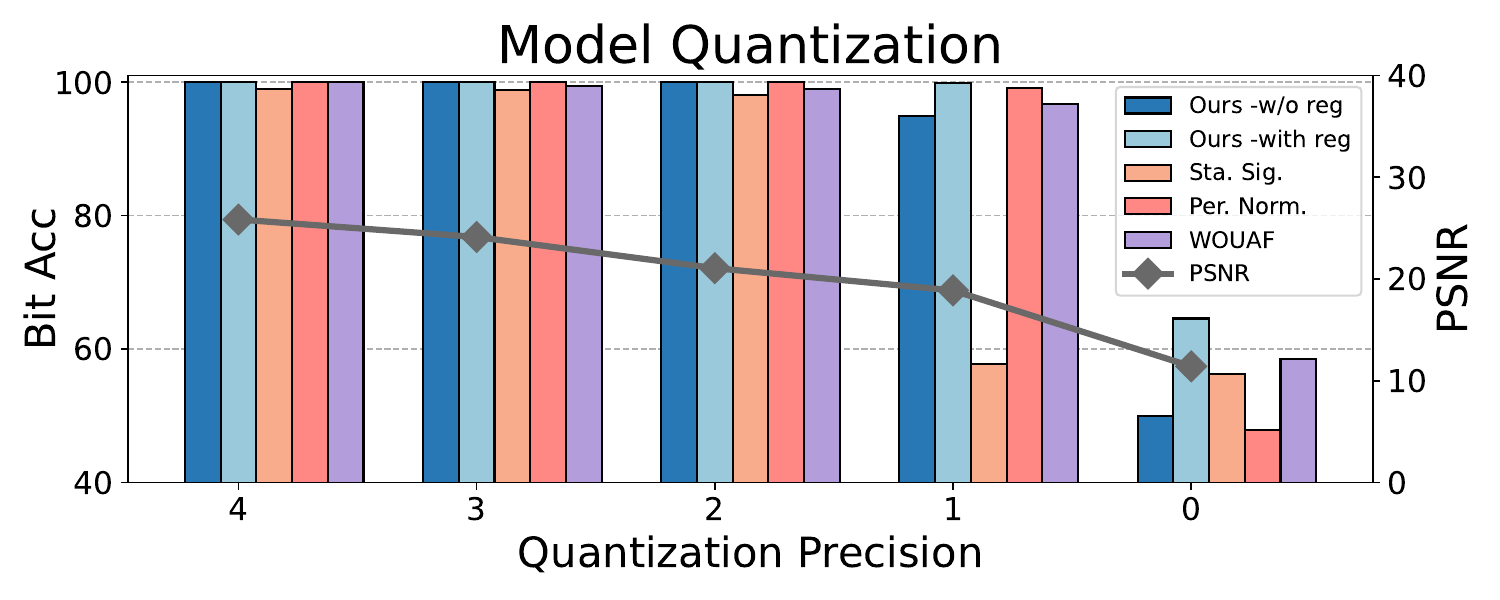}
    \end{minipage}

    \begin{minipage}{0.48\textwidth}
        \centering
        \includegraphics[width=\linewidth]{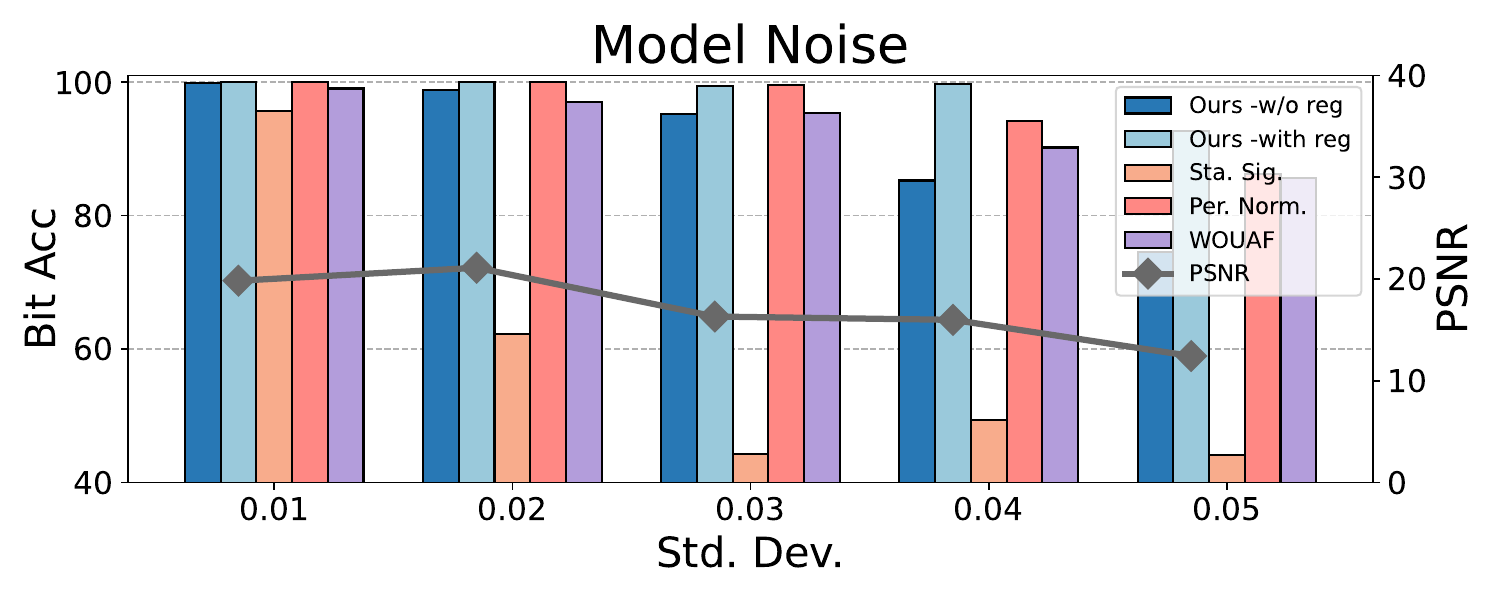}
    \end{minipage}
\caption{Robustness against different model-level attacks. The gray line indicates the PSNR after the attack.}
\label{fig: model attack}
\end{figure}

\begin{figure}[h]
\centering
    \includegraphics[width=0.48\textwidth]{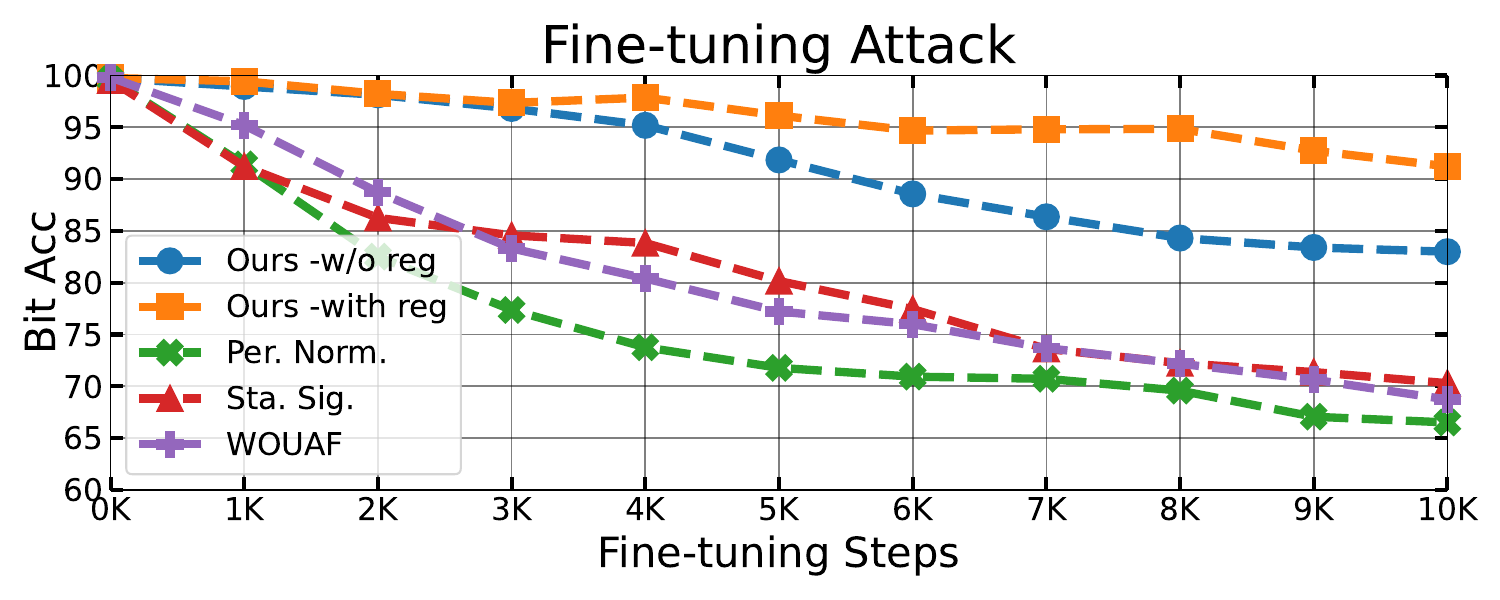}
    \caption{Robustness against fine-tuning attack.}
\label{fig: ft_atk}
\end{figure}

\noindent\textbf{Robustness Against Model-level Attacks.}
We assessed the robustness against modifications of the model in the case of pruning, compression, noise addition, and fine-tuning. 
In detail: 
1) for pruning, a percentage of the parameters having the smallest magnitude is set to zero; 
2) for quantization, we reduce the numerical precision of the model parameters, truncating to a fixed number of significant digits. 
3) for noise addition, the weights of the VAE are perturbed by adding Gaussian noise. 
The results are reported in Fig.~\ref{fig: model attack} (in the figure, levels 4, 3, 2, 1, and 0 correspond to a quantization precision of $10^{-4}$, $10^{-3}$, $10^{-2}$, $10^{-1}$, and $10^{0}$, respectively).
To show the benefit of the proposed regularization, the performance is also reported for our method in the case where the regularization loss ($\mathcal{L}_{reg}$) is not used during the fine-tuning of the VAE (i.e.,  setting $\lambda_{reg}  = 0$ in Eq. \eqref{eq:total_loss}).
The results show that, when the regularization loss is considered, our method outperforms all the other methods, except in the case of model pruning with 50\% of pruning rate, in which case, Per. Norm.~\cite{fei2023robust} gets larger Bit Acc. However, it is worth noticing that when the modification is so strong, the PSNR goes below 15, meaning that the attack severely impacts the model functionality. 

Fig.~\ref{fig: ft_atk} shows the robustness against fine-tuning, where the models are fine-tuned for various iterations without the fingerprinting loss. We observe that  Per. Norm., Sta.~\cite{fei2023robust} Sig.~\cite{fernandez2023stable}, and WOUAF~\cite{kim2024wouaf} all have limited robustness. In contrast, our method is very robust, and after 10,000 steps of fine-tuning, the Bit Acc is still above 90\% (and is approximately 7\% lower when the regularization term $\mathcal{L}_{reg}$ is not considered).

\begin{table}[htbp]
\centering
\caption{
Robustness of the fingerprinted model under structural attacks on the COCO dataset.
}
\label{tab:white_box_attacks}
\resizebox{0.9\linewidth}{!}{
\begin{tabular}{@{}lcccc@{}}
\toprule
\textbf{Attack} & \textbf{PSNR}$_b$ & \textbf{FID} & \textbf{Bit Acc} \\ 
\midrule
No Attack                 & 31.80       & 24.03       & 99.57                          \\
PNM Removal              & 8.45        & 185.22      & 51.20                     \\
PNM Reinitialization           & 12.21       & 95.9        & 50.30                \\
\bottomrule
\end{tabular}
}
\end{table}

Finally, Table~\ref{tab:white_box_attacks} evaluates the robustness against intentional structural attacks. We define PNM removal as bypassing the module during inference and reinitialization as resetting its weights to random noise. While both attacks reduce the fingerprint extraction accuracy to approximately 50\%, they induce a catastrophic collapse in generation quality. For instance, PNM removal degrades the reconstruction PSNR from 31.80 dB to 8.45 dB and surges the FID to 185.22. This strong degradation occurs because the PNM is entangled with the VAE's feature representations in the fingerprint fine-tuning phase. This indicates that attackers must either retain a functional but traceable model or destroy the fingerprint at the cost of rendering the model useless, while recovering such severely degraded image quality via fine-tuning demands considerable effort.

\subsection{Ablation Study}
\label{Sec:Ablation Study}
\textbf{Quality-Effectiveness Trade-off.}
We further study the trade-off between image quality and fingerprint effectiveness by changing the weight of the fingerprint loss during fine-tuning. Specifically, we set the weight of the fingerprinting loss $\lambda_m$ to 10, 1, and 0.1, and report the corresponding image quality and Bit Acc on the COCO dataset (T2I). From Table~\ref{table: acc_trade} we observe that a larger $\lambda_m$ results in a degradation of image quality, which is up to 1 dB of PSNR, passing from 10 to 0.1. In terms of fingerprinting performance, passing from $\lambda_m = 0.1$ to $10$ yields only a slight improvement in Bit Acc. However, using a large $\lambda_m$ is beneficial for fingerprint robustness, especially against JPEG compression, in which case the Bit Acc improvement is above 4\%.

\begin{table}[ht]
    \small
    \renewcommand{\arraystretch}{1.2}
    \centering
    \caption{Image reconstruction quality and Bit Acc (\%, $\uparrow$) for different fingerprinting loss weight $\lambda_{\text{m}}$.
    }
    \begin{tabular}{m{1.7cm}<{\centering}|m{1.cm}<{\centering}m{0.9cm}<{\centering}m{1.cm}<{\centering}m{0.8cm}<{\centering}m{0.8cm}<{\centering}}
    \toprule
    \textbf{$\lambda_{\text{m}}$}  & \textbf{PSNR}$_b$ &  \textbf{SSIM}$_b$ & \textbf{no Attack} & \textbf{Crop} & \textbf{JPEG} \\ \midrule
    10  & 30.92 &   0.86 & 99.85 & 91.42  & 94.65 \\
    1  & 31.80 &   0.88 & 99.57 & 90.53  & 92.04 \\
    0.1  & 31.95 &   0.88 & 99.45 & 90.25  & 90.37 \\
    \bottomrule
    \end{tabular}
    \label{table: acc_trade}
\end{table}

\textbf{Fingerprint Decoder Architecture.}
In all the experiments reported so far, the fingerprint decoder $\mathcal{W}$  is based on EfficientNet-B0.
In this section, we study the impact of using a different architecture to implement the decoder. Specifically, we additionally trained other fingerprint decoders by considering different architectures, including ResNet-18 and ResNet-50, and used them to fine-tune the VAE decoder. All decoders were initialized with ImageNet pre-trained weights. In Table~\ref{table: arc_acc}, we report the results in terms of image reconstruction quality and Bit Acc on COCO. It can be observed that the differences in image reconstruction performance of the VAE are minimal, with fluctuations in PSNR of approximately 0.1. The fingerprint extraction accuracy is also similar in the three cases. The EfficientNet-B0-based decoder exhibits a slight advantage in terms of robustness against JPEG compression, with a Bit Acc more than 1\% higher.

\begin{table}[ht]
    \small
    \renewcommand{\arraystretch}{1.}
    \centering
    \caption{Impact of fingerprint decoder architecture on image reconstruction quality and Bit Acc (\%, $\uparrow$)}
    \begin{tabular}{m{2.1cm}<{\centering}|m{.7cm}<{\centering}m{0.7cm}<{\centering}m{1.cm}<{\centering}m{0.7cm}<{\centering}m{0.7cm}<{\centering}}
    \toprule
    \textbf{Architecture}  & \textbf{PSNR}$_b$ &  \textbf{SSIM}$_b$ & \textbf{no Attack} & \textbf{Crop} & \textbf{JPEG} \\ \midrule
    EfficientNet-B0     & 31.80 &   0.88    & 99.57     & 90.53  & 92.04 \\
    ResNet18            & 31.72 &   0.88    & 99.56     & 90.15  & 91.20 \\
    ResNet50            & 31.88 &   0.88    & 99.52     & 90.40  & 91.07 \\
    \bottomrule
    \end{tabular}
    \label{table: arc_acc}
\end{table}

\textbf{Complexity Analysis.}
We evaluated the computational overhead of the proposed method by measuring the average image generation time over 10,000 images. On a single Nvidia A100 GPU, for the no-fingerprint model, the average time on COCO and ImageNet is 4,807 ms and 4,646 ms, respectively. For our PNM fingerprinted model, the corresponding times are 4,818 ms and 4,708 ms, with an increase of only 11 ms and 62 ms, i.e., within 1.5\%. Moreover, in terms of model parameters, PNM introduces only additional $4C$ parameters ($C$ parameters respectively for $\beta$, $\gamma$, conv1 and conv2).
\section{Anti-Collusion Analysis}
\label{Sec: Anti-collusion Evaluations}

In this section, we reveal the widespread systemic vulnerability in current methods: multiple attackers can easily synthesize a new model whose fingerprint does not reliably match that of any of the participating parties, via model collusion. 
Parameter-level collusion represents a unique
attack in modern open-weight or locally-licensed deployments. Since it requires no retraining and preserves generation quality, it represents a dangerous zero-cost threat to IPR protection. We demonstrate that the proposed ACT can address such a threat. Instead of relying on post-hoc accountability, ACT operates as a proactive defense, preventing colluders from synthesizing an effective model.
We first focus on the common case of collusion attacks using linear parameter averaging~\cite{wortsman2022model}, then we also consider the case in which more advanced non-linear collusion attacks are adopted.

\subsection{Collusion Attack Settings}
\label{sec:Collusion Attack Settings}
We consider both 2-party and multi-party collusion, as well as linear and non-linear attacks.
In the simple case of 2-party linear collusion, let $\mathcal{M}^{(a)}$ and $\mathcal{M}^{(b)}$ be the fingerprinted model distributed to users $a$ and $b$ with user-specific fingerprints $\bm{m}^{(a)}$ and $\bm{m}^{(b)}$. The attacker constructs a colluded model $\mathcal{M}_m^{(atk)}$ as $\mathcal{M}_m^{(atk)} = \alpha \mathcal{M}_m^{(a)} + (1-\alpha) \mathcal{M}_m^{(b)}$, where $0 <\alpha <1$. We vary $\alpha$ from 0 to 1 to simulate different interpolation scenarios and assess how the fingerprinting performance and image generation quality are affected. We further extend the evaluation to multi-party linear collusion, where $n$ users collaborate to produce a colluded model. In this case, the attacker computes a weighted sum of $N$ fingerprinted models: $\mathcal{M}^{(atk)}_m = \sum_{i=1}^{N} \alpha^i \mathcal{M}^{(i)}_m$, where $\sum_{i=1}^{N} \alpha^i = 1$ and $\alpha^i > 0$. We use $\alpha^i = 1/n$ to simulate equal contributions from all colluders. We also evaluate our method on non-linear collusion attacks, the details are presented in Sec.~\ref{sec:Non-linear Collusion}.

\subsection{Evaluation under Linear Collusion Attacks}
\label{sec:linear Collusion}
In this subsection, we evaluate the robustness of the proposed method against linear collusion attacks, considering both 2-party and multi-party scenarios. We provide a comparative analysis with state-of-the-art fingerprinting methods to demonstrate the effectiveness of the proposed ACT.

\subsubsection{Comparative Analysis on 2-Party Collusion}
\label{sec:Comparative Analysis on 2-Party Collusion}

We first focus on the 2-party collusion scenario with equal weights ($\alpha=0.5$), which represents the most straightforward attempt to remove fingerprints. 
We evaluate both the image generation quality (FID, based on the COCO dataset) and the Bit Acc between the two user-specific fingerprints  $\bm{m}^{(a)}$ and $\bm{m}^{(b)}$ and the fingerprint extracted from the colluded model, indicated with $\bm{m}^{(atk)}$. Let $p^{(a)} = \text{Acc}(\bm{m}^{(atk)}, \bm{m}^{(a)})$ and $p^{(b)} =  \text{Acc}(\bm{m}^{(atk)}, \bm{m}^{(b)})$.

\begin{table}[h]
\renewcommand{\arraystretch}{1.1}
\centering
\caption{Image generation (FID) and fingerprint verification performance of colluded models for different methods.}
\label{tab:collusion_eval}
\begin{tabular}{m{2.cm}<{\centering}|m{1.cm}<{\centering}m{1.cm}<{\centering}m{1.cm}<{\centering}m{1.cm}<{\centering}}
\toprule
\textbf{Method} & \textbf{FID} & $p^{(a)}$ & $p^{(b)}$ & \textbf{TPR} \\
\midrule
Per. Norm.~\cite{fei2023robust}     & 24.19 & 76.41 &76.26 & 0.534\\
Sta. Sig.~\cite{fernandez2023stable}     & 23.82 & 75.41 & 74.95 & 0.407 \\
WOUAF~\cite{kim2024wouaf}         &  23.65 & 76.77 & 75.20 & 0.469\\
AquaLoRA~\cite{feng2024aqualora}      & 27.79 & 70.59 & 71.42 & 0.451\\
\midrule
Ours w/o ACT  & 23.55 & 74.61 &74.73 & 0.468 \\
Ours with ACT & 79.51 & 75.92 &75.45 & 0.467 \\
\bottomrule
\end{tabular}
\end{table}

\textbf{Vulnerability of Existing Methods.} 
Table~\ref{tab:collusion_eval} shows the results, averaged over 20 pairs of randomly sampled fingerprints $(\bm{m}^{(a)},\bm{m}^{(b)})$. In addition, we compute the TPR of the colluding model under the claim-based verification protocol, at a fixed FPR $10^{-4}$ (the threshold $\tau = 0.85$ used before can also be adopted. The FPR we get would be even lower in this case). We observe that for all methods, two colluding attackers can 
substantially weaken the fingerprint information in the colluded model without degrading the quality of the generated images. Specifically, the matching accuracy with the fingerprints of participants $(a)$  and $(b)$  drops to around 75\%, while in the verification scenario, the TPR at an FPR of $1\times 10 ^{-4}$  falls to only 0.4–0.5. This indicates that the attackers can compromise existing generative model fingerprinting systems without any prior knowledge or sophisticated technical capabilities. 

\textbf{Effectiveness of ACT.} 
In contrast, our method with ACT demonstrates a proactive defense capability. As reported in Table~\ref{tab:collusion_eval}, although the fingerprint verification performance is similarly affected by parameter averaging (which is mathematically inevitable), the ACT causes the image quality to degrade drastically, with the FID surging by over 50 points compared to the baseline. This severe degradation renders the colluded model effectively unusable, thereby precluding 
the attack. Notably, without ACT, our method exhibits similar vulnerability to existing approaches, confirming that ACT is the key component for disruption.

\subsubsection{Impact of Interpolation Factor ($\alpha$)}
\label{sec:alpha_impact}
To further analyze the behavior of the models, we evaluate the performance under varying interpolation factors $\alpha \in [0, 1]$.

\textbf{Fingerprint Matching Analysis.} 
Fig.~\ref{fig:acc_vs_alpha} reports the Bit Accuracy between the extracted fingerprint $\bm{m}^{(atk)}$ and the users' original fingerprints. The results are averaged over 20 pairs, considering 1,000 images each. As expected, as $\alpha$ increases from 0 to 1, $\text{Acc}(\bm{m}^{(atk)}, \bm{m}^{(a)})$ decreases while $\text{Acc}(\bm{m}^{(atk)}, \bm{m}^{(b)})$ increases. Both metrics reach their minimum ($\approx 75\%$) at $\alpha=0.5$, confirming that the fingerprint signal is maximally obfuscated at the midpoint.

\textbf{Image Quality Analysis.} 
The crucial difference lies in the image quality preservation. Fig.~\ref{fig:psnr_vs_alpha} plots the PSNR of generated images against $\alpha$.
For existing methods and ours without ACT, the PSNR remains stable and high across the entire range of $\alpha$, confirming their vulnerability to collusion attacks.
For ours with ACT, the ACT disrupts this connectivity. As shown in Fig.~\ref{fig:psnr_vs_alpha} and the visual examples in Fig.~\ref{fig: vis_coll}, any deviation from the original parameters (i.e., $\alpha \neq 0, 1$) leads to a sharp deterioration in quality. Even a small interpolation (e.g., $\alpha=0.1$) causes severe degradation. This effect is consistent across diverse prompts.

\begin{figure}[H]
    \centering
    \begin{minipage}{0.24\textwidth}
        \centering
        \includegraphics[width=\linewidth]{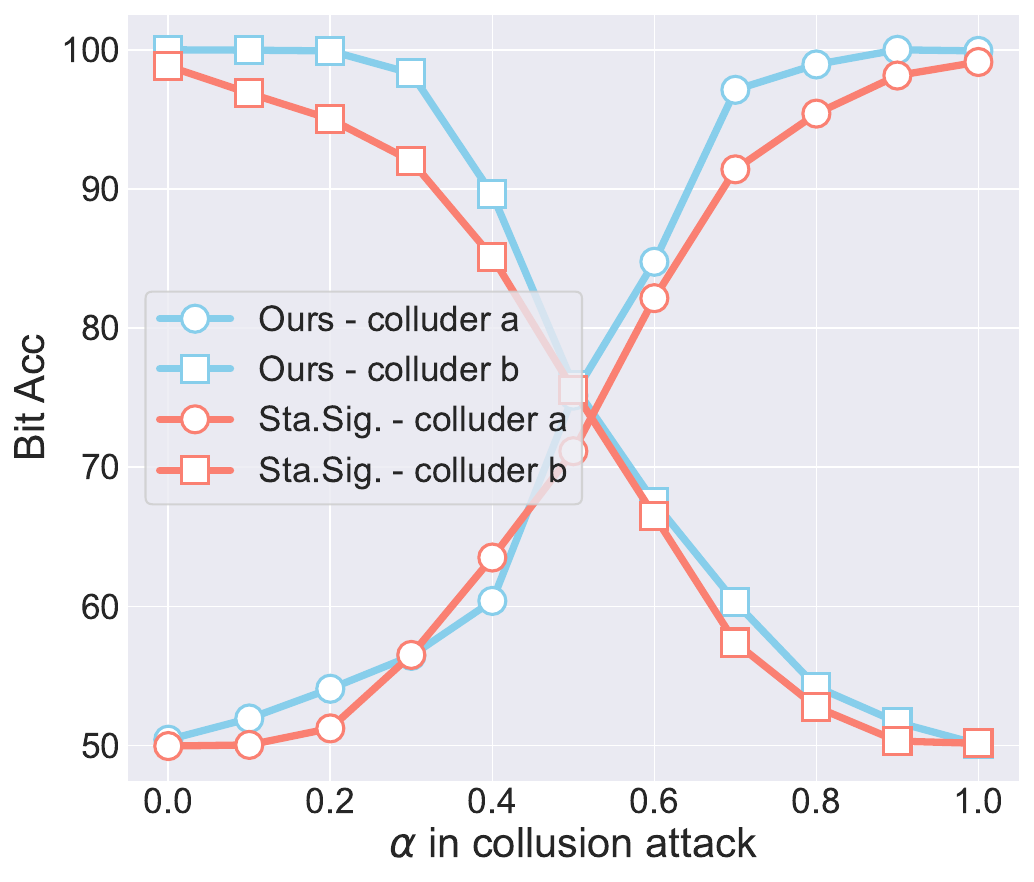}
        \subcaption{Bit Acc vs. $\alpha$}
        \label{fig:acc_vs_alpha}
    \end{minipage}
    \hfill
    \begin{minipage}{0.24\textwidth}
        \centering
        \includegraphics[width=\linewidth]{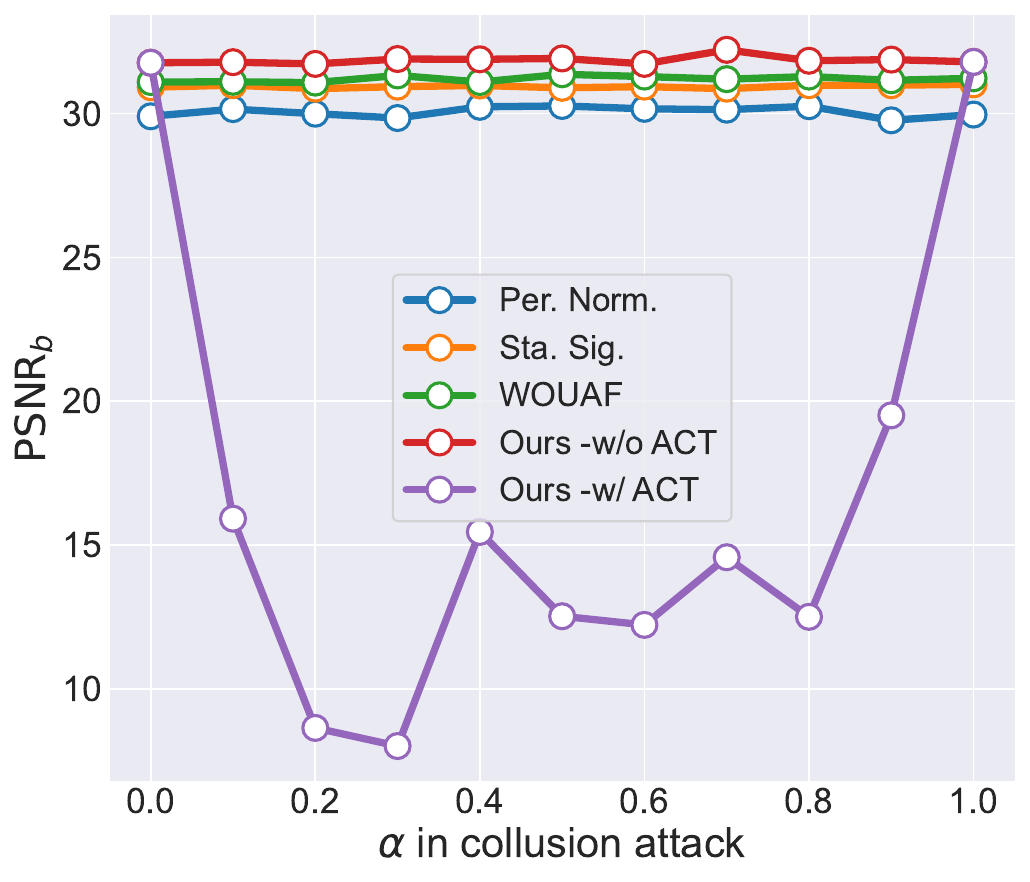}
        \subcaption{PSNR vs. $\alpha$}
        \label{fig:psnr_vs_alpha}
    \end{minipage}
    \caption{PSNR and fingerprint Bit Acc (\%) of colluded models for $\alpha$ (for sake of visibility, in (a), only our method and Sta. Sig. are reported. The behavior is similar in the other cases.}
    \label{fig: collusion_plts}
\end{figure}

\begin{figure}[h]
\centering
    \includegraphics[width=0.49\textwidth]{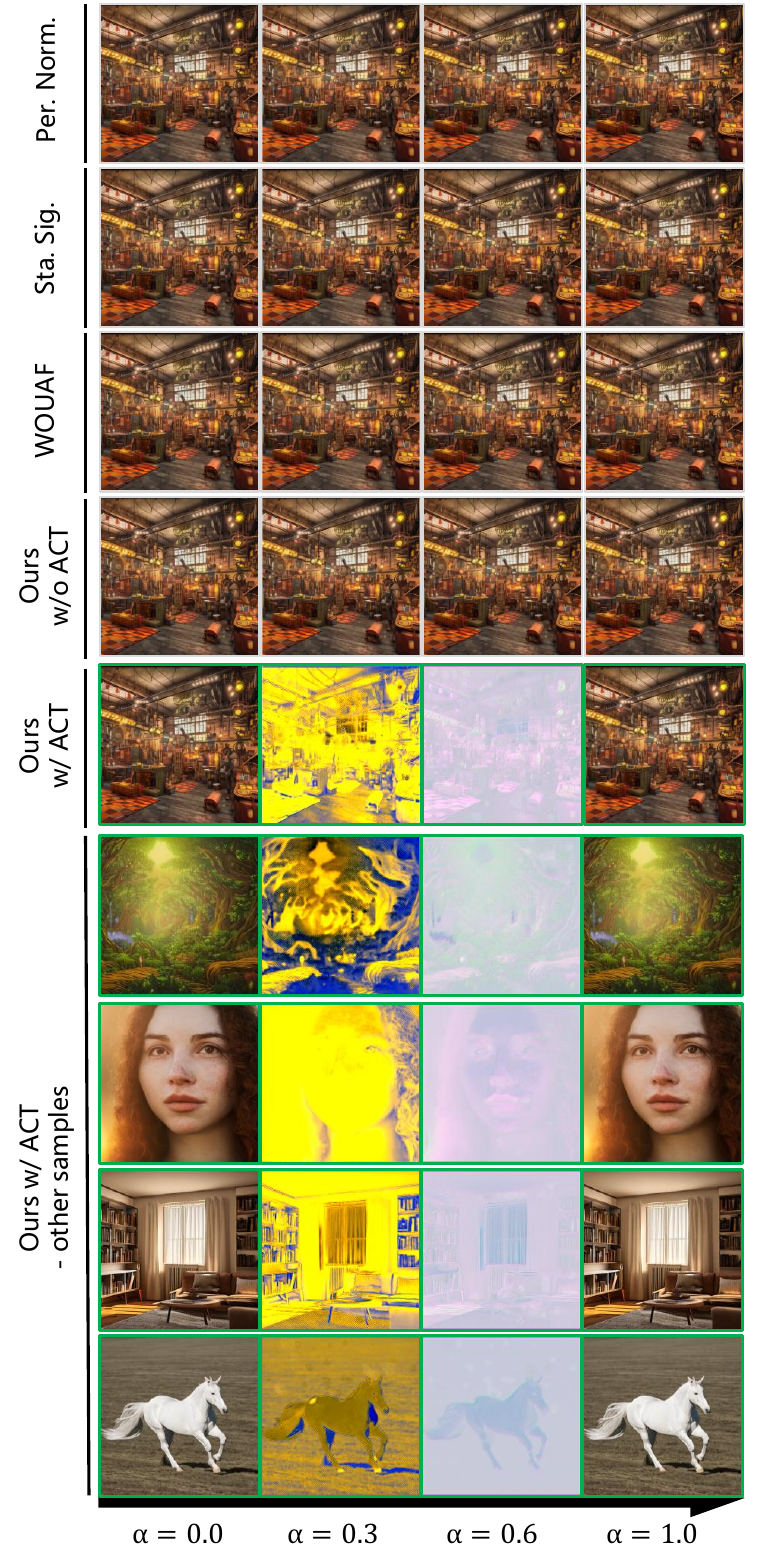}
    \caption{Images generated by 2-party colluded models with different $\alpha$. Row 1–2: Results from Sta. Sig.\cite{fernandez2023stable} and WOUAF\cite{kim2024wouaf}. Row 3: Ours without ACT. Row 4: Ours with ACT (same prompt). Row 5–7: Ours with ACT on different prompts. 
    }
\label{fig: vis_coll}
\end{figure}

\subsubsection{Comparative Analysis on Multi-party Collusion}
Table~\ref{tab:collusion_by_method} reports the results for different $N$.
Existing methods maintain high fidelity for the colluded models, with PSNR above 29 dB, which indicates that the colluded models preserve their functionality (LPIPS is also reported). However, their Bit Acc drops significantly (typically below 70\%), highlighting the general vulnerability of fingerprinting schemes to multi-party collusion attacks. 

In the case of our method, although the Bit Acc also decreases as $N$ increases, a strong degradation in generation capability is observed, which makes the colluded model unusable. For instance, with $N=10$, the PSNR drops to 11.44 dB, and LPIPS rises to 0.65, resulting in an unusable model. These observations further suggest that increasing the number of colluding parties does not enhance the image quality of the colluded model. Instead, the quality remains constrained by the ACT. As a result, the collusion attack remains ineffective, since attackers are unable to both preserve high-fidelity generation and simultaneously degrade fingerprint reliability. These observations further suggest that increasing the number of colluding parties does not enhance the image quality of the colluded model. 


\begin{table}[h]
\renewcommand{\arraystretch}{1.25}
\centering
\caption{Effect of different numbers of colluders ($N$) on generation quality and fingerprint extraction accuracy. Bit Acc: mean $\pm$ std over $N$ colluders. Quality: $\text{PSNR}\downarrow$ / $\text{LPIPS}\downarrow$}
\label{tab:collusion_by_method}
\begin{tabular}{@{}m{1.2cm}<{\centering}@{}|@{}m{1.1cm}<{\centering}@{}|m{1.2cm}<{\centering}m{1.2cm}<{\centering}m{1.2cm}<{\centering}m{1.2cm}<{\centering}}
\toprule
\textbf{Method} & \textbf{Metric} &\textbf{N=3} & \textbf{N=5} & \textbf{N=10} & \textbf{N=20} \\
\midrule

\multirow{2}{*}{\cite{fei2023robust}} & Bit Acc  & 75.69$\pm$8.72  & 68.75$\pm$9.59& 63.54$\pm$7.52 & 57.08$\pm$6.85 \\
  & Quality  & 30.05/0.06 & 29.88/0.06 & 29.97/0.06 & 30.12/0.07 \\ \midrule

\multirow{2}{*}{~\cite{fernandez2023stable} }  & Bit Acc  &71.52$\pm$6.87  &66.66$\pm$6.58 &62.08$\pm$8.05  & 56.04$\pm$8.98 \\ 
& Quality & 30.92/0.08 & 30.87/0.08 & 31.03/0.08 & 30.71/0.08 \\ \midrule

\multirow{2}{*}{~\cite{kim2024wouaf}} & Bit Acc   & 73.61$\pm$2.59 & 69.16$\pm$5.17& 60.83$\pm$7.44 & 57.60$\pm$4.85 \\ 
   & Quality       & 31.04/0.07 & 30.96/0.07 & 31.28/0.08 & 31.15/0.0 \\ \midrule

\multirow{2}{*}{Ours} & Bit Acc   &77.08$\pm$1.70  & 71.25$\pm$9.44&  62.08$\pm$5.57 & 58.85$\pm$9.52 \\ 
  & Quality  & \textbf{12.64/0.59} & \textbf{11.76/0.68} & \textbf{11.44/0.65} & \textbf{10.96/0.70} \\
\bottomrule
\end{tabular}
\end{table}

\begin{figure*}[ht]
    \centering
    \begin{minipage}{0.24\textwidth}
        \centering
        \includegraphics[width=\linewidth]{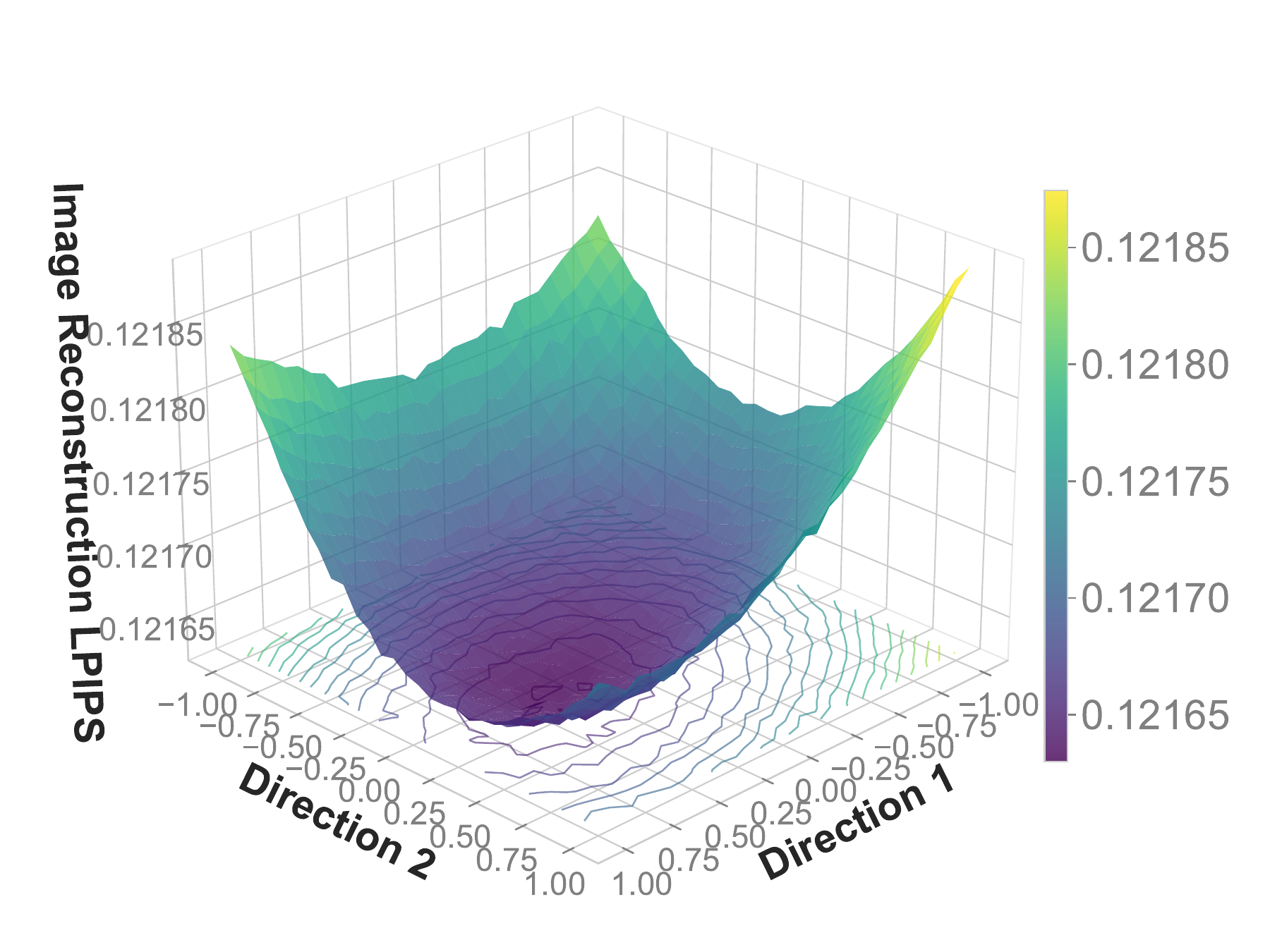}
        \subcaption{Fingerprint 1 (w/o ACT)}
    \end{minipage} \hfill
    \begin{minipage}{0.24\textwidth}
        \centering
        \includegraphics[width=\linewidth]{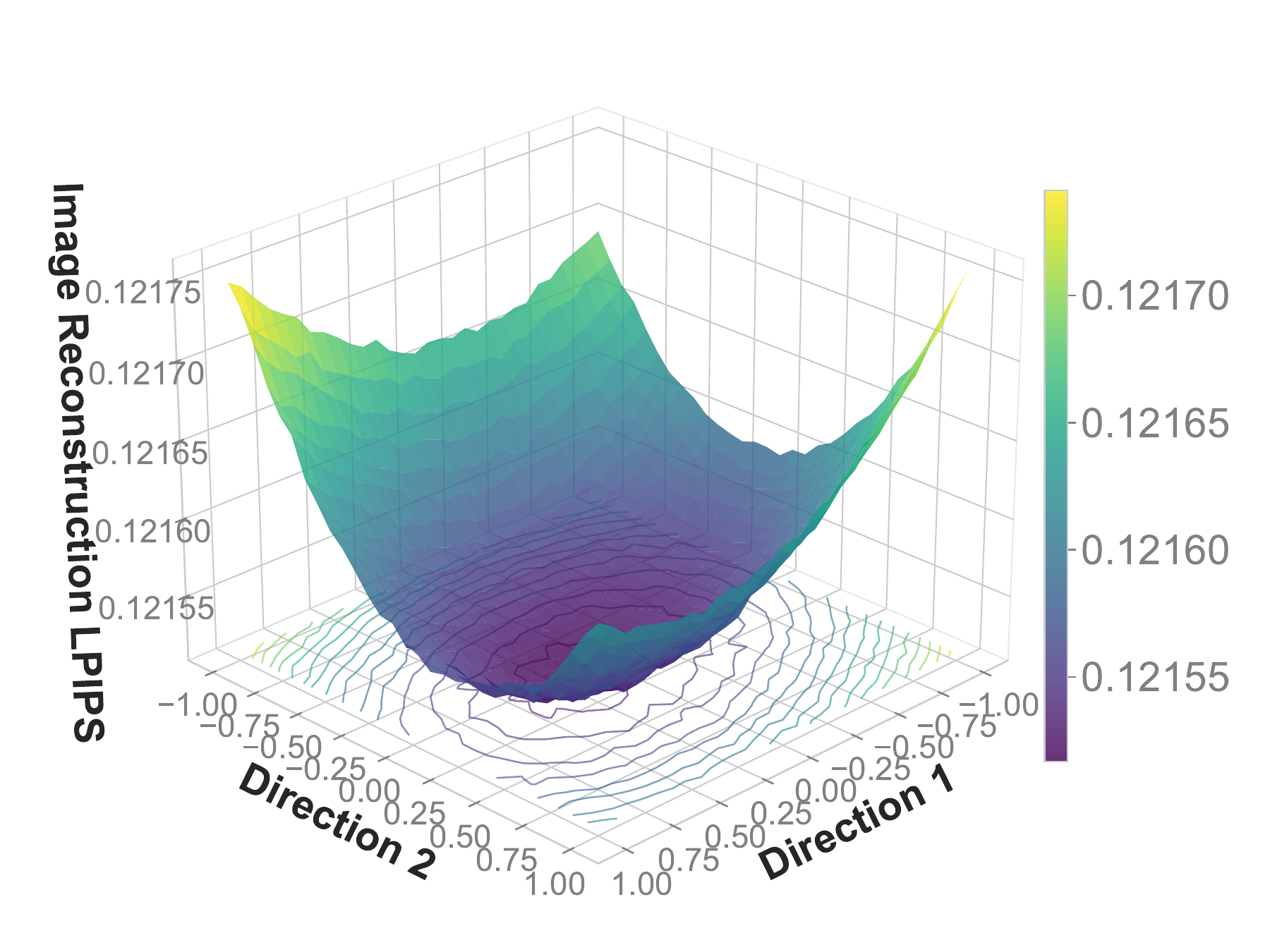}
        \subcaption{Fingerprint 2 (w/o ACT)}
    \end{minipage} \hfill
    \begin{minipage}{0.24\textwidth}
        \centering
        \includegraphics[width=\linewidth]{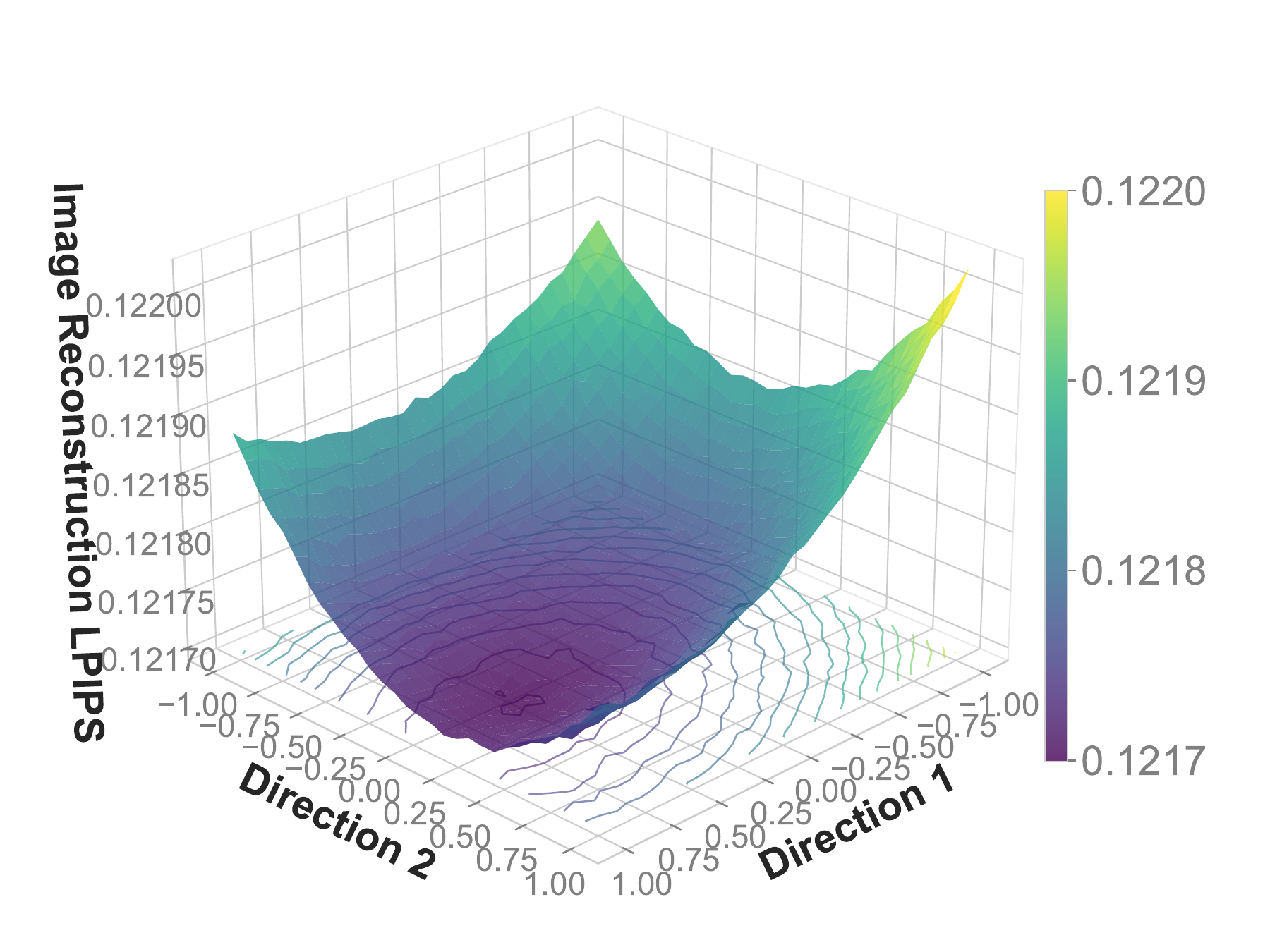}
        \subcaption{Fingerprint 3 (w/o ACT)}
    \end{minipage} \hfill
    \begin{minipage}{0.24\textwidth}
        \centering
        \includegraphics[width=\linewidth]{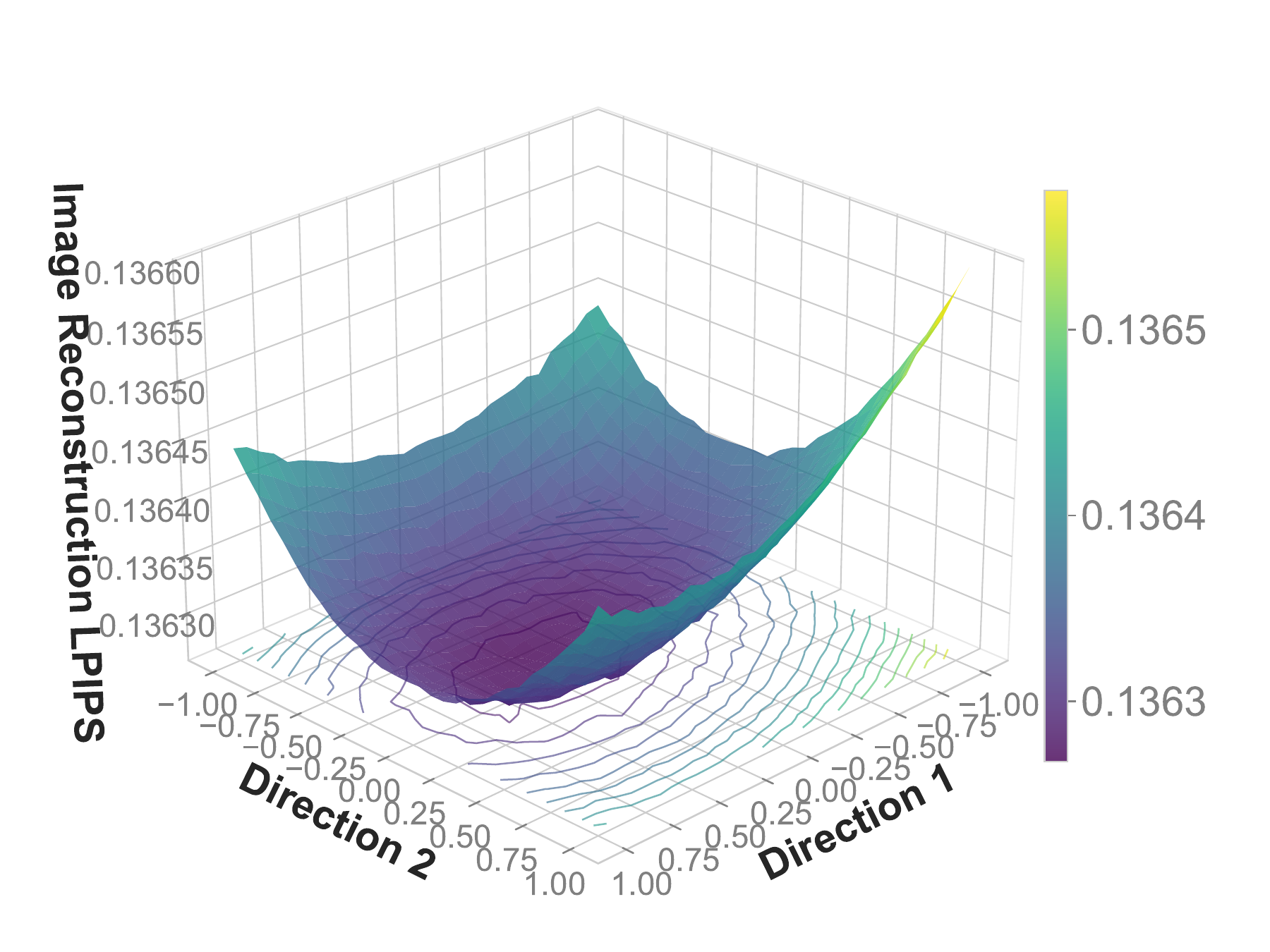}
        \subcaption{Fingerprint 4 (w/o ACT)}
    \end{minipage}

    \begin{minipage}{0.24\textwidth}
        \centering
        \includegraphics[width=\linewidth]{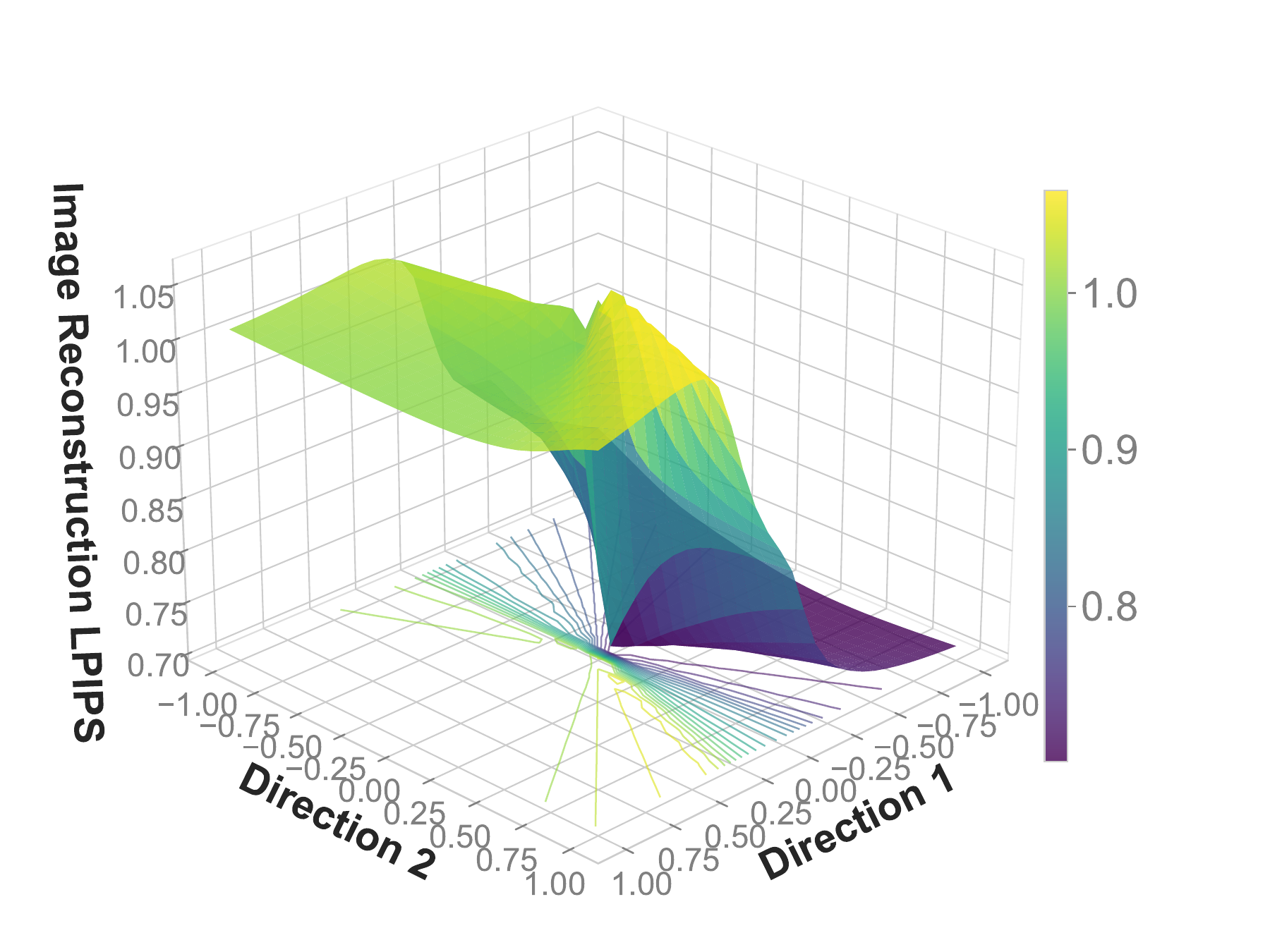}
        \subcaption{Fingerprint 1 (with ACT)}
    \end{minipage} \hfill
    \begin{minipage}{0.24\textwidth}
        \centering
        \includegraphics[width=\linewidth]{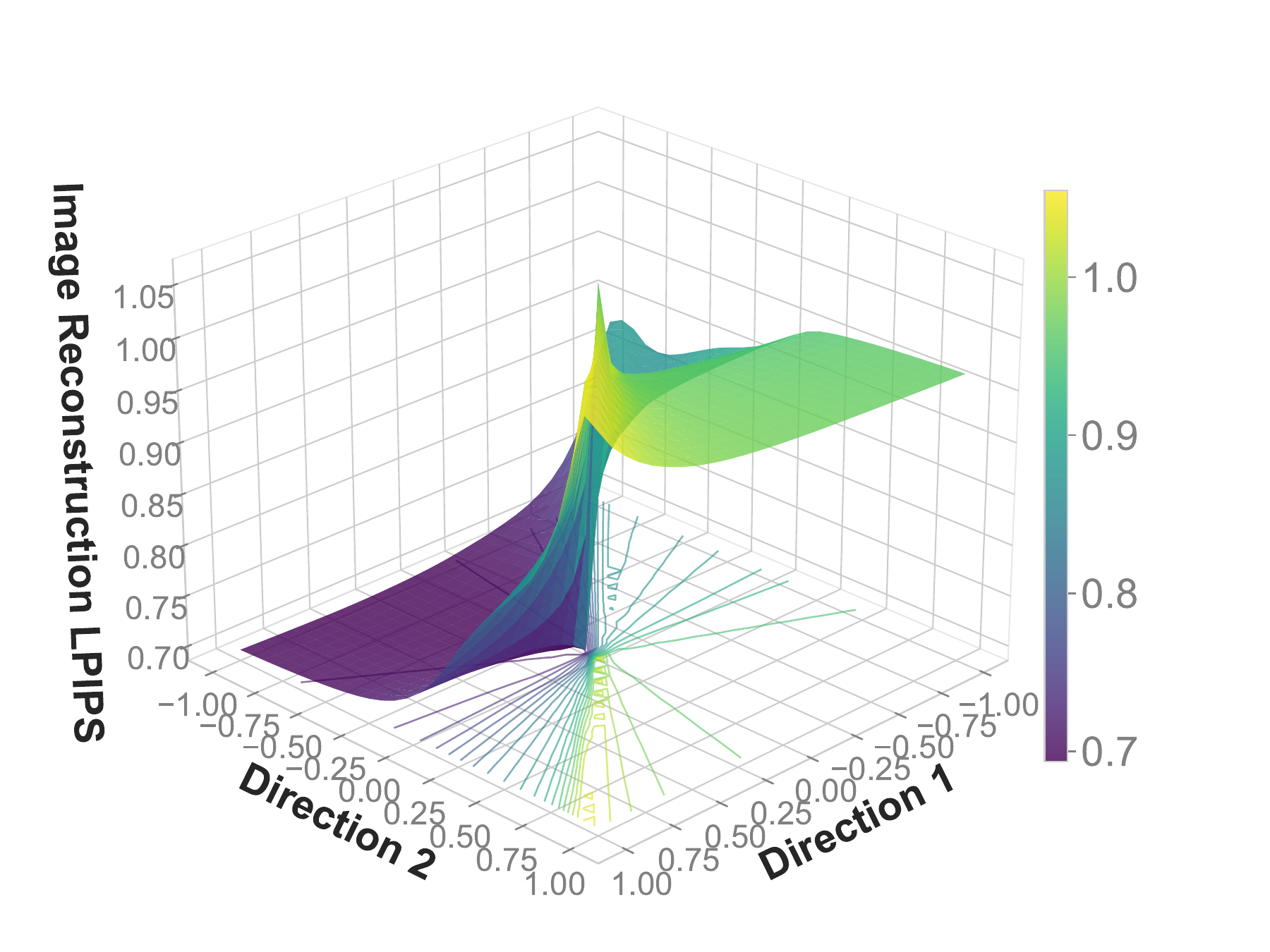}
        \subcaption{Fingerprint 2 (with ACT)}
    \end{minipage} \hfill
    \begin{minipage}{0.24\textwidth}
        \centering
        \includegraphics[width=\linewidth]{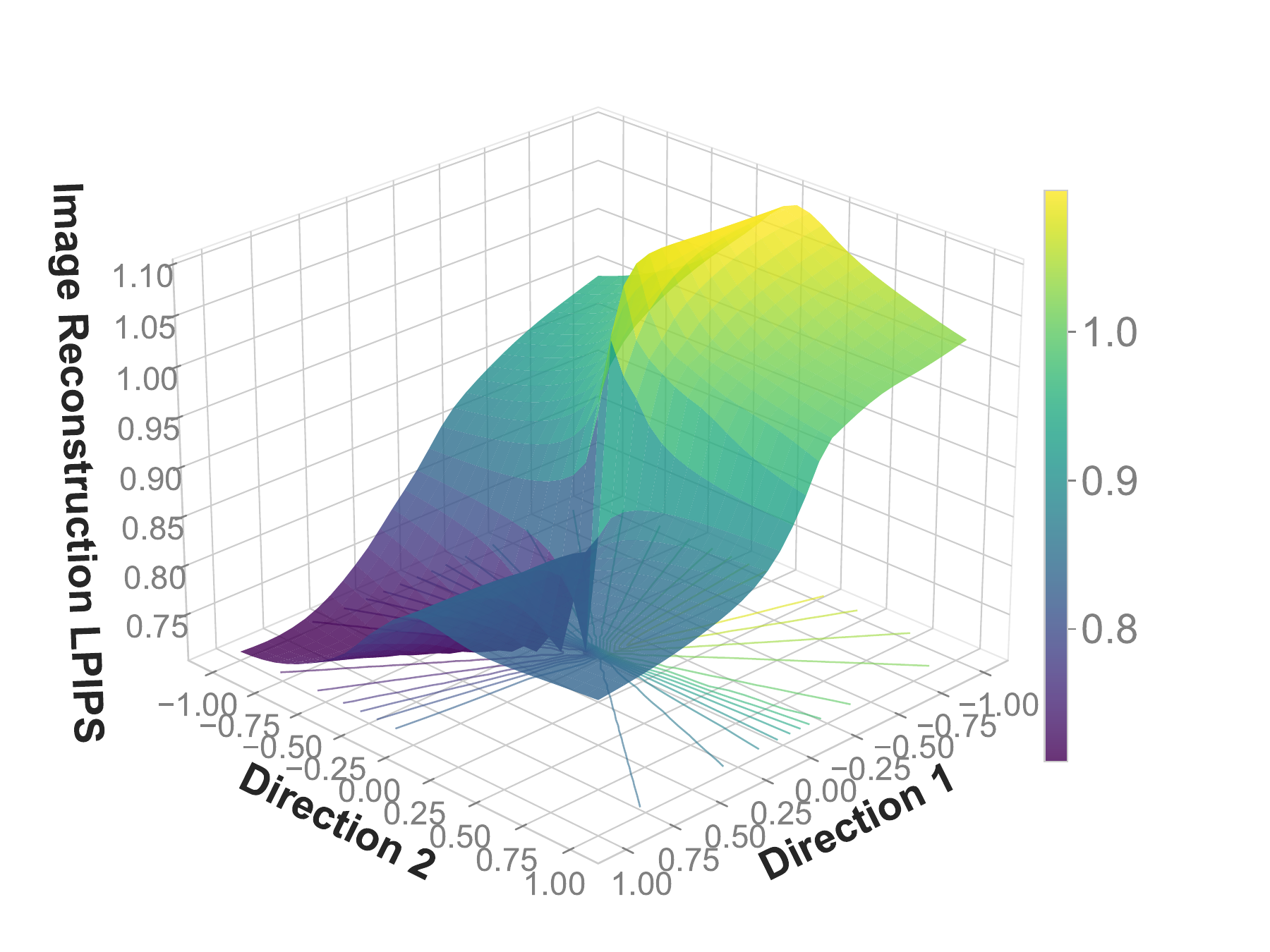}
        \subcaption{Fingerprint 3 (with ACT)}
    \end{minipage} \hfill
    \begin{minipage}{0.24\textwidth}
        \centering
        \includegraphics[width=\linewidth]{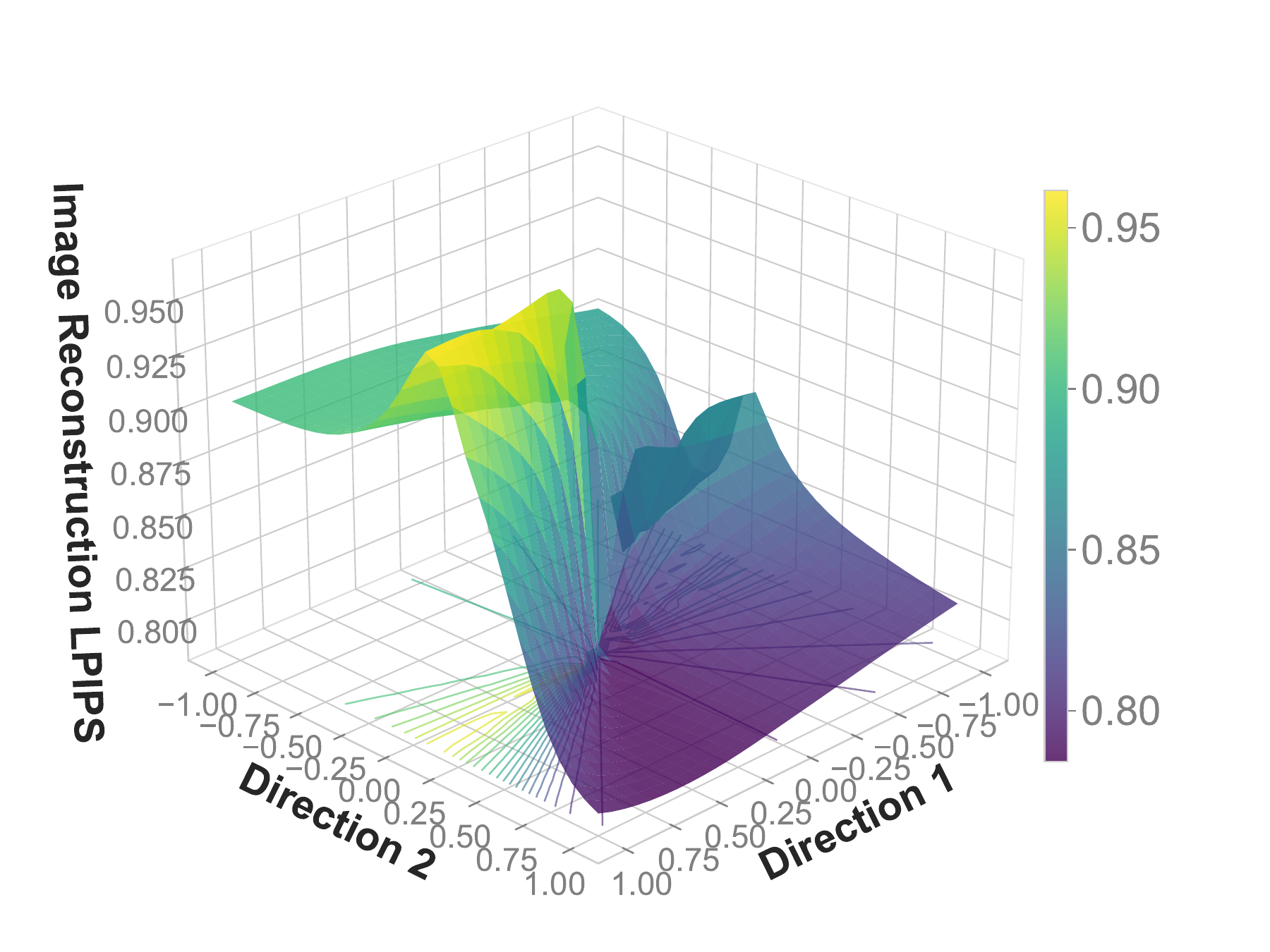}
        \subcaption{Fingerprint 4 (with ACT)}
    \end{minipage}
\caption{LPIPS landscapes of models with different fingerprints, plotted using~\cite{li2018visualizing}. First row: models with different fingerprints without ACT; second row: models with different fingerprints after ACT. Note that in both cases, the loss at (0,0) remains low, but with ACT it rises sharply with even slight perturbations, making this local minimum less visually discernible.}
\label{fig: land}
\end{figure*}

\subsection{Evaluation under Non-linear Collusion Attacks}
\label{sec:Non-linear Collusion}
We also evaluate more advanced non-linear collusion attack strategies, including the \textit{Random} strategy, the \textit{Power mean} based parameter merging strategy, and the \textit{max-absolute-value} parameter merging strategy, described below. 

(1) \textit{Random} strategy: each parameter is selected independently from either model $a$ or $b$ with equal probability.

(2) \textit{Power mean} based parameter merging strategy: Given two fingerprinted models with $\theta_1$ and $\theta_2$, the colluded model's parameters $\theta_{\text{atk}}$ are calculated as:
\begin{equation}
\theta_{\text{atk}} = \left( \frac{\theta_1^p + \theta_2^p}{2} \right)^{\frac{1}{p}},
\label{eq:power-mean}
\end{equation}
where $p > 0$ is a parameter controlling the non-linearity. 

(3) \textit{max-absolute-value} parameter merging strategy. For each parameter $i$, the colluded model selects the parameter with the larger absolute value:
\begin{equation}
\theta_{\text{atk}}^{(i)} = 
\begin{cases}
\theta_1^{(i)}, & \text{if } \left| \theta_1^{(i)} \right| \geq \left| \theta_2^{(i)} \right|, \\
\theta_2^{(i)}, & \text{otherwise}.
\end{cases}
\label{eq:max-abs-merge}
\end{equation}

\begin{table}[htbp]
\centering
\caption{Evaluation under non-linear 2-party collusion.}
\begin{tabular}{m{2.6cm}<{\centering}|m{0.7cm}<{\centering}m{0.7cm}<{\centering}m{1.cm}<{\centering}m{1.cm}<{\centering}}
\toprule
\textbf{Collusion Strategy} & $\textbf{PSNR}_b$ & $\textbf{LPIPS}_b$ & \textbf{$p^{(a)}$}  & \textbf{$p^{(b)}$}  \\
\midrule
Random                   & 5.21& 0.87 &74.51 &76.49 \\
Power Mean ($p=2$)       & 5.51& 0.82 &75.61 &74.72 \\
Max-Abs Merge            & 6.24& 0.84 &75.23 &74.49  \\
\bottomrule
\end{tabular}
\label{tab:nonlinear-collusion}
\end{table}

Table~\ref{tab:nonlinear-collusion} shows the results of our method under advanced non-linear 2-party collusion attacks. We evaluate both visual fidelity (PSNR, LPIPS) and the Bit Acc with respect to each colluding model, i.e., $p^{(a)}$, $p^{(b)}$.
We can see that all evaluated strategies fail to preserve both the visual quality and fingerprints, as indicated by PSNR values between 5 dB and 6 dB, high LPIPS scores ranging from 0.82 to 0.87. Overall, these results confirm that all tested non-linear collusion attacks are ineffective against our fingerprinting method.

\subsection{Impact of ACT on Anti-collusion Capability}
\label{sec:Anti Collusion Analysis}

\begin{table}[htbp]
\centering
\caption{The effectiveness of different transformations in ACT against 2-party and 10-party collusion, i.e., the performances of the colluded model (FID$\uparrow$; PSNR $\downarrow$; LPIPS$\uparrow$).}
\begin{tabular}{m{0.3cm}<{\centering}m{0.3cm}<{\centering}m{0.3cm}<{\centering}|m{0.6cm}<{\centering}m{0.6cm}<{\centering}m{0.6cm}<{\centering}m{0.6cm}<{\centering}m{0.6cm}<{\centering}m{0.6cm}<{\centering}}
\toprule
  \multirow{2}{*}{CP} &  \multirow{2}{*}{CS} &  \multirow{2}{*}{SF}   & \multicolumn{3}{c}{2-party} & \multicolumn{3}{c}{10-party} \\ \cmidrule(lr){4-6} \cmidrule(lr){7-9}
 & &  & FID &PSNR &LPIPS & FID &PSNR &LPIPS  \\
\midrule
 \checkmark&  &   & 72.49 &11.54 &0.51 & 81.21 &11.23& 0.54\\
 &\checkmark&       &76.45 &13.70& 0.45& 80.58& 12.68&0.52\\
 &&      \checkmark & 62.17& 12.92& 0.49&76.62 & 11.15 &0.51 \\
\bottomrule
\end{tabular}
\label{tab: abl_act}
\end{table}

\subsubsection{Ablation Study of ACT}
We analyze the impact of different transformations in ACT on anti-collusion performances. As shown in Table~\ref{tab: abl_act}, we evaluate linear collusion under 2-party and 10-party scenarios, and measure the generative and reconstruction capabilities of the colluded models based on the ImageNet. It can be observed that even a single transformation significantly degrades both image generation quality and reconstruction performance. Specifically, CP achieves the largest increase in FID and LPIPS under both collusion settings, indicating its strong disruption of image quality. These results suggest that each transformation contributes to anti-collusion.

\begin{figure}[h]
    \centering
    \begin{minipage}{0.24\textwidth}
        \centering
        \includegraphics[width=\linewidth]{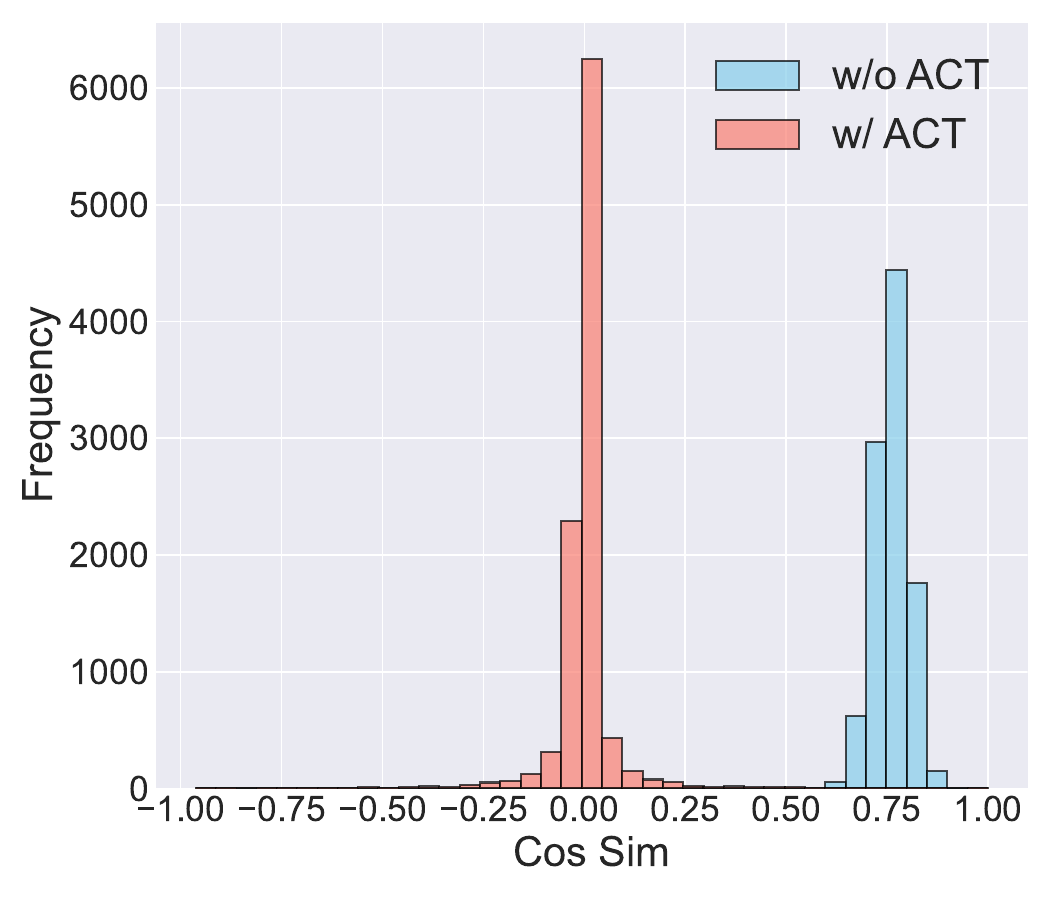}
        \subcaption{Distribution of cosine similarity for $\bm{\gamma}^{(a)}$ and $\bm{\gamma}^{(b)}$, with and without ACT.}
        \label{fig: cosa}
    \end{minipage}
    \hfill
    \begin{minipage}{0.24\textwidth}
        \centering
        \includegraphics[width=\linewidth]{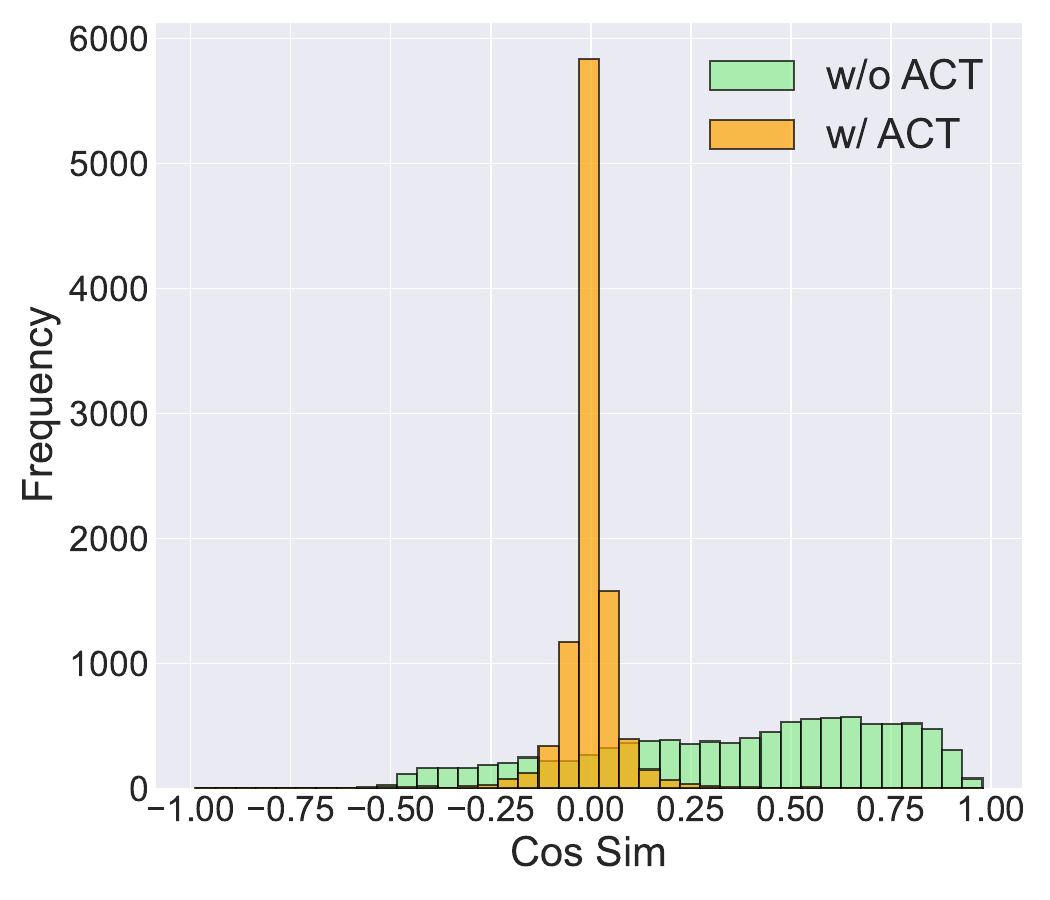}
        \subcaption{Distribution of cosine similarity for $\bm{\beta}^{(a)}$ and $\bm{\beta}^{(b)}$, with and without ACT.}
        \label{fig: cosb}
    \end{minipage}
    \caption{Similarity distributions of model parameters with and without ACT, computed on 10,000 random fingerprint pairs.}
    \label{fig: cos}
\end{figure}

\subsubsection{Analysis of the effectiveness of ACT}
We analyze why ACT is effective against collusion. Collusion attacks based on parameter interpolation rely on the principle of mode connectivity, which assumes that fingerprinted models reside on continuous low-loss manifolds~\cite{garipov2018loss}. The success of such attacks depends primarily on two factors: \textbf{(1)} high parameter similarity among fingerprinted models, and \textbf{(2)} the presence of a connected low-loss manifold enabling smooth interpolation.

ACT is designed to break this by applying user-specific, function-preserving transformations that remap the parameters of each user's model to a distant location in the parameter space. Although each transformed model remains functionally identical to its original version, the parameters for different users become highly dissimilar. Consequently, parameter interpolation of two or more ACT-protected models, which no longer share a connected low-loss path, yields parameters that fall outside any valid functional manifold. This interpolated point corresponds to a region in the loss landscape characterized by high loss values, leading to the observed failure to generate high-fidelity images.

\textbf{Parameter Similarity}. 
First, we analyze how ACT alters the similarity between model parameters. 
Fig.~\ref{fig: cos} illustrates the distributions of the cosine similarity between the normalization parameters of two models of users a and b,
averaged across 10,000 random fingerprint pairs $(\bm{m}^{(a)},\bm{m}^{(b)})$, both with and without ACT.
Specifically, Fig.~\ref{fig: cosa} shows the similarity between $\bm{\gamma}^{(a)}$ and $\bm{\gamma}^{(b)}$, while Fig.~\ref{fig: cosb} reports the same for $\bm{\beta}^{(a)}$ and $\bm{\beta}^{(b)}$. We can observe that without ACT, both fingerprinted parameters show high positive similarity, which makes the model susceptible to collusion attacks, as parameter interpolation between users can produce functional and colluded models.
In contrast, with ACT, the distributions are centered around zero. This demonstrates that ACT substantially reduces the similarity between fingerprinted parameters and enforces near-orthogonality between different user models.

\textbf{Loss Landscapes.}
We also study the influence of ACT on loss landscapes. Following~\cite{li2018visualizing}, we randomly sample two orthogonal directions to perturb PNM and compute the LPIPS loss between the reconstructed image and the input image. In Fig.~\ref{fig: land}, the first row presents the loss landscapes of four models with randomly selected fingerprints without ACT, while the second row shows those with ACT. Without ACT, the landscapes across different fingerprints exhibit high similarity, smooth variations, and continuous gradients.
In contrast, ACT induces substantially steeper surfaces, where loss values increase sharply within the same perturbation range.
These observations reveal that ACT substantially reshapes the loss landscape, making it steeper and more irregular, which affects the model's behavior in the presence of collusion attacks.

In the absence of ACT, the high parameter similarity ensures that an interpolated solution remains in a close neighborhood of the original models. Given the smoothness of loss surfaces, such a small deviation naturally incurs minimal loss, allowing colluding parties to remove fingerprints with little degradation in image quality. With ACT, the interpolation causes a large deviation in the parameter space, due to the near-orthogonality of the models. On the steep and irregular surfaces induced by ACT, such large deviations inevitably trigger large errors and severe quality degradation.

\subsection{Discussion}

\textbf{Practical Applicability.} 
LDM–based architectures are widely adopted in state-of-the-art models, such as SD3~\cite{esser2024scaling}. Although these models employ different diffusion backbones, they still rely on a decoder for image synthesis, which implies that our method can be applied to these cutting-edge models. For video generation models~\cite{zheng2024open}, our method can be extended to 3D convolutions. Moreover, recent model customization methods~\cite{ruiz2023dreambooth} mainly adjust the diffusion backbone while leaving the VAE unchanged, thereby avoiding any conflicts or degradation of fingerprints.

\textbf{Capacity.}
In our experiments, we randomly selected the fingerprint associated with each user. In principle, given a maximum number of users $M$ that the distributor wants to allocate, better performance can be achieved by choosing the $d$-bit fingerprints to be associated with every user by means of a suitable binary fingerprint code~\cite{tardos2008optimal}. In particular, the length of the fingerprint $d$ has an important impact on $M$. We employed a 48-bit fingerprint, yielding $2^{48}$ distinct fingerprints. 
To ensure safety and obtain some error-correction capability, e.g., ensuring that fingerprints from different users differ in at least 12 positions, according to the Hamming bound~\cite{macwilliams1977theory}, we can allocate up to $1.98\times 10 ^{7}$ unique models using a suitable code. Although this is only a bound, it indicates that the 48-bit length can already be sufficient to meet real-world requirements for fingerprinting.


\section{Conclusion}
\label{sec: Conclusion}
We introduced a novel, efficient, and robust framework for image diffusion model fingerprinting, which addresses the challenge posed by collusion attacks. By integrating fingerprint embedding directly into the model parameters through a PNM module, our method enables efficient and scalable creation of uniquely fingerprinted model copies. 
Strong fingerprint robustness against model-level modifications, and, notably, fine-tuning, is achieved by minimizing the fingerprinting loss against worst-case parameter perturbations.
The method incorporates a user-specific ACT inside the PNM, ensuring that any attempt to collude, multiple models result in a non-functional model, thus effectively preventing unauthorized redistribution. 
Future research will focus on optimizing user fingerprint allocation and on the evaluation of the performance in a real scenario using binary fingerprint codes. The optimization of ACT transformation strategies may also be considered.

\bibliographystyle{IEEEtran}
\bibliography{ref}

\begin{IEEEbiography}
[{\includegraphics[width=1in,height=1.25in, clip]{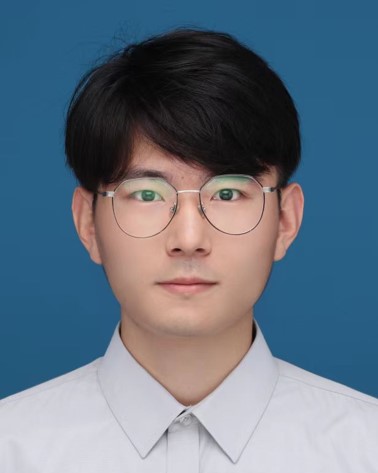}}]
{Jianwei Fei} (Member, IEEE) received the Ph.D. degree in information security from Nanjing University of Information Science and Technology in 2023. From 2024 to 2025, he was a Post-Doctoral Fellow at the University of Macau. He is currently a Post-Doctoral Fellow at the University of Florence, Italy. His research interests include the security of generative AI, proactive and passive attribution of generative models, and digital forensics.
\end{IEEEbiography}

\begin{IEEEbiography}
[{\includegraphics[width=1in,height=1.25in, clip]{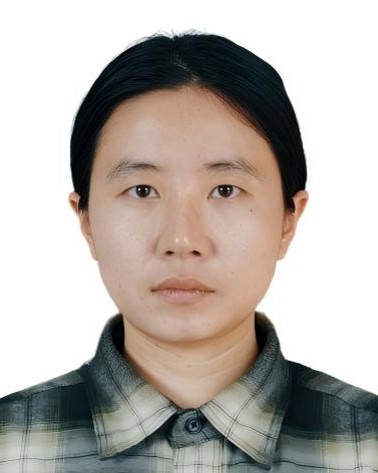}}]
{Yunshu Dai} (Student Member, IEEE) is currently pursuing the Ph.D. degree with the School of Cyber Science and Technology, Sun Yat-sen University. Her research interests include AI security, watermarking, and multimedia forensics.
\end{IEEEbiography}

\begin{IEEEbiography}
[{\includegraphics[width=1in,height=1.25in, clip]{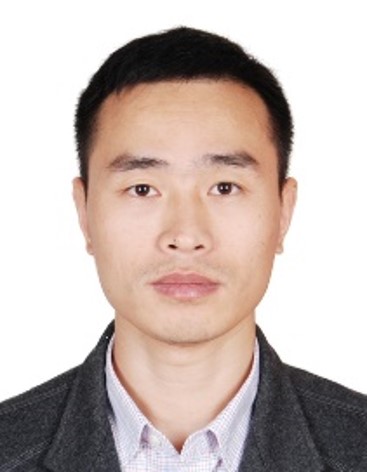}}]
{Zhihua Xia} (Member, IEEE) received his Ph.D. degree in computer science and technology from Hunan University, China, in 2011, and worked successively as a lecturer, an associate professor, and a professor with College of Computer and Software, Nanjing University of Information Science and Technology. He is currently a professor with the College of Cyber Security, Jinan University, China. He was a visiting scholar at New Jersey Institute of Technology, USA, in 2015, and was a visiting professor at Sungkyunkwan University, Korea, in 2016. He serves as a managing editor for IJAACS. His research interests include AI security, cloud computing security, and digital forensics. He has been a member of the IEEE since Mar. 1, 2014.
\end{IEEEbiography}

\begin{IEEEbiography}
[{\includegraphics[width=1in,height=1.25in,clip]{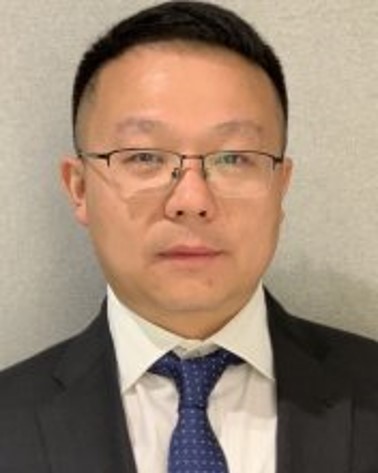}}]
{Xiaochun Cao} (Senior Member, IEEE) received the B.E. and M.E. degrees in computer science from Beihang University (BUAA), China, and the Ph.D. degree in computer science from the University of Central Florida, USA. After graduation, he spent about three years at ObjectVideo Inc., as a Research Scientist. From 2008 to 2012, he was a Professor at Tianjin University. He was a Professor at the Institute of Information Engineering, Chinese Academy of Sciences. He is a Professor at the School of Cyber Science and Technology, Sun Yat-sen University. He has authored and co-authored more than 200 journal and conference papers. He is a fellow of the IET. In 2004 and 2010, he was a recipient of the Piero Zamperoni Best Student Paper Award at the International Conference on Pattern Recognition. His Ph.D. dissertation was nominated for the University-Level Outstanding Dissertation Award. He was on the Editorial Board of IEEE TRANSACTIONS ON CIRCUITS AND SYSTEMS FOR VIDEO TECHNOLOGY and IEEE TRANSACTIONS on MULTIMEDIA. He is on the Editorial Board of IEEE TRANSACTIONS ON IMAGE PROCESSING and IEEE TRANSACTIONS ON PATTERN ANALYSIS AND MACHINE INTELLIGENCE.
\end{IEEEbiography}

\begin{IEEEbiography}
[{\includegraphics[width=1in,height=1.25in,clip]{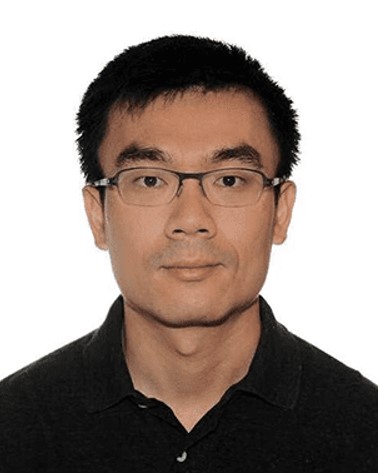}}]
{Jiantao Zhou} (Senior Member, IEEE) received the B.Eng. degree from the Department of Electronic Engineering, Dalian University of Technology, in 2002, the M.Phil. degree from the Department of Radio Engineering, Southeast University, in 2005, and the Ph.D. degree from the Department of Electronic and Computer Engineering, Hong Kong University of Science and Technology, in 2009. He held various research positions with the University of Illinois at Urbana-Champaign, Hong Kong University of Science and Technology, and McMaster University. He is now the Head and Professor with the Department of Computer and Information Science, Faculty of Science and Technology, University of Macau. His research interests include multimedia security and forensics, multimedia signal processing, artificial intelligence, and big data. He holds four granted U.S. patents and two granted Chinese patents. He has coauthored two papers that received the Best Paper Award at the IEEE Pacific-Rim Conference on Multimedia in 2007 and the Best Student Paper Award at the IEEE International Conference on Multimedia and Expo in 2016. He is serving as an Associate Editor for the IEEE TRANSACTIONS ON IMAGE PROCESSING, the IEEE TRANSACTIONS ON MULTIMEDIA, and the IEEE TRANSACTIONS ON DEPENDABLE AND SECURE COMPUTING. 
\end{IEEEbiography}

\begin{IEEEbiography}
[{\includegraphics[width=1in,height=1.25in,clip]{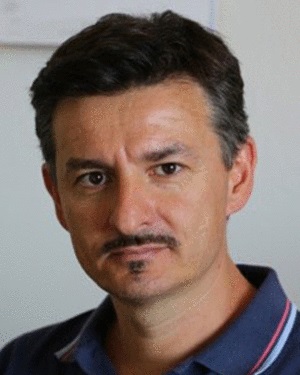}}]
{Alessandro Piva} (Fellow, IEEE) is currently an Associate Professor with the Department of Information Engineering, University of Florence. He is also the Head of the FORLAB, the Multimedia Forensics Laboratory, University of Florence. His research interests include information forensics and security, and image and video processing. In the first topic, he was interested in data hiding, signal processing in the encrypted domain, and image and video forensic techniques. In the second area, he was interested in the design of image and video processing and analysis techniques for cultural heritage, medical, and industrial applications. In the above research topics, he is the coauthor of more than 70 articles published in international journals and 130 papers published in international conference proceedings, with an H-index of 43 according to Scopus.
\end{IEEEbiography}

\begin{IEEEbiography}
[{\includegraphics[width=1in,height=1.25in,clip]{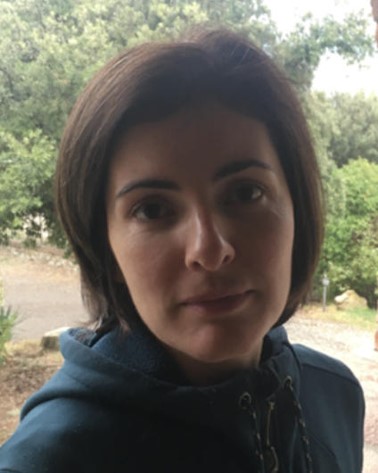}}]
{Benedetta Tondi} (Senior Member, IEEE) received the master’s degree (cum laude) in electronics and communications engineering and the Ph.D. degree in information engineering and mathematical sciences from the University of Siena, Siena, Italy, in 2012 and 2016, respectively. She is currently an Associate Professor with the Department of Information Engineering and Mathematics, University of Siena. From October 2014 to February 2015, she was a Visiting Student with the University of Vigo, Vigo, Spain. Since January 2019, she has been a member of the Information Forensics and Security Technical Committee of the IEEE Signal Processing Society. She is part of the IEEE Young Professionals and IEEE Signal Processing Society and a member of the National Inter-University Consortium for Telecommunications. She has been Area Chair of several IEEE conferences and workshops and  Technical Program Chair of ACM IHMMSEC 2022 and 2026. She is an Associate Editor  of IEEE SIGNAL PROCESSING LETTERS and a Senior Area Editor of IEEE TRANSACTIONS ON INFORMATION FORENSICS AND SECURITY.
\end{IEEEbiography}

\clearpage

\twocolumn[
\begin{center}
    \huge \bfseries Appendix for ``Efficient, Robust, and Anti-Collusion Fingerprinting of Image Diffusion Models''\par
    \vspace{2em}
\end{center}
]

\appendices

\section{Theoretical Proofs of Output Invariance for ACT}
\label{app:invariance_proofs}
We provide the detailed mathematical proofs demonstrating that 
Channel Permutation, Parameter Scaling, and Sign Flip, included in the proposed Anti-Collusion Transformations (ACT),  strictly preserve the input-output mapping of the Personalized Normalization Module (PNM).

\subsection{Proof of Channel Permutation Invariance}
\label{app:proof_cp}
We first formalize the invariance property of the channel-wise parameter permutation (CP).

\begin{theorem}[Channel Permutation Invariance]
\label{theorem:channel_perm}
For any permutation $\pi: [C] \to [C]$ and input feature map $\bm{F}^{(0)} \in \mathbb{R}^{C \times h \times w}$, the channel-wise parameter permutation transformation preserves the PNM output. That is, if $(\widetilde{\bm{W}}^{(1)}, \widetilde{\bm{W}}^{(2)}, \widetilde{\bm{\gamma}}, \widetilde{\bm{\beta}})$ are the permuted parameters defined by Eq.~\eqref{eq:cp1}, then
\begin{equation}
\begin{aligned}
\widetilde{\bm{F}}^{(2)}_c = \bm{F}^{(2)}_c, \quad \forall c \in [C].
\end{aligned}
\end{equation}
\end{theorem}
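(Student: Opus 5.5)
The argument is a direct computation with the channel-wise formulas \eqref{eq:conv1} and \eqref{eq:conv2}. First I will compute the intermediate feature map of the permuted PNM. For every $c \in [C]$, Eq.~\eqref{eq:cp1} gives $\widetilde{\bm{W}}^{(1)}_{c,i} = \bm{W}^{(1)}_{\pi(c),i}$, $\widetilde{\bm{\gamma}}_c = \bm{\gamma}_{\pi(c)}$ and $\widetilde{\bm{\beta}}_c = \bm{\beta}_{\pi(c)}$. Substituting these into \eqref{eq:conv1} yields
\begin{equation*}
\widetilde{\bm{F}}^{(1)}_c = \bm{\gamma}_{\pi(c)} \cdot \sum_{i=1}^{C} \bm{F}^{(0)}_i \ast \bm{W}^{(1)}_{\pi(c),i} + \bm{\beta}_{\pi(c)} = \bm{F}^{(1)}_{\pi(c)} .
\end{equation*}
So the transformed intermediate map is simply a channel permutation of the original one. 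The input channel index $i$ of $\bm{W}^{(1)}$ is left untouched, which is why the input $\bm{F}^{(0)}$ requires no reordering. The broadcasting of $\bm{\gamma}$ and $\bm{\beta}$ is per channel, so it commutes with the permutation.

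Next I will substitute this into \eqref{eq:conv2} using $\widetilde{\bm{W}}^{(2)}_{c,j} = \bm{W}^{(2)}_{c,\pi(j)}$:
\begin{equation*}
\widetilde{\bm{F}}^{(2)}_c = \sum_{j=1}^{C} \widetilde{\bm{F}}^{(1)}_j \ast \widetilde{\bm{W}}^{(2)}_{c,j} = \sum_{j=1}^{C} \bm{F}^{(1)}_{\pi(j)} \ast \bm{W}^{(2)}_{c,\pi(j)} .
\end{equation*}
The last step is a reindexing $j' = \pi(j)$. Since $\pi$ is a bijection on $[C]$, the sum over $j$ equals the sum over $j'$. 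The result is $\sum_{j'} \bm{F}^{(1)}_{j'} \ast \bm{W}^{(2)}_{c,j'} = \bm{F}^{(2)}_c$ for every $c$, which is the claim. The output filters of $\bm{W}^{(2)}$ are not permuted, so no final reordering is needed.

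There is no real obstacle here. The only point needing care is index bookkeeping: the permutation must act on the output-filter index of $\bm{W}^{(1)}$ and on the input-channel index of $\bm{W}^{(2)}$ consistently with $\bm{\gamma}$ and $\bm{\beta}$. I will also explicitly note that convolution is linear and acts independently per channel pair, so the reindexing of a finite sum is legitimate.
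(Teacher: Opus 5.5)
Your proposal is correct and follows the paper's proof essentially line for line: you show $\widetilde{\bm{F}}^{(1)}_c = \bm{F}^{(1)}_{\pi(c)}$, substitute into the $\text{Conv}_2$ formula with $\widetilde{\bm{W}}^{(2)}_{c,j} = \bm{W}^{(2)}_{c,\pi(j)}$, and reindex the finite sum using that $\pi$ is a bijection. One small remark: the reindexing step needs only that $\pi$ is a bijection and that addition is commutative, so you do not need to invoke linearity of convolution there.
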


\begin{proof}
We analyze the forward computation under permuted parameters.
For the intermediate features after $\text{Conv}_1$ and normalization, $\forall c \in [C]$:
\begin{equation}
\begin{aligned}
\widetilde{\bm{F}}^{(1)}_c &= \widetilde{\bm{\gamma}}_c \cdot \sum_{i=1}^{C} \bm{F}^{(0)}_i \ast \widetilde{\bm{W}}_{c,i}^{(1)} + \widetilde{\bm{\beta}}_c \\
&= \bm{\gamma}_{\pi(c)} \cdot \sum_{i=1}^{C} \bm{F}^{(0)}_i \ast \bm{W}_{\pi(c),i}^{(1)} + \bm{\beta}_{\pi(c)} = \bm{F}^{(1)}_{\pi(c)}.
\end{aligned}
\end{equation}
For the final output after $\text{Conv}_2$:
\begin{align}
\widetilde{\bm{F}}^{(2)}_c &= \sum_{j=1}^{C} \widetilde{\bm{F}}^{(1)}_j \ast \widetilde{\bm{W}}_{c,j}^{(2)} = \sum_{j=1}^{C} \bm{F}^{(1)}_{\pi(j)} \ast \bm{W}_{c,\pi(j)}^{(2)}.
\end{align}
Since $\pi$ is a bijection, as $j$ ranges over $[C]$, $\pi(j)$ also ranges over $[C]$. Therefore:
\begin{align}
\widetilde{\bm{F}}^{(2)}_c = \sum_{j=1}^{C} \bm{F}^{(1)}_{\pi(j)} \ast \bm{W}_{c,{\pi(j)}}^{(2)} = \bm{F}^{(2)}_c.
\end{align}
Hence, the input-output mapping is preserved.
\end{proof}

\subsection{Proof of Scaling Invariance}
\label{app:proof_sc}
In this section, we show that the parameter scaling (SC) operation does not alter the overall functionality of the PNM.

\begin{theorem}[Scaling Invariance]
\label{theorem:scaling_invariance}
Let $\bm{F}^{(0)} \in \mathbb{R}^{C \times h \times w}$ be an input feature map and let the scaled parameters $(\widetilde{\bm{W}}^{(1)}, \widetilde{\bm{W}}^{(2)}, \widetilde{\bm{\gamma}}, \widetilde{\bm{\beta}})$ be defined as in Eq. \eqref{eq:scal1}, with constraints given by Eq. \eqref{eq:con1}. Then the output of the PNM remains unchanged: $\widetilde{\bm{F}}^{(2)} = \bm{F}^{(2)}$.
\end{theorem}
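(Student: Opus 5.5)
The plan is to mirror the argument used for Theorem~\ref{theorem:channel_perm}: propagate the scaled parameters through the two stages of the PNM forward pass in Eqs.~\eqref{eq:conv1}--\eqref{eq:conv2}, and then show that the per-channel scaling factors cancel by the constraints in Eq.~\eqref{eq:con1}. The only property of convolution I will use is its linearity: $\bm{X}\ast(a\bm{K}) = a(\bm{X}\ast\bm{K})$ and $(a\bm{X})\ast\bm{K} = a(\bm{X}\ast\bm{K})$ for scalars $a$, together with additivity in the input. The broadcast bias $\bm{\beta}_j$ is treated as a constant $h\times w$ map, so that linearity also covers it.

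\textbf{Step 1 (intermediate features).} Substituting Eq.~\eqref{eq:scal1} into Eq.~\eqref{eq:conv1} gives, for each channel $c$,
\begin{equation}
\widetilde{\bm{F}}^{(1)}_c = \bm{\alpha}^{(3)}_c\bm{\gamma}_c \sum_{i=1}^{C} \bm{F}^{(0)}_i \ast \bigl(\bm{\alpha}^{(1)}_c \bm{W}^{(1)}_{c,i}\bigr) + \bm{\alpha}^{(4)}_c\bm{\beta}_c
= \bm{\alpha}^{(1)}_c\bm{\alpha}^{(3)}_c\,\bm{\gamma}_c \sum_{i=1}^{C} \bm{F}^{(0)}_i \ast \bm{W}^{(1)}_{c,i} + \bm{\alpha}^{(4)}_c\bm{\beta}_c .
\end{equation}
Here the scalar $\bm{\alpha}^{(1)}_c$ has been pulled out of the filter by linearity. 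The point to note is that $\widetilde{\bm{F}}^{(1)}_c$ is not simply a rescaling of $\bm{F}^{(1)}_c$, because the multiplicative term and the bias term carry different factors. This is the step where one might expect trouble.

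\textbf{Step 2 (output).} Substituting into Eq.~\eqref{eq:conv2}, and using the fact that the $j$-th input channel of every filter of $\bm{W}^{(2)}$ is scaled by $\bm{\alpha}^{(2)}_j$, I get
\begin{equation}
\widetilde{\bm{F}}^{(2)}_c = \sum_{j=1}^{C} \widetilde{\bm{F}}^{(1)}_j \ast \bigl(\bm{\alpha}^{(2)}_j\bm{W}^{(2)}_{c,j}\bigr)
= \sum_{j=1}^{C}\Bigl(\bm{\alpha}^{(1)}_j\bm{\alpha}^{(2)}_j\bm{\alpha}^{(3)}_j\,\bm{\gamma}_j \sum_{i=1}^{C}\bm{F}^{(0)}_i\ast\bm{W}^{(1)}_{j,i} + \bm{\alpha}^{(2)}_j\bm{\alpha}^{(4)}_j\bm{\beta}_j\Bigr)\ast\bm{W}^{(2)}_{c,j}.
\end{equation}
The resolution of the issue from Step 1 is that the two terms are distributed separately through $\text{Conv}_2$. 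The multiplicative term then carries the factor $\bm{\alpha}^{(1)}_j\bm{\alpha}^{(2)}_j\bm{\alpha}^{(3)}_j$, and the bias term carries $\bm{\alpha}^{(2)}_j\bm{\alpha}^{(4)}_j$. These are exactly the two products fixed to $1$ by Eq.~\eqref{eq:con1}.

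\textbf{Step 3 (conclusion).} Applying Eq.~\eqref{eq:con1}, the bracket reduces to $\bm{F}^{(1)}_j$, and hence $\widetilde{\bm{F}}^{(2)}_c=\sum_j \bm{F}^{(1)}_j\ast\bm{W}^{(2)}_{c,j}=\bm{F}^{(2)}_c$ for every $c$. This is the claim.

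I would also remark that the argument never uses positivity of the $\bm{\alpha}$'s, only the two product identities, which implicitly require every $\bm{\alpha}^{(1)}_i$ and $\bm{\alpha}^{(2)}_i$ to be nonzero. It therefore applies verbatim to the sign-flip vectors in $\{-1,1\}^C$, which yields sign-flip invariance as a corollary.
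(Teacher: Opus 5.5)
Your proposal is correct and follows the paper's proof essentially line for line: you push the scaled parameters through $\text{Conv}_1$ and the normalization, then distribute the two terms separately through $\text{Conv}_2$ so that they pick up the factors $\bm{\alpha}^{(1)}_j\bm{\alpha}^{(2)}_j\bm{\alpha}^{(3)}_j$ and $\bm{\alpha}^{(2)}_j\bm{\alpha}^{(4)}_j$, and finally cancel both via Eq.~\eqref{eq:con1}. Your closing remark, that only the product identities (not positivity) are used, is exactly how the paper obtains sign-flip invariance as a special case.
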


\begin{proof}
We examine the forward pass through the PNM under the scaled parameters. The intermediate features after $\text{Conv}_1$ and normalization are:
\begin{equation}
\small
\begin{aligned}
\widetilde{\bm{F}}^{(1)}_c 
&= \widetilde{\bm{\gamma}}_c \cdot \ \sum_{i=1}^{C} \bm{F}^{(0)}_i \ast \widetilde{\bm{W}}^{(1)}_{c,i} + \widetilde{\bm{\beta}}_c  \\
&= \bm{\alpha}_c^{(3)} \bm{\gamma}_c \cdot  \sum_{i=1}^{C} \bm{F}^{(0)}_i \ast \bm{\alpha}_c^{(1)} \bm{W}^{(1)}_{c,i} + \bm{\alpha}_c^{(4)} \bm{\beta}_c \\
&= \bm{\alpha}_c^{(1)} \bm{\alpha}_c^{(3)} \bm{\gamma}_c \cdot \sum_{i=1}^{C} \bm{F}^{(0)}_i \ast \bm{W}^{(1)}_{c,i} + \bm{\alpha}_c^{(4)} \bm{\beta}_c,
\end{aligned}
\end{equation}
$\forall c \in [C]$.
Then, the final output after the second convolution is
\begin{equation}
\small
\begin{aligned}
\widetilde{\bm{F}}^{(2)}_c 
&= \sum_{j=1}^{C} \widetilde{\bm{F}}^{(1)}_j \ast \widetilde{\bm{W}}^{(2)}_{c,j} \\
&= \sum_{j=1}^{C} \left( \bm{\alpha}_{j}^{(1)} \bm{\alpha}_{j}^{(3)} \bm{\gamma}_j \sum_{i=1}^{C} \bm{F}^{(0)}_i \ast \bm{W}_{j,i}^{(1)} + \bm{\alpha}_{j}^{(4)} \bm{\beta}_j \right) \ast \bm{\alpha}_{j}^{(2)} \bm{W}_{c,j}^{(2)} \\
&= \sum_{j=1}^{C} \bm{\alpha}_{j}^{(1)} \bm{\alpha}_{j}^{(2)} \bm{\alpha}_{j}^{(3)} \cdot \bm{\gamma}_j \sum_{i=1}^{C} \left( \bm{F}^{(0)}_i \ast \bm{W}_{j,i}^{(1)} \right) \ast \bm{W}_{c,j}^{(2)} \\
&\quad + \sum_{j=1}^{C} \bm{\alpha}_{j}^{(2)} \bm{\alpha}_{j}^{(4)} \cdot \bm{\beta}_j \ast \bm{W}_{c,j}^{(2)},
\end{aligned}
\end{equation}
$\forall c \in [C]$.
Applying the constraints from Eq.\eqref{eq:scal1}, where $\bm{\alpha}_{i}^{(1)} \bm{\alpha}_{i}^{(2)} \bm{\alpha}_{i}^{(3)} = 1$ and $\bm{\alpha}_{i}^{(2)} \bm{\alpha}_{i}^{(4)} = 1$ for $ \forall i$, we obtain:
\begin{equation}
\begin{aligned}
\widetilde{\bm{F}}^{(2)}_c &= \sum_{j=1}^{C} \left[\bm{\gamma}_j \sum_{i=1}^{C} \left(\bm{F}^{(0)}_i \ast \bm{W}_{j,i}^{(1)}\right) +  \bm{\beta}_j\right] \ast \bm{W}_{c,j}^{(2)} \\
&= \sum_{j=1}^{C} \bm{F}^{(1)}_j \ast \bm{W}_{c,j}^{(2)} = \bm{F}^{(2)}_c.
\end{aligned}
\end{equation}
Therefore, $\widetilde{\bm{F}}^{(2)} = \bm{F}^{(2)}$, completing the proof.
\end{proof}

\subsection{Proof of Sign Flip Invariance}
\label{app:proof_sf}
Building upon the scaling invariance, we briefly formalize the output preservation for the sign flip operation.

\textbf{Sign Flip (SF).}
We observed that the sign flip is a special case of parameter scaling where the scaling factors are constrained to $\{-1, 1\}$.
Then, from Theorem \ref{theorem:scaling_invariance}, output preservation is guaranteed.

\section{Security Analysis of ACT}
\label{app:security_analysis}
Having established the function-preserving nature of ACT, we present a formal security analysis and derive theoretical complexity bounds against collusion attacks. We first define the threat model and analyze the complexity required for an adversary to successfully carry out such an attack.

\noindent\textbf{Definition 1} (Attacker Model). \textit{We consider colluders who possess $K \ge 2$ fingerprinted model copies $\{\mathcal{M}^{(1)}, \dots, \mathcal{M}^{(K)}\}$ and have full knowledge of the model architecture and the ACT mechanism (White-box setting). However, the attacker does not have access to the private user-specific seeds used to generate the ACT keys. The goal of the attacker is to find a set of transformation parameters to align the models into a parameter space, such that their interpolation yields a model with high generation quality.}

The security of ACT depends on the difficulty of achieving \textit{Parameter Alignment} across different users. We formulate this hardness in the following theorem.

\textbf{Theorem 3} (Complexity of Parameter Alignment). \textit{Given two distinct models $\mathcal{M}^{(a)}$ and $\mathcal{M}^{(b)}$ protected by ACT with channel size $C$, the complexity to align their parameters for a successful collusion attack is $\Omega(C! \cdot 2^{2C})$.}

\textit{Proof.} Since the attacker aims to perform parameter averaging (or interpolation), they must find a transformation mapping $\mathcal{T}$ such that $\mathcal{T}(\theta^{(b)}) \approx \theta^{(a)}$ (or map both to a common space). Since the ACT keys are generated randomly and independently for each user, aligning two models is equivalent to guessing the relative transformation differences between them. The search space is determined by the combined transformations:

\begin{enumerate}
    \item \textbf{CP:} CP applies a random permutation $\pi : [C]\to[C]$. To align the channels of $\mathcal{M}^{(b)}$ with $\mathcal{M}^{(a)}$, the attacker must identify the correct permutation out of all possible mappings. The size of this search space is $|\mathcal{S}_{CP}| = C!$.
   
    \item \textbf{SF and SC:} For SF, the attacker must determine the relative sign differences for each channel. Since the sign flip vectors are composed of $2C$ independent binary elements (for the two convolution layers in PNM), there are $2^{2C}$ possible patterns. For SC,  the scaling factors are continuous values, hence, guessing the exact value is impossible for the attacker. Even if we relax this by quantizing the scaling factors into $Q$ levels ($Q>1$), the search space expands by a factor of $Q^{2C}$.
\end{enumerate}

Combining these factors, the minimum size of the hypothesis space that the attacker must explore is:
\begin{equation}
    |\mathcal{H}| \ge |\mathcal{S}_{CP}| \cdot |\mathcal{S}_{SF}| = C! \cdot 2^{2C}.
\end{equation}
Thus, the complexity is $\Omega(C! \cdot 2^{2C})$.
\hfill $\square$

\textbf{Remark.} For a typical configuration with $C=128$, the search space $|\mathcal{H}|$ is extremely large, rendering brute-force attacks computationally infeasible. Furthermore, verifying each guess requires a full forward pass to inspect the image quality, which introduces significant computational overhead. Consequently, ACT effectively prevents attackers from achieving the parameter alignment necessary for collusion.

\end{document}